\crefname{equation}{}{}
\Crefname{equation}{}{}
\crefname{definition}{\textbf{definition}}{definitions}
\Crefname{definition}{Definition}{Definitions}
\crefname{assumption}{\textbf{assumption}}{assumptions}
\Crefname{assumption}{Assumption}{Assumptions}
\definecolor{maroon}{RGB}{192,80,77}
\definecolor{mypink3}{cmyk}{0, 0.7808, 0.4429, 0.1412}
\newcommand{\maroon}[1]{\textcolor{maroon}{#1}}
\newtheorem{theorem}{Theorem}[section]
\newtheorem{lemma}[theorem]{Lemma}
\newtheorem{corollary}[theorem]{Corollary}
\newtheorem{definition}[theorem]{Definition}
\newtheorem{question}[theorem]{Question}
\newtheorem{remark}[theorem]{Remark}
\begin{document}

\title{Sample-Efficient Reinforcement Learning\\ with loglog(T) Switching Cost}
\author[1]{Dan Qiao}
\author[1,2]{Ming Yin}
\author[2]{Ming Min}
\author[1]{Yu-Xiang Wang}
\affil[1]{Department of Computer Science, UC Santa Barbara}
\affil[2]{Department of Statistics and Applied Probability, UC Santa Barbara}
\affil[ ]{\texttt{\{danqiao,ming\_yin,m\_min\}@ucsb.edu}, \;
	\texttt{yuxiangw@cs.ucsb.edu}}

\date{}

\maketitle

\begin{abstract}
 We study the problem of reinforcement learning (RL) with low (policy) switching cost — a problem well-motivated by real-life RL applications in which deployments of new policies are costly and the number of policy updates must be low. In this paper, we propose a new algorithm based on stage-wise exploration and adaptive policy elimination that achieves a regret of $\widetilde{O}(\sqrt{H^4S^2AT})$ while requiring a switching cost of $O(HSA \log\log T)$. This is an exponential improvement over the best-known switching cost $O(H^2SA\log T)$ among existing methods with $\widetilde{O}(\mathrm{poly}(H,S,A)\sqrt{T})$ regret. In the above, $S,A$ denotes the number of states and actions in an $H$-horizon episodic Markov Decision Process model with unknown transitions, and $T$ is the number of steps.
 As a byproduct of our new techniques, we also derive a reward-free exploration algorithm with a switching cost of $O(HSA)$.
 Furthermore, we prove a pair of information-theoretical lower bounds which say that 
 (1) Any no-regret algorithm must have a switching cost of $\Omega(HSA)$; (2) Any  $\widetilde{O}(\sqrt{T})$ regret algorithm must incur a switching cost of $\Omega(HSA\log\log T)$. Both our algorithms are thus optimal in their switching costs.
\end{abstract}

\newpage
\tableofcontents
\newpage


\section{Introduction}\label{sec:introduction}
In many real-world reinforcement learning (RL) tasks, it is costly to run fully adaptive algorithms that update the exploration policy frequently. Instead, collecting data in large batches using the current policy deployment is usually cheaper. 
For instance, in recommendation systems \citep{afsar2021reinforcement}, the system is able to collect millions of new data points in minutes, while the deployment of a new policy often takes weeks, as it involves significant extra cost and human effort. It is thus infeasible to change the policy after collecting every new data point as a typical RL algorithm would demand. A practical alternative is to schedule a large batch of experiments in parallel and only decide whether to change the policy after the whole batch is complete. Similar constraints arise in other RL applications such as those in healthcare \citep{yu2021reinforcement}, database optimization \citep{krishnan2018learning}, computer networking \citep{xu2018experience} and new material design \citep{raccuglia2016machine}.

In those scenarios, the agent needs to minimize the number of policy switching while maintaining (nearly) the same regret bounds as its fully-adaptive counterparts. On the empirical side, \citet{matsushima2020deployment} cast this problem via the notion \emph{deployment efficiency} and designed algorithms with high deployment efficiency for both online and offline tasks. On the theoretical side, \citet{bai2019provably} first brought up the definition of \emph{switching cost} that measures the number of policy updates. They designed $Q$ learning-based algorithm with regret of $\widetilde{O}^{*}(\sqrt{T})$ and switching cost of $O^{*}(\log T)$\footnote{Here $O^*(\cdot)$ and $\widetilde{O}^{*}$ omit a $\mathrm{poly}(H,S,A)$ terms, this will be a notation we use throughout.}. Later, \citet{zhang2020almost} improved both the regret bound and switching cost bound. However, the switching cost remains order $O^{*}(\log T)$. In addition, both algorithms need to monitor the data stream to decide whether the policy should be switched at each episode. In contrast, for an $A$-armed bandit problem, \citet{cesa2013online} created \emph{arm elimination} algorithm that achieves the optimal $\widetilde{O}(\sqrt{AT})$ regret
and a near constant switching cost bound of $O(A\log\log T)$. Meanwhile, the arm elimination algorithm predefined when to change policy before the algorithm starts, which could render parallel implementation. To adapt this feature from multi-armed bandit to RL problem, one straightforward way is to consider each deterministic policy ($A^{SH}$ policies in total) as an arm. Applying the same algorithm for the RL setting, one ends up with the switching cost to be $O(A^{SH}\log\log T)$ and the regret bound of order $O(\sqrt{A^{SH}T})$. Clearly, such an adaptation is far from satisfactory as the exponential dependence on $H,S$ makes the algorithm inefficient. 
This motivates us to consider the following question:
\begin{table*}[!t]\label{tab:comparison}
\centering
\resizebox{\linewidth}{!}{
\begin{tabular}{ |c|c|c| } 
\hline
\textit{Algorithms for regret minimization} & \textit{Regret}  & \textit{Switching cost} \\
\hline 
UCB2-Bernstein \citep{bai2019provably} & $\widetilde{O}(\sqrt{H^{3}SAT})$   & Local: $O(H^{3}SA\log T)$\\ 
UCB-Advantage \citep{zhang2020almost} & $\widetilde{O}(\sqrt{H^{2}SAT})$  & Local: $O(H^{2}SA\log T)$ \\ 
Algorithm 1 in \citep{gao2021provably} $^*$ & $\widetilde{O}(\sqrt{d^{3}H^{3}T})$  & Global: $O(dH\log T)$ \\  
\textcolor{blue}{APEVE (Our Algorithm~\ref{algo3})} & \textcolor{blue}{$\widetilde{O}(\sqrt{H^{4}S^{2}AT})$}  & \textcolor{blue}{Global: $O(HSA\log\log T)$} \\
\textcolor{blue}{Explore-First w. LARFE  (Our Algorithm \ref{algo4})} & \textcolor{blue}{$\widetilde{O}(T^{2/3}H^{4/3}S^{2/3}A^{1/3})$}  & \textcolor{blue}{Global: $O(HSA)$} \\
\hline
\textcolor{blue}{Lower bound (Our Theorem~\ref{thm:lower_bound})} & \textcolor{blue}{if $\widetilde{O}(\sqrt{T})$ (``Optimal regret'')} & \textcolor{blue}{Global: $\Omega(HSA\log\log T)$}    \\
\textcolor{blue}{Lower bound (Our Theorem~\ref{the2})} & \textcolor{blue}{if $o(T)$ (``No regret'')} & \textcolor{blue}{Global: $\Omega(HSA)$}    \\
\hline
\hline
\textit{Algorithms for reward-free exploration} & \textit{Sample (episode) complexity}  & \textit{Switching cost}  \\
\hline
Algorithm 2$\&$3 in \citep{jin2020reward} & $\widetilde{O}(\frac{H^{5}S^{2}A}{\epsilon^{2}})$  & Global: $\widetilde{O}(\frac{H^{7}S^{4}A}{\epsilon})^\ddagger$ \\  
RF-UCRL \citep{kaufmann2021adaptive} & $\widetilde{O}(\frac{H^{4}S^{2}A}{\epsilon^{2}})$  & Global: $\widetilde{O}(\frac{H^{4}S^{2}A}{\epsilon^{2}})$ \\ 
RF-Express \citep{menard2021fast} & $\widetilde{O}(\frac{H^{3}S^{2}A}{\epsilon^{2}})$  & Global: $\widetilde{O}(\frac{H^{3}S^{2}A}{\epsilon^{2}})$ \\ 
SSTP \citep{zhang2020nearly} & $\widetilde{O}(\frac{S^{2}A}{\epsilon^{2}})^{\star}$ & Global: $\widetilde{O}(SA\log(\frac{S^{2}A}{\epsilon^{2}}))^{\dagger}$ \\  
Algorithm 3$\&$4 in \citep{huang2022towards} & $\widetilde{O}(dH(\frac{d^{3c_{K}}H^{6c_{K}+1}}{\epsilon^{2c_{K}}})^{\frac{1}{c_{K}-1}})$ & Global: $c_{K}dH+1$ \\
\textcolor{blue}{LARFE (Our Algorithm~\ref{algo4})} & \textcolor{blue}{$\widetilde{O}(\frac{H^{5}S^{2}A}{\epsilon^{2}})$} & \textcolor{blue}{Global: $O(HSA)$} \\
\hline 
\end{tabular}
}
\caption{
Comparison of our results (in \textcolor{blue}{blue}) to existing work regarding problem type, regret/sample complexity, and switching cost. Note that some of the works are under linear MDP, where $d$ is the dimension of feature map. When the feature map is the canonical basis \citep{jin2020provably}, linear MDP recovers tabular MDP and $d=SA$. $*$:This result is generalized by \citet{wang2021provably}, whose algorithm has a same switching cost bound under this regret bound. $\ddagger$: In \citep{jin2020reward}, there are $\widetilde{O}(\frac{H^{7}S^{4}A}{\epsilon})$ episodes of data collected using EULER, which can lead to the same number of switching cost in the worst case.  $\star$: This result is derived under stationary MDP with total reward bounded by $1$. $\dagger$:We translate the use of trigger set in Algorithm 3  \citep{zhang2020nearly} to a worst case switching cost bound. }
\end{table*}

\vspace{-0.5em}
\begin{question}\label{ques1}
Is it possible to design an algorithm for online RL problem with $O^*(\log\log T)$
switching cost and $\widetilde{O}(\mathrm{poly}(H,S,A)\sqrt{T})$ regret bound
while it can decide when to change policy before the process starts?
\end{question}

\noindent\textbf{Our contributions.} In this paper, we 
answer the above question affirmatively by contributing the new low switching algorithm APEVE (Algorithm~\ref{algo3}). Furthermore, the framework of APEVE naturally adapts to the more challenging \emph{low switching reward-free} setting and we end up with LARFE (Algorithm~\ref{algo4}) as a byproduct. Our concrete contributions are summarized as follows. To the best of our knowledge, all of the results are the first of its kinds.




\begin{itemize}
\itemsep0em
\item A new policy elimination algorithm APEVE (Algorithm~\ref{algo3}) that achieves $O(HSA\log \log T)$ switching costs (Theorem~\ref{the1}). This provides an exponential improvement over the existing algorithms that require an $O^*(\log T)$ switching cost to achieve $\widetilde{O}^*(\sqrt{T})$ regret.\footnote{To be rigorous, we point out there are different notions for switching cost, \emph{e.g.} local switching cost \citep{zhang2020almost} and global switching cost (ours). However, we are the first to achieve $\log\log T$ switching cost with $\sqrt{T}$ regret, regardless of its type.}

\item A matching global switching cost lower bound of $\Omega(HSA\log\log T)$ for any algorithm with $\widetilde{O}^*(\sqrt{T})$ regret bound (Theorem~\ref{thm:lower_bound}). This certifies the policy switching of APEVE is near-optimal for sample-efficient RL. As a byproduct, we provide a global switching cost lower bound of $\Omega(HSA)$ for any no-regret algorithm (Theorem~\ref{the2}).  

\item We also propose a new low-adaptive algorithm LARFE for \emph{reward-free} exploration (Algorithm~\ref{algo4}). It comes with an optimal global switching cost of $O(HSA)$ for  deterministic policies (Theorem~\ref{the3}) and allows the identification of an $\epsilon$-optimal policy \emph{simultaneously} for all (unknown, possibly data-dependent) reward design.
\end{itemize}

\noindent\textbf{Why $\log\log T$ switching cost matters?} The improvement from $\log T$ to $\log\log T$ could make a big difference in practical applications. Take $T=1e5$ as an example,  $\log T \approx 11.5$ and $\log\log T \approx 2.4$. This represents a nearly 5x improvement in a reasonably-sized exploration dataset one can collect. 80\% savings in the required resources could distinguish between what is practical and what is not, and will certainly allow for more iterations. On the other hand, the total number of atoms in the observable universe $\approx 10^{82}$ and $\log\log 10^{82}\approx 5.24$. This reveals $\log\log T$ could be cast as \emph{constant} quantity in practice, since it is impossible to run $T>10^{82}$ steps for any experiment in real-world applications.


\noindent\textbf{Related work.} There is a large and growing body of literature on the statistical theory of reinforcement learning that we will not attempt to thoroughly review. Detailed comparisons with existing work on RL with low-switching cost \citep{bai2019provably,zhang2020almost,gao2021provably,wang2021provably} and reward-free exploration \citep{jin2020reward,kaufmann2021adaptive,menard2021fast,zhang2020nearly,huang2022towards} are given in Table~\ref{tab:comparison}. For a slightly more general context of this work, please refer to Appendix~\ref{appr} and the references therein. Notably, all existing algorithms with a $\widetilde{O}^{*}(\sqrt{T})$ regret incurs a switching cost of $O^{*}(\log T)$.  In terms of lower bounds, our $\Omega(HSA)$ lower bound is stronger than that of \citet{bai2019provably} as it operates on the \emph{global} switching cost rather than the \emph{local} switching cost. 

The only existing algorithm with $o(\log T)$ switching cost comes from the concurrent work of \citet{huang2022towards} who studied the problem of \emph{deployment-efficient} reinforcement learning under the linear MDP model, where they require a constant switching cost. 
\citet{huang2022towards} obtained only sample complexity bounds for pure exploration, which makes their result incompatible to our regret bounds. When compared with our results in the reward-free RL setting in the tabular setting (taking $d=SA$) their algorithm has a comparable $O(HSA)$ switching cost, but incurs a larger sample complexity in $H,S,A$ and $\epsilon$. 



Lastly, the low-switching cost setting is often confused with its cousin --- the low adaptivity setting \citep{perchet2016batched,gao2019batched} (also known as \emph{batched} RL\footnote{Note that this is different from Batch RL, which is synonymous to Offline RL.}). Low-adaptivity requires decisions about policy changes to be made at only a few (often predefined) checkpoints but does not constrain the number of policy changes. Low-adaptive algorithms often do have low-switching cost, but lower bounds on rounds of adaptivity do not imply lower bounds for our problem. We note that our algorithms are low-adaptive, because they schedule the batch sizes of each policy ahead of time and require no adaptivity during the batches. This feature makes our algorithm more practical relative to \citep{bai2019provably,zhang2020almost,gao2021provably,wang2021provably} which uses adaptive switching \citep[see, e.g.,][for a more elaborate discussion]{huang2022towards}. In Section~\ref{sec:result} we will revisit this problem and highlight the optimality of our algorithm in this alternative setting, as a byproduct.


\noindent\textbf{A remark on technical novelty.} 
The design of our algorithms involves substantial technical innovation over \citet{bai2019provably,zhang2020almost,gao2021provably,wang2021provably}.  The common idea behind these $O(\log T)$ switching cost algorithms is the doubling schedule of batches in updating the policies, which originates from the UCB2 algorithm \citep{auer2002finite} for bandits. The change from UCB to UCB2 is mild enough such that existing ``optimism''-based algorithms for strategic exploration designed without switching cost constraints can be adapted. In contrast, algorithms with $O(\log \log T)$ switching cost deviates from ``optimism'' even in bandits problem \citep{cesa2013online}, thus require fresh new ideas in solving exploration when extended to RL.  

The generalization of the arm elimination schedule for bandits \citep{cesa2013online} to RL is nontrivial because there is an exponentially large set of deterministic policies but we need a sample efficient algorithm with polynomial dependence on $H,S,A$ (also see the discussion before Question~\ref{ques1}). Part of our solution is inspired by the reward-free exploration approach \citep{jin2020reward}, which learns to visit each $(h,s,a)$ as much as possible by designing special rewards. However, this approach itself requires an exploration oracle, and no existing RL algorithms has $o(\log T)$ switching cost (otherwise our problem is solved). We address this problem by breaking up the exploration into stages and iteratively update a carefully constructed “absorbing MDP” that can be estimated with multiplicative error bounds. Finally, our lower bound construction is new and simple, as it essentially shows that tabular MDPs are as hard as multi-armed bandits with $\Omega(HSA)$ arms in terms of the switching cost.  These techniques might be of independent interests beyond the context of this paper.

\section{Problem Setup }\label{sec:formulation}

\noindent\textbf{Episodic reinforcement learning.} We consider finite-horizon episodic \emph{Markov Decision Processes} (MDP) with non-stationary transitions. The model is defined by a tuple $\mathcal{M}=\langle\mathcal{S}, \mathcal{A}, P, r, H, d_1\rangle$ \citep{sutton1998reinforcement}, where $\mathcal{S}\times\mathcal{A}$ is the discrete state-action space and  $S:=|\mathcal{S}|,A:=|\mathcal{A}|$ are finite. A non-stationary transition kernel has the form $P:\mathcal{S}\times\mathcal{A}\times\mathcal{S}\times [H] \mapsto [0, 1]$  with $P_{h}(s^{\prime}|s,a)$ representing the probability of transition from state $s$, action $a$ to next state $s'$ at time step $h$. In addition, $r$ is a known\footnote{This is due to the fact that the uncertainty of reward function is dominated by that of transition kernel in RL.} expected (immediate) reward function which satisfies $r_{h}(s,a)\in [0,1]$. $H$ is the length of the horizon and $d_1$ is the initial state distribution. In this work, we assume there is a fixed initial state $s_{1}$.\footnote{The generalized case where $d_{1}$ is an arbitrary distribution can be recovered from this setting by adding one layer to the MDP.} A policy can be seen as a series of mapping $\pi=(\pi_1,...,\pi_H)$, where each $\pi_h$ maps each state $s \in \mathcal{S}$ to a probability distribution over actions, \emph{i.e.} $\pi_h: \mathcal{S}\rightarrow \Delta(\mathcal{A})$ ,where $\Delta(\mathcal{A})$ is the set of probability distributions over the actions, $\forall\, h\in[H]$. A random trajectory $ (s_1, a_1, r_1, \ldots, s_H,a_H,r_H,s_{H+1})$ is generated by the following rule: $s_1$ is fixed, $a_h \sim \pi_h(\cdot|s_h), r_h = r(s_h, a_h), s_{h+1} \sim P (\cdot|s_h, a_h), \forall\, h \in [H]$. 

\noindent\textbf{$Q$-values, Bellman (optimality) equations.} Given a policy $\pi$ and any $h\in[H]$, the value function $V^\pi_h(\cdot)\in \mathbb{R}^{S}$ and Q-value function $Q^\pi_h(\cdot,\cdot)\in \mathbb{R}^{S\times A}$ are defined as:
$
V^\pi_h(s)=\mathbb{E}_\pi[\sum_{t=h}^H r_{t}|s_h=s] ,
Q^\pi_h(s,a)=\mathbb{E}_\pi[\sum_{t=h}^H  r_{t}|s_h,a_h=s,a],\;\;\forall s,a\in\mathcal{S},\mathcal{A}.
$
Then Bellman (optimality) equation follows $\forall\, h\in[H]$:
\begin{align*}
&Q^\pi_h(s,a)=r_{h}(s,a)+P_{h}(\cdot|s,a)V^\pi_{h+1},\;\;V^\pi_h=\mathbb{E}_{a\sim\pi_h}[Q^\pi_h]\\
&Q^\star_h(s,a)=r_{h}(s,a)+P_{h}(\cdot|s,a)V^\star_{h+1},\; V^\star_h=\max_a Q^\star_h(\cdot,a)
\end{align*}
In this work, we will consider different MDPs with respective transition kernels and reward functions. We define the value function for policy $\pi$ under MDP $(\widetilde{r},\widetilde{P})$ as below
$$V^{\pi}(\widetilde{r},\widetilde{P})=\mathbb{E}_{\pi}[\sum_{h=1}^{H}\widetilde{r}_{h}|\widetilde{P}].$$
Also, the notation $\mathbb{P}_{\pi}[\cdot |\widetilde{P}]$ means the conditional probability under policy $\pi$ and MDP $\widetilde{P}$, the notation $\mathbb{E}_{\pi}[\cdot |\widetilde{P}]$ means the conditional expectation under policy $\pi$ and MDP $\widetilde{P}$. 

\noindent\textbf{Regret.} We measure the performance of online reinforcement learning algorithms by the regret. The regret of an algorithm is defined as
$$\text{Regret}(K) := \sum_{k=1}^{K}[V_{1}^\star(s_{1})-V_{1}^{\pi_{k}}(s_{1})],$$
where $\pi_{k}$ is the policy it employs at episode $k$. Let $K$ be the number of episodes that the agent plan to play and total number of steps is $T := KH$.

\noindent\textbf{Switching cost.} We adopt the global switching cost \citep{bai2019provably}, which simply measures how many times the algorithm changes its policy:
$$N_{switch} := \sum_{k=1}^{K-1} \mathds{1}\{\pi_{k}\neq \pi_{k+1}\} .$$
Global switching costs are more natural than local switching costs \footnote{$N_{switch}^{local}=\sum_{k=1}^{K-1}|\{(h,s)\in[H]\times\mathcal{S}: \pi_{k}^{h}(s)\neq\pi_{k+1}^{h}(s)\}|$} as they measure the number of times a deployed policy (which could then run asynchronously in a distributed fashion for an extended period of time) can be changed. 
\citet{bai2019provably}’s bound on local switching cost is thus viewed by them as a conservative surrogate of the global counterpart.  Similar to \citet{bai2019provably}, our algorithm also uses deterministic policies only. 

\section{Algorithms and Explanation} \label{sec:discussion}


\begin{figure*}[t]
	\centering     
	\includegraphics[width=140mm]{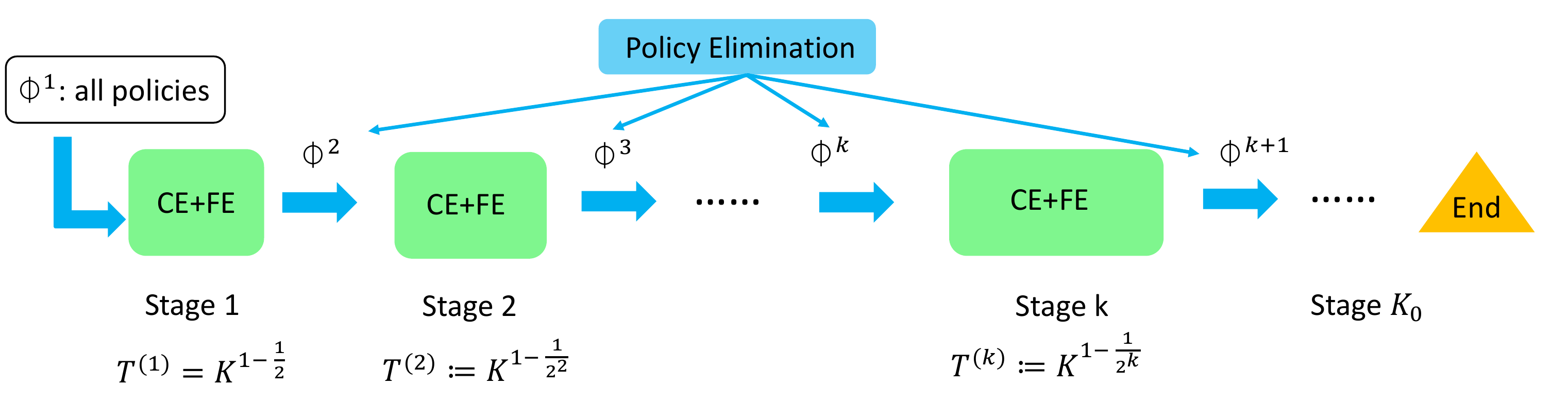}
	\caption{A visualization to explicate the procedures of APEVE (Algorithm~\ref{algo3}). In particular, the policy elimination procedures are conducted stage by stage, with increasing size $2T^{(k)}=2K^{1-\frac{1}{2^k}}$ at each stage. The Crude Exploration (CE) and Fine Exploration (FE) procedures within the stage apply Algorithm~\ref{algo1} and Algorithm~\ref{algo2}. APEVE (Algorithm~\ref{algo3}) is conducted to maintain a decreasing policy set $\phi^{k}$. The number of episodes satisfies $2\sum_{k=1}^{K_0}T^{(k)}=K$.}
	\label{fig:main}
\end{figure*}

\begin{algorithm}[tbh]
	\caption{Adaptive Policy Elimination by Value Estimation (APEVE)}\label{algo3}
	\begin{algorithmic}[1]
		\STATE \textbf{Require}: Number of episodes for exploration $K$, $r$ is the known deterministic reward. Universal constant $C$. Failure probability $\delta$.
		\STATE \textbf{Initialize}: $T^{(k)}=K^{1-\frac{1}{2^{k}}}$, $k\leq K_{0}=O(\log\log K)$, $\phi^{1}:=\{ \text{the set of all the deterministic policies}\}$, $\iota=\log(2HAK/\delta)$. 
		\FOR{$k=1,2,\cdots,K_{0}$}  
		\STATE \maroon{$\diamond$ Number of episodes in $k$-th stage:}
		\IF{$2(\sum_{i=1}^{k}T^{(i)})\geq K$} 
		\STATE $T^{(k)}=\frac{K-2(\sum_{i=1}^{k-1}T^{(i)})}{2}$. (o.w. $T^{(k)}=K^{1-\frac{1}{2^{k}}}$)
		\ENDIF
		\STATE \maroon{$\diamond$ Crude exploration using Algorithm~\ref{algo1}:} 
		\STATE $\mathcal{F}^{k}$,$P^{int,k}$ = Crude  Exploration$(\phi^{k},T^{(k)}).$
		\STATE \maroon{$\diamond$ Estimating $\widehat{P}^k$ using Algorithm~\ref{algo2}:}
		\STATE $\widehat{P}^{k}$ = Fine Exploration$(\mathcal{F}^{k},P^{int,k},T^{(k)},\phi^{k}).$
		\STATE \maroon{$\diamond$ Adaptive policy elimination from $\phi^k$:}
		\STATE $U^k=\emptyset$ 
		\FOR{$\pi\in\phi^{k}$}
		\IF{$V^{\pi}(r,\widehat{P}^{k})\leq sup_{\widehat{\pi}\in\phi^{k}}V^{\widehat{\pi}}(r,\widehat{P}^{k})-2C(\sqrt{\frac{H^{5}S^{2}A\iota}{T^{(k)}}}+\frac{S^{3}A^{2}H^{5}\iota}{T^{(k)}})$}
		\STATE Update $U^{k}\leftarrow U^k\cup \{\pi\}$.
		\ENDIF
		\ENDFOR
		\STATE $\phi^{k+1}\leftarrow\phi^{k}\backslash U^k$.
		\ENDFOR
	\end{algorithmic}
\end{algorithm}

\begin{algorithm}[tbh]
	\caption{Crude Exploration (for constructing infrequent tuples $\mathcal{F}$ and a reference transition $P^{int}$)}\label{algo1}
	\begin{algorithmic}[1]
		\STATE \textbf{Input}: Policy set $\phi$. Number of episodes $T$.
		\STATE \textbf{Initialize}: $T_{0}=\frac{T}{HSA}$, $C_{1}=6$, $\mathcal{F}=\emptyset$, $\mathcal{D}=\emptyset$, $\iota=\log(2HAK/\delta)$. $1_{h,s,a}$ is a reward function $r^{\prime}$ where $r^{\prime}_{h^{\prime}}(s^{\prime},a^{\prime})=\mathds{1}[(h^{\prime},s^{\prime},a^{\prime})=(h,s,a)]$. $s^{\dagger}$ is an additional absorbing state. $P^{int}$ is a transition kernel over the extended space $\mathcal{S}\cup\{s^\dagger\}\times\mathcal{A}$, initialized \emph{arbitrarily}.
		\STATE \textbf{Output}: Infrequent tuples $\mathcal{F}$. Intermediate transition kernel $P^{int}$.
		\FOR{$h=1,2,\cdots,H$}
		\STATE \maroon{$\diamond$ Construct and run policies to visit each state-action:}
		\FOR{$(s,a)\in \mathcal{S}\times \mathcal{A}$}
		\STATE $\pi_{h,s,a}=\mathrm{argmax}_{\pi\in \phi}V^{\pi}(1_{h,s,a},P^{int}).$
		\STATE Run $\pi_{h,s,a}$ for $T_{0}$ episodes, and add the trajectories into data set $\mathcal{D}$. 
		\ENDFOR
		\FOR{$(s,a,s')\in\mathcal{S}\times\mathcal{A}\times\mathcal{S}$}
		\STATE $N_{h}(s,a,s^{\prime})=\text{count of}\ (h,s,a,s^{\prime})$ in $\mathcal{D}$.
		\ENDFOR
		\STATE \maroon{$\diamond$ Use $\mathcal{F}$ to store the infrequent tuples:}
		\STATE $\mathcal{F}=\mathcal{F}\cup \{(h,s,a,s^{\prime})|N_{h}(s,a,s^{\prime})\leq C_{1}H^{2}\iota\}.$
        \STATE \maroon{$\diamond$ Update the intermediate transition kernel using Algorithm~\ref{algo_transition_kernel}:}
		\STATE $P^{int} = \text{EstimateTransition}(\mathcal{D}, \mathcal{F}, s^{\dagger},h,P^{int})$
		\STATE \maroon{$\diamond$ Clear the data set:}
 		\STATE Reset data set $\mathcal{D}=\emptyset$.
		\ENDFOR
		\STATE \textbf{Return}: $\{\mathcal{F},P^{int}\}$.
	\end{algorithmic}
\end{algorithm}

\begin{algorithm}[tbh]
	\caption{Fine Exploration (further exploration for accurate transition estimation)}\label{algo2}
	\begin{algorithmic}[1]
		\STATE \textbf{Input}: Infrequent tuples $\mathcal{F}$. Intermediate transition kernel $P^{int}$. Number of episodes $T$. Policy set $\phi$.
		\STATE \textbf{Initialize}: $T_{0}=\frac{T}{HSA}$, $\mathcal{D}=\emptyset$. $1_{h,s,a}$ is a reward function $r^{\prime}$ where $r^{\prime}_{h^{\prime}}(s^{\prime},a^{\prime})=\mathds{1}[(h^{\prime},s^{\prime},a^{\prime})=(h,s,a)]$. $\widehat{P}=P^{int}$.
		\STATE \textbf{Output}: Empirical estimate $\widehat{P}$.
		\STATE \maroon{$\diamond$ Construct and run policies to visit each state-action:}
		\FOR{$(h,s,a)\in[H]\times \mathcal{S}\times \mathcal{A}$}
		\STATE $\pi_{h,s,a}=\mathrm{argmax}_{\pi\in \phi}V^{\pi}(1_{h,s,a},P^{int}).$
		\STATE Run $\pi_{h,s,a}$ for $T_{0}$ episodes, and add the trajectories into data set $\mathcal{D}$.
		\ENDFOR
		\STATE \maroon{$\diamond$ Construct an empirical estimate for $\widetilde{P}$ using Algorithm~\ref{algo_transition_kernel}:}
		\FOR{$h\in [H]$}
		\STATE $\widehat{P} = \mathrm{EstimateTransition}(\mathcal{D}, \mathcal{F}, s^{\dagger},h,\widehat{P}).$
		\ENDFOR
		\STATE \textbf{Return} $\widehat{P}$.
	\end{algorithmic}
\end{algorithm}
Our algorithm generalizes the arm-elimination algorithm of \citet{cesa2013online} for bandits to a policy-elimination algorithm for RL. The high-level idea of our policy elimination algorithm is the following. We maintain a \emph{version space} $\phi$ of remaining policies and iteratively refine the estimated values of all policies in $\phi$ while using these values to eliminate those policies that are \emph{certifiably} suboptimal. The hope is that towards the end of the algorithms, all policies that are not eliminated are already nearly optimal.

As we explained earlier, the challenge is to estimate the value function of all $A^{HS}$ policies using $\mathrm{poly}(H,S,A)$ samples. 
This uniform convergence problem typically involves estimating the transition kernels, but it requires solving an exploration problem to even visit a particular state-action pair once.  In addition, some states cannot be visited frequently by any policy. To address these issues, we need to construct 
a surrogate MDP (known as an ``absorbing MDP'') with an absorbing state. This absorbing MDP replaces these troublesome states with an absorbing state $s^\dagger$, such that all remaining states can be visited sufficiently frequently by some policy in $\phi$. Moreover, its value function uniformly approximates the original MDP for all policies of interest. This reduces the problem to estimating the transition kernel of the absorbing MDP.

\noindent\textbf{Adaptive policy elimination.} 
The overall workflow of our algorithm --- Adaptive Policy Elimination by Value Estimation (APEVE) --- is given in Algorithm~\ref{algo3} and illustrated graphically in Figure~\ref{fig:main}. It first divides a budget of $K$ episodes into a sequence of stages with increasing length $2T^{(k)}:= 2K^{1-1/(2^k)}$ for $k=1,2,3,...$. By Lemma~\ref{lem50}, the total number of stages $K_{0}=O(\log\log K)$.
 
Each stage involves three steps. 
\begin{description}
\itemsep0em
\item[Step 1. Crude exploration] Explore each $h,s,a$ layer-by-layer \emph{from scratch} based on the current version-space $\phi$. Construct an absorbing MDP $\widetilde{P}$ and a \emph{crude} intermediate estimate ($P^{int}$) of $\widetilde{P}$.  
\item[Step 2. Fine exploration] Explore each $h,s,a$ with the crude estimate of the absorbing MDP. Construct a more refined estimate ($\widehat{P}$) of the absorbing MDP's $\widetilde{P}$. 
\item[Step 3. Policy elimination] Evaluate all policies in $\phi$ using $\widehat{P}$. Update the version-space $\phi$ by eliminating all policies whose value upper confidence bound (UCB) is smaller than the \emph{mode} (max over $\pi\in \phi$) of the lower confidence bound (LCB). 
\end{description}

As the algorithm proceeds, under the high probability event that our confidence bounds are valid, the optimal policy will not be eliminated. After each stage, the performance of all policies that remain will be better than the LCB of the optimal policy, which itself will get closer to the actual valuation function as we collect more data.

Next, we break down the key components in ``Crude Exploration'' and ``Fine Exploration'' and explain how they work.

\noindent\textbf{Layerwise ``Exploration'' in Algorithm~\ref{algo1}.} Our goal is to learn an accurate enough\footnote{$\frac{1}{H}$-multiplicatively accurate, details in Definition~\ref{def3}} estimate of $P_{h}(s^{\prime}|s,a)$ for any tuple $(h,s,a,s^{\prime})$ and it suffices to visit this tuple $O(H^{2}\iota)$ times.\footnote{Detailed proof in Lemma~\ref{lem6}} Therefore, we try to visit each tuple as much as possible using policies from the input policy set $\phi$. However, it is possible that some $(h,s,a,s^{\prime})$ tuples are hard to visit by any policy in the remaining policy set. To address this problem, we use a set $\mathcal{F}$ to store all the tuples that have not been visited enough times such that for the tuples not in $\mathcal{F}$, we can get an accurate enough estimate, while for the tuples in $\mathcal{F}$, we will prove that they have little influence on the value function. 

In the algorithm, we apply the trick of layerwise exploration. During the exploration of the $h$-th layer, we use the intermediate MDP $P^{int}$ to construct $\pi_{h,s,a}$ that can visit $(h,s,a)$ with the largest probability under $P^{int}$. Then we run each $\pi_{h,s,a}$ for the same number of episodes. Using the data set $\mathcal{D}$  we collect, the $h$-th layer of $P^{int}$ (i.e. $P^{int}_h$) is updated using Algorithm~\ref{algo_transition_kernel}. 

Given infrequent tuples $\mathcal{F}$, the absorbing MDP is constructed as in Definition~\ref{def2}. In the construction, we first let $\widetilde{P}=P$, then for $(h,s,a,s^{\prime})\in\mathcal{F}$, we move the probability of $\widetilde{P}_{h}(s^{\prime}|s,a)$ to $\widetilde{P}_{h}(s^{\dagger}|s,a)$.
\begin{definition}[The absorbing MDP $\widetilde{P}$]\label{def2}
    Given $\mathcal{F}$ and $P$, $\forall (h,s,a,s^{\prime}) \notin \mathcal{F}$, let
	$\widetilde{P}_{h}(s^{\prime}|s,a)=P_{h}(s^{\prime}|s,a).$
	For any $(h,s,a,s^{\prime})\in \mathcal{F}$, $\widetilde{P}_{h}(s^{\prime}|s,a)=0.$ For any $(h,s,a)\in[H]\times\mathcal{S}\times\mathcal{A}$, define $\widetilde{P}_{h}(s^{\dagger}|s^{\dagger},a)=1$ and \[ \widetilde{P}_{h}(s^{\dagger}|s,a)=1-\sum_{s^{\prime}\in\mathcal{S}:(h,s,a,s^{\prime})\notin \mathcal{F}}\widetilde{P}_{h}(s^{\prime}|s,a).\]
\end{definition}
\vspace{-1em}
According to the construction in Algorithm~\ref{algo_transition_kernel}, $P^{int}$ is the empirical estimate of $\widetilde{P}$. We will show that with high probability, for $(h,s,a,s^{\prime})\in [H]\times\mathcal{S}\times\mathcal{A}\times\mathcal{S}$, either  $(1-\frac{1}{H})P^{int}_{h}(s^{\prime}|s,a)\leq \widetilde{P}_{h}(s^{\prime}|s,a) \leq (1+\frac{1}{H})P^{int}_{h}(s^{\prime}|s,a)$  or $P^{int}_{h}(s^{\prime}|s,a)=\widetilde{P}_{h}(s^{\prime}|s,a)=0$. Based on this property, we can prove that $\pi_{h,s,a}$'s are efficient in exploration. 
Algorithm~\ref{algo_transition_kernel} and detailed explanation are deferred to Appendix~\ref{appb1}.

In Algorithm~\ref{algo1}, the reward function $1_{h,s,a}$ is defined under the original MDP while $P^{int}$ is a transition kernel of the absorbing MDP. In addition, $\pi_{h,s,a}$ is a policy under the absorbing MDP and we need to run it under the original MDP. The transition between the original MDP and the absorbing MDP is deferred to Appendix~\ref{appb2}.

\noindent\textbf{Fine exploration by Algorithm~\ref{algo2}.} The idea behind Algorithm~\ref{algo2} is that with high probability, we can use $P^{int}$ to construct policies to visit each tuple $(h,s,a)$ with the guarantee that $\sup_{a\in\mathcal{A},\pi\in\phi}\frac{V^{\pi}(1_{h,s,a},\widetilde{P})}{\mu_{h}(s,a)}\leq 12HSA$, where $\mu$ is the distribution of our data. This inequality is similar to the result of Theorem 3.3 in \citep{jin2020reward}, 
which means that we can get a similar result to Lemma 3.6 in \citep{jin2020reward} that $V^{\pi}(r^{\prime},\widehat{P})$ is an accurate estimate of $V^{\pi}(r^{\prime},\widetilde{P})$ simultaneously for all $\pi\in\phi$ and any reward function $r^{\prime}$. 

For each $(h,s,a)$, the algorithm finds the policy $\pi_{h,s,a}$ from $\phi$ that visits $(h,s,a)$ with the largest probability under $P^{int}$. Then each $\pi_{h,s,a}$ is run for the same number of episodes over all $(h,s,a)$. At last, $\widehat{P}$ is calculated as an empirical estimate of $\widetilde{P}$ by using Algorithm~\ref{algo_transition_kernel} and the data set $\mathcal{D}$.

\section{Main Results of APEVE} \label{sec:result}

In this section, we will state our main results, which formalizes the algorithmic ideas we explained in the previous section. 
\begin{theorem}[Regret and switching cost of Algorithm~\ref{algo3}]\label{the1}
	With probability $1-\delta$, Algorithm~\ref{algo3} will have regret bounded by $O(\sqrt{H^{5}S^{2}AK\cdot \log(\frac{2}{\delta}HAK) }\cdot \log\log K+S^{3}A^{2}H^{5}K^{\frac{1}{4}}\cdot \log(\frac{2}{\delta}HAK))=\widetilde{O}(\sqrt{H^{4}S^{2}AT})$. Furthermore, the global switching cost of Algorithm~\ref{algo3} is $O(HSA\log\log T)$ while the timestep for policy switching can be decided before the algorithm starts.
\end{theorem}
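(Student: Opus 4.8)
The plan is to handle the two claims separately: the switching cost is a bookkeeping count of how many distinct policies are deployed (each in a pre-scheduled block of episodes), while the regret rests on a version-space argument. For the latter I would define a high-probability event on which, at every stage $k$, the per-stage transition estimate $\widehat P^k$ delivers valid value confidence bounds for all surviving policies; the elimination rule then keeps $\pi^\star$ in the version space $\phi^k$ and forces the remaining policies to be progressively near-optimal, so that summing the per-stage regret over the $K_0=O(\log\log K)$ stages gives the claimed bound.

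\emph{Good event and uniform value estimation.} Let $\mathcal{E}$ be the intersection, over all $K_0$ stages, of the success events of Crude Exploration (Algorithm~\ref{algo1}) and Fine Exploration (Algorithm~\ref{algo2}); with $\iota=\log(2HAK/\delta)$ a union bound gives $\P(\mathcal{E})\ge1-\delta$. Fix stage $k$: its $2T^{(k)}$ episodes are split evenly between the two phases, so each of the $HSA$ exploration policies runs for $T_0=T^{(k)}/(HSA)$ episodes. On $\mathcal{E}$ I would establish, in order: (i) by the Crude Exploration analysis (Algorithm~\ref{algo1}/Algorithm~\ref{algo_transition_kernel}; cf.\ Lemma~\ref{lem6}), $P^{\mathrm{int},k}$ is $\tfrac1H$-multiplicatively accurate (Definition~\ref{def3}) for the absorbing MDP $\widetilde P^k$ of Definition~\ref{def2} built from $\mathcal{F}^k$ --- via a layerwise induction in $h$ in which the maximizer $\pi_{h,s,a}=\argmax_{\pi\in\phi^k}V^\pi(1_{h,s,a},P^{\mathrm{int}})$ visits $(h,s,a)$ at least as often as any policy in $\phi^k$ (up to the $(1\pm\tfrac1H)$ slack already secured on layers $<h$), so each tuple outside $\mathcal{F}^k$ accrues $\gtrsim H^2\iota$ samples that a Bernstein bound promotes to multiplicative accuracy; (ii) $\widetilde P^k$ approximates the true MDP uniformly over $\pi\in\phi^k$, namely $V^\pi(r,\widetilde P^k)\le V_1^\pi(s_1)$ (mass only moves to the zero-reward state $s^\dagger$) and $V_1^\pi(s_1)-V^\pi(r,\widetilde P^k)\le O(S^3A^2H^5\iota/T^{(k)})$ (a simulation lemma, using that each of the $O(HS^2A)$ tuples in $\mathcal{F}^k$ is visited with tiny probability by \emph{every} $\pi\in\phi^k$ --- again by the max-visitation property and the accuracy of $P^{\mathrm{int}}$); (iii) by the Fine Exploration analysis, multiplicative accuracy of $P^{\mathrm{int},k}$ yields the coverage bound $\sup_{a,\pi\in\phi^k}V^\pi(1_{h,s,a},\widetilde P^k)/\mu_h(s,a)\le 12HSA$ (the analogue of Theorem~3.3 of \citet{jin2020reward}), and then an analogue of their Lemma~3.6 gives $|V^\pi(r',\widehat P^k)-V^\pi(r',\widetilde P^k)|\le O(\sqrt{H^5S^2A\iota/T^{(k)}}+S^3A^2H^5\iota/T^{(k)})$ for all $\pi\in\phi^k$ and all rewards $r'$. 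Combining (ii) and (iii), on $\mathcal{E}$,
\[
\bigl|V^\pi(r,\widehat P^k)-V_1^\pi(s_1)\bigr|\ \le\ \xi_k:=C\Bigl(\sqrt{H^5S^2A\iota/T^{(k)}}+S^3A^2H^5\iota/T^{(k)}\Bigr)\qquad\text{for all }\pi\in\phi^k,
\]
where $C$ is the universal constant of Algorithm~\ref{algo3}, whose elimination threshold is therefore exactly $2\xi_k$.

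\emph{Elimination invariant, regret, and switching cost.} An easy induction using the display shows $\pi^\star\in\phi^k$ for every $k$: applying it to $\pi^\star$ and to each $\hat\pi\in\phi^k$, together with $V_1^{\pi^\star}(s_1)\ge\sup_{\hat\pi\in\phi^k}V_1^{\hat\pi}(s_1)$, gives $V^{\pi^\star}(r,\widehat P^k)\ge\sup_{\hat\pi\in\phi^k}V^{\hat\pi}(r,\widehat P^k)-2\xi_k$, so $\pi^\star$ is never eliminated. Conversely any $\pi\in\phi^{k+1}$ satisfies $V_1^\pi(s_1)\ge V^\pi(r,\widehat P^k)-\xi_k>\sup_{\hat\pi}V^{\hat\pi}(r,\widehat P^k)-3\xi_k\ge V^{\pi^\star}(r,\widehat P^k)-3\xi_k\ge V_1^{\pi^\star}(s_1)-4\xi_k$, i.e.\ suboptimality at most $4\xi_k$. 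Since all $2T^{(k)}$ episodes of stage $k$ run policies drawn from $\phi^k$, whose gap is $\le4\xi_{k-1}$ for $k\ge2$ and $\le H$ for $k=1$,
\[
\mathrm{Regret}(K)\ \le\ 2HT^{(1)}+\sum_{k=2}^{K_0}8\,T^{(k)}\xi_{k-1}.
\]
The schedule $T^{(k)}=K^{1-1/2^k}$ is chosen so that $T^{(k)}/\sqrt{T^{(k-1)}}=K^{1/2}$ and $T^{(k)}/T^{(k-1)}=K^{1/2^k}\le K^{1/4}$ for $k\ge2$, making each summand $O(\sqrt{H^5S^2A\iota K}+S^3A^2H^5\iota K^{1/2^k})$; truncating the final stage only shrinks $T^{(k)}$. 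Summing over the $K_0=O(\log\log K)$ stages (Lemma~\ref{lem50}), with $\sum_{k\ge2}K^{1/2^k}=O(K^{1/4})$, gives $\mathrm{Regret}(K)=O(\sqrt{H^5S^2AK\iota}\log\log K+S^3A^2H^5K^{1/4}\iota)=\widetilde O(\sqrt{H^4S^2AT})$ after substituting $T=KH$, with failure probability $\le\delta$ by the union bound above. For the switching cost, each stage deploys $HSA$ policies in Crude Exploration and $HSA$ in Fine Exploration, each held fixed for a whole block of $T_0$ episodes, so $O(HSA)$ switches per stage and $O(HSA\,K_0)=O(HSA\log\log T)$ in total; and since the block lengths $T^{(k)}$ and $T_0=T^{(k)}/(HSA)$ depend on $K$ alone, the timesteps at which a switch may occur are fixed before the run --- only which policy gets deployed is data-dependent.

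\emph{Main obstacle.} The crux is the uniform value-estimation step: obtaining an error bound that holds \emph{simultaneously} for all of the exponentially many policies in $\phi^k$ from only $\mathrm{poly}(H,S,A)$ episodes per stage. This is what forces the crude/fine two-phase scheme and the absorbing-MDP surrogate, and the two delicate points are (i) showing that $\tfrac1H$-multiplicative accuracy really does propagate through the layerwise induction despite $\mathcal{F}^k$ altering the reachable set, and (ii) showing that $\mathcal{F}^k$ --- defined through the \emph{empirical} visitation counts of the exploration policies --- has negligible effect on $V^\pi$ for \emph{every} $\pi\in\phi^k$, which is precisely where the argmax-visitation construction of $\pi_{h,s,a}$ and its reward-free-exploration lineage \citep{jin2020reward} are essential. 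Once that is in place, the elimination invariant, the regret summation, and the switching count are routine.
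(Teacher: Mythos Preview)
Your proposal is correct and follows essentially the same route as the paper: the switching-cost count via $2HSA$ switches per stage over $K_0=O(\log\log K)$ stages, and the regret via the bias/variance decomposition $|V^\pi(r,P)-V^\pi(r,\widetilde P^k)|$ (Lemma~\ref{lem10}) plus $|V^\pi(r,\widetilde P^k)-V^\pi(r,\widehat P^k)|$ (Lemma~\ref{lem14}), followed by the elimination invariant (Lemmas~\ref{lem17}--\ref{lem18}) and the stagewise regret sum (Lemma~\ref{lem19}). The only minor slip is that you attach the lower-order term $S^3A^2H^5\iota/T^{(k)}$ to step~(iii) rather than step~(ii) --- in the paper the ``variance'' piece (Lemma~\ref{lem14}) contributes only the $\sqrt{H^5S^2A\iota/T}$ term --- but this is harmless since $\xi_k$ already carries both.
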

Recall that the number of episodes $K = T/H$ where $T$ is the number of steps. This theorem says that Algorithm~\ref{algo3} obtains a regret bound that is optimal in $T$ while changing (deterministic) policies for only $O(HSA\log\log T)$ times.

The proof of Theorem~\ref{the1} is sketched in Section~\ref{sec:info_lower} with pointers to more detailed arguments to the full proof in the appendix. Now we discuss a few interesting aspects of the result.

\noindent\textbf{Near optimal switching cost.} Our algorithm achieves a switching cost that improves over existing work with $\sqrt{T}$ regret. We also prove the following information-theoretic limit which says that the global switching cost of APEVE (Algorithm \ref{algo3}) is optimal up to constant.

\begin{theorem}[Lower bound for global switching cost under optimal regret bound]\label{thm:lower_bound}
	If $S\leq A^{\frac{H}{2}}$, for any algorithm with near-optimal $\widetilde{O}^*(\sqrt{T})$ regret bound, the global switching cost is at least $\Omega(HSA\log\log T)$.
\end{theorem}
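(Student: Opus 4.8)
The plan is to reduce the problem to a collection of parallel multi-armed bandit instances and then invoke the known $\Omega(\log\log T)$ switching lower bound for bandits under $\sqrt{T}$ regret (the matching lower bound that accompanies the arm-elimination upper bound of \citet{cesa2013online}). Concretely, I would build a family of hard MDP instances in which, for each layer $h\in[H]$ and each state $s$, the agent faces a fresh $A$-armed bandit: the transitions are designed so that there is a ``decision gadget'' at $(h,s)$ whose $A$ actions lead to sub-branches with slightly different mean rewards, and the gadgets for distinct $(h,s)$ pairs are reachable independently (e.g. via a tree/product structure, which is why the hypothesis $S\le A^{H/2}$ is needed — it guarantees enough states to host $\Theta(HS)$ disjoint gadgets while keeping the branching consistent with $A$ actions). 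Since the reward gaps in different gadgets are independent, regret decomposes additively across the $\Theta(HS)$ bandit instances, and a policy switch in the MDP corresponds to changing the arm in at most one gadget but the agent must eventually ``resolve'' every gadget.

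The key steps, in order: (1) formalize the embedding of $N=\Theta(HS)$ independent $A$-armed bandit instances into one $H$-horizon, $S$-state, $A$-action MDP, verifying that the state/action budget suffices under $S\le A^{H/2}$ and that a global policy switch touches only one coordinate of the joint arm-configuration; (2) show a regret decomposition: $\mathrm{Regret}_{\mathrm{MDP}}(K) \ge c\sum_{i=1}^N \mathrm{Regret}_{\mathrm{bandit}_i}$, so a $\widetilde O^*(\sqrt T)$-regret MDP algorithm induces, on each of the $N$ instances, an algorithm with $\widetilde O(\sqrt{T/N}\cdot\mathrm{poly})$ regret — still $\widetilde O^*(\sqrt{T_i})$ where $T_i\asymp T/N$ is that instance's horizon; (3) apply the bandit switching lower bound of \citet{cesa2013online} to conclude each induced bandit algorithm must incur $\Omega(A\log\log T_i)=\Omega(A\log\log T)$ switches on that instance (using $T_i = \Theta(T/HS)$, so $\log\log T_i = \Theta(\log\log T)$ as long as $T$ is at least polynomial in $H,S,A$); (4) sum over the $N=\Theta(HS)$ instances, noting that a single MDP switch is counted in at most one instance, to get total global switching cost $\Omega(HSA\log\log T)$.

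The main obstacle I expect is Step (3): transferring the bandit lower bound through the embedding requires that the induced per-instance algorithm genuinely behaves like a bandit algorithm with the right regret scaling, and crucially that switches are \emph{not shared} — i.e. one MDP policy change cannot simultaneously make progress on many gadgets. This is what forces the careful gadget construction: the gadgets must be laid out so that the policy's restriction to each gadget is independently choosable and a change of the deployed deterministic policy alters the arm in only $O(1)$ gadgets (otherwise the $\Omega(HS)$ factor collapses). One clean way to enforce this is to have each episode's trajectory pass through exactly one gadget (chosen by an initial uniform randomization over which of the $\Theta(HS)$ gadgets is ``active'' that episode), so that the algorithm is effectively time-sharing $N$ bandit algorithms and the per-instance switching and regret bookkeeping is exact; then the lower bound for each instance is immediate and the union over instances is lossless. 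I would also need a short argument (standard, via the per-instance regret being $\widetilde O^*(\sqrt{T_i})$) that the bandit lower bound's hypothesis is met with the appropriate gap choice $\Delta_i \asymp \sqrt{N/T}$, after which Steps (1), (2), (4) are routine accounting.
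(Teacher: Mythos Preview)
Your proposal takes a different route from the paper, and the difference is exactly where the gap lies. The paper does \emph{not} embed $\Theta(HS)$ parallel $A$-armed bandits; it constructs a single MDP that behaves as \emph{one} multi-armed bandit with $\Omega(HSA)$ arms. Concretely: the first $H_0\le H/2$ layers form a deterministic tree that routes to each of the $S-1$ non-absorbing states at layer $H_0+1$; thereafter, from state $s$ at layer $h$, one action stays at $s$ (reward $0$) and the remaining $A-1$ actions jump to an absorbing state while collecting an unknown reward $r_h(s,a)$. Each deterministic policy therefore collects exactly one of these unknown rewards, there are at least $(S-1)(A-1)(H-H_0)\ge\Omega(HSA)$ distinct ones, and a global policy switch is \emph{literally} one arm switch. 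Invoking the $K$-armed low-switching lower bound of \citet{simchi2019phase} with $K=\Theta(HSA)$ then yields $\Omega(HSA\log\log T)$ directly. (The hypothesis $S\le A^{H/2}$ is used only to guarantee that the routing tree fits in at most $H/2$ layers, leaving $\Theta(H)$ layers for the ``stay/leave'' choices.)

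Your parallel-gadget reduction breaks at precisely the step you flag, and the proposed fix does not repair it. A deterministic MDP policy specifies an action at every $(h,s)$, hence an arm in every gadget, so a single global switch can change the arm in all $N=\Theta(HS)$ gadgets simultaneously. Randomizing which gadget is active in a given episode controls where regret is incurred, but it does nothing to limit how many gadget-arm assignments change when the deployed policy switches: you still only obtain $m_i\le M$ for every instance $i$, so the per-instance bandit lower bound $m_i\ge\Omega(A\log\log T_i)$ gives nothing beyond $M\ge\Omega(A\log\log T)$. In effect your argument would establish a \emph{local} switching lower bound (where a policy change altering $N$ coordinates is charged $N$), but the theorem concerns the strictly smaller \emph{global} cost. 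There is also a secondary issue: a total regret of $\widetilde O^*(\sqrt{T})$ does not imply per-instance regret $\widetilde O^*(\sqrt{T_i})$ for each $i$, so even the hypothesis of the per-instance bandit lower bound is not verified. The clean resolution is the paper's: collapse the $N$ parallel $A$-armed bandits into a single $\Theta(NA)$-armed bandit so that the policy-to-arm correspondence is a bijection rather than one-to-many.
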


As a byproduct, our proof technique naturally leads to the following lower bound for global switching cost for any algorithm with no regret.

\begin{theorem}[Lower bound for global switching cost under sub-linear regret bound]\label{the2}
	If $S\leq A^{\frac{H}{2}}$, for any algorithm with sub-linear regret bound, the global switching cost is at least $\Omega(HSA)$.
\end{theorem}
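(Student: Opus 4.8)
The plan is to embed a multi-armed bandit with $N=\Theta(HSA)$ arms into a tabular MDP with $O(S)$ states, $A$ actions and horizon $H$, in such a way that each ``arm'' is a distinct \emph{deterministic} policy and that no other policy reveals anything about which arm is planted; then a change-of-measure / pigeonhole argument --- which amounts to re-proving inside this reduction the folklore fact that a sub-linear-regret $N$-armed bandit algorithm must change arms $\Omega(N)$ times --- delivers the bound. As in the paper's own algorithm and in \citet{bai2019provably}, I take the learner to deploy deterministic policies. First I would construct the instances, then run the counting argument on a reference instance, then flip one planted transition to force $\Omega(T)$ regret and contradict sub-linearity.

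\emph{Construction.} Put $h_0:=\lceil\log_A S\rceil$, so $h_0\le H/2$ by the hypothesis $S\le A^{H/2}$. Layers $1,\dots,h_0$ form a ``navigation'' gadget rooted at the fixed start $s_1$ whose length-$h_0$ action sequences surject onto the $S$ states of layer $h_0+1$ (possible exactly because $A^{h_0}\ge S$). For layers $h_0+1,\dots,H$ and every state $s\in\cS$, action $1$ is a ``no-op'' self-loop while actions $2,\dots,A$ are ``commit'' actions that, in the base instance, all lead to an absorbing junk state $s^{\dagger}$; there is one extra absorbing goal state $s^{\heartsuit}$, and the fixed (known) reward equals $1$ exactly at $s^{\heartsuit}$ and $0$ elsewhere. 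For each triple $\tau=(h,s,a)$ with $h_0<h<H$, $s\in\cS$, $a\in\{2,\dots,A\}$ --- there are $\Theta(HSA)$ of them; call this set $\cT$ --- let $\cM_\tau$ be the base MDP with the single change that commit action $a$ at $(h,s)$ now leads to $s^{\heartsuit}$. One then checks three easy properties: (a) for $h>h_0$ the only policies reaching $(h,s)$ are those that navigate to $s$ and then no-op; (b) commit actions are absorbing, so a deterministic policy commits at most once per episode, whence the sets $\Pi^\tau:=\{\pi\ \mathrm{deterministic}:\pi\ \mathrm{reaches}\ (h,s),\ \pi_h(s)=a\}$, $\tau\in\cT$, are pairwise disjoint and nonempty; (c) on $\cM_\tau$ only the policies in $\Pi^\tau$ have positive value (namely $H-h\ge 1$), and $\cM_\tau$ and the base MDP induce identical trajectories under every policy \emph{outside} $\Pi^\tau$, since the altered transition is the unique edge into $s^{\heartsuit}$.

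\emph{Counting and change of measure.} Run the learner on the base MDP and set $q_\tau:=\Pr_{\mathrm{base}}[\text{some}\ \pi\in\Pi^\tau\ \text{is ever deployed}]$. Disjointness of the $\Pi^\tau$ makes the number of $\tau$ with a deployed $\Pi^\tau$-policy at most the number of \emph{distinct} deployed policies, which is at most $N_{\mathrm{switch}}+1$; hence $\sum_{\tau\in\cT}q_\tau\le\E_{\mathrm{base}}[N_{\mathrm{switch}}]+1$. If $\E_{\mathrm{base}}[N_{\mathrm{switch}}]<|\cT|/2-1$, some $\tau^\star$ has $q_{\tau^\star}<1/2$; by (c) the runs on the base MDP and on $\cM_{\tau^\star}$ couple until a $\Pi^{\tau^\star}$-policy is first deployed, so on $\cM_{\tau^\star}$ that event still has probability $q_{\tau^\star}<1/2$, and on its complement all $K$ episodes have value $0$ while $V_1^\star(s_1)\ge 1$, giving $\mathrm{Regret}(K)\ge K=T/H$. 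Thus $\E_{\cM_{\tau^\star}}[\mathrm{Regret}(K)]\ge T/(2H)$, which is linear in $T$ and contradicts any sub-linear regret guarantee on the class. Hence $\E_{\mathrm{base}}[N_{\mathrm{switch}}]\ge|\cT|/2-1=\Omega(HSA)$ on a legitimate instance.

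\emph{Main obstacle.} The delicate point is to make all three properties hold at once: every $(h,s)$ must be reachable --- this is exactly what the condition $S\le A^{H/2}$ and the depth-$\lceil\log_A S\rceil$ navigation gadget buy --- while, crucially, \emph{each deterministic policy certifies at most one of the $\Theta(HSA)$ planted triples}, which is what the ``commit-then-absorb'' structure enforces and what lifts the bound from the $\Omega(SA)$ of a naive per-state embedding up to $\Omega(HSA)$; and not certifying a triple must genuinely cost $\Omega(T)$ regret. Squeezing out the extra factor $H$ this way is the heart of the argument; verifying (a)--(c) and the state/action budgets is routine. I would also expect that Theorem~\ref{thm:lower_bound} does not follow from a single such reduction --- it should require a multi-scale version, nesting $\Theta(\log\log T)$ levels of planted bandits with geometrically shrinking gaps, so that a $\widetilde{O}(\sqrt{T})$-regret learner is forced to re-certify the planted structure $\Omega(\log\log T)$ times.
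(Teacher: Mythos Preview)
Your proposal is correct and is essentially the paper's own proof: both construct a depth-$\lceil\log_A S\rceil$ navigation tree followed by ``self-loop or commit'' layers, so that each of $\Theta(HSA)$ commit triples $(h,s,a)$ behaves like a bandit arm, and then argue that any sub-linear-regret learner must try $\Omega(HSA)$ distinct arms and hence switch $\Omega(HSA)$ times. The only cosmetic differences are that the paper plants the unknown quantity in the \emph{reward} $r_h(s,a)$ of each commit action (rather than in a transition to a goal state $s^\heartsuit$, as you do --- your variant has the mild advantage of keeping the reward known, matching the modeling assumption of Section~\ref{sec:formulation}), and the paper factors the bandit step out as a separate lemma on $K$-armed bandits (Lemma~\ref{lemma:mab}), whereas you run the same counting/coupling argument inline on the MDP.

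One small correction to your closing remark about Theorem~\ref{thm:lower_bound}: the paper does \emph{not} build a multi-scale MDP. It uses the identical single reduction and simply invokes the existing $K$-armed-bandit switching lower bound of \citet{simchi2019phase} (restated as Lemma~\ref{lem:loglog}), which already contains the multi-scale argument with geometrically shrinking gaps internally; the MDP layer of the reduction is untouched.
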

Both proofs are deferred to Appendix~\ref{appd}.  Theorem~\ref{the2} is a stronger conclusion than the existing $\Omega(HSA)$ lower bound for \emph{local} switching cost \citep{bai2019provably} because global switching cost is smaller than local switching cost. An $\Omega(HSA)$ lower bound on local switching cost can only imply an $\Omega(A)$ lower bound on global switching cost.


\noindent\textbf{Near-optimal adaptivity.} Interestingly, our algorithm also enjoys low-adaptivity besides low-switching cost in the batched RL setting \citep{perchet2016batched,gao2019batched}, because the length of each batch can be determined ahead of time and we do not require monitoring within each batch. APEVE (Algorithm~\ref{algo3}) runs with $O(H\log\log T)$ batches.  In Appendix~\ref{appi}, we present APEVE+ (Algorithm~\ref{algo6}), which further improves the batch complexity to $O(H+\log\log T)$ while maintaining the same regret $\widetilde{O}(\sqrt{H^{4}S^{2}AT})$. These results nearly matches the existing lower bound $\Omega(\frac{H}{\log T}+\log\log T)$ due to Theorem B.3 in \citep{huang2022towards} (for the $\frac{H}{\log T}$ term) and Corollary 2 in \citep{gao2019batched} (for the $\log\log T$ term).


\noindent\textbf{Dependence on $H,S,A$ in the regret.}  As we explained earlier our regret bound is optimal in $T$. However, there is a gap of  $\sqrt{H^2S}$ when compared to the information-theoretic limit of $\Omega(\sqrt{H^2SAT})$ that covers all algorithms (including those without switching cost constraints).  We believe our analysis is tight and further improvements on $H,S$ will require new algorithmic ideas.  It is an intriguing open problem whether any algorithm with $\log\log T$ switching cost need to have $\Omega(S^2)$ dependence.

\textbf{Computational efficiency.} One weakness of our APEVE (Algorithm \ref{algo3}) is that it is not computationally efficient. APEVE needs to explicitly go over each element of the \emph{version spaces} --- the sets of remaining policies --- to implement policy elimination. It remains an interesting open problem to design a polynomial-time algorithm for RL with optimal switching cost. A promising direction to achieve computational efficiency is to avoid explicitly representing the version spaces, or to reduce to ``optimization oracles''.
We leave a full exploration of these ideas to a future work.

\section{Low Adaptive Reward-Free Exploration}


\begin{algorithm}
	\caption{Low Adaptive Reward-Free Exploration (LARFE)}\label{algo4}
	\begin{algorithmic}[1]
		\STATE \textbf{Input}: Episodes for crude exploration $N_{0}$, episodes for fine exploration $N$. Failure probability $\delta$.
		\STATE \textbf{Initialize}: $\phi^{1}= \{\text{the set of all deterministic policies}\}$, $\iota=\log(2HA(N_{0}+N)/\delta)$. 
		\STATE \textbf{Output}: $\widehat{\pi}^{r}$ for any reward function $r$.
		\STATE \maroon{$\diamond$ Crude exploration using  Algorithm~\ref{algo1}:}
		\STATE $\mathcal{F}$,$P^{int}$ = $\text{Crude Exploration}(\phi^{1},N_{0})$.
		\STATE \maroon{$\diamond$ Estimate $\widehat{P}$ using  Algorithm~\ref{algo2}:}
		\STATE $\widehat{P}$ = $\text{Fine Exploration}(\mathcal{F},P^{int},N,\phi^{1}).$
		\STATE \maroon{$\diamond$ For any reward function, output the optimal policy under empirical MDP by value iteration:}
		\STATE $\widehat{\pi}^{r}=\mathrm{argmax}_{\pi\in\phi^{1}}V^{\pi}(r,\widehat{P})$ for any $r$.
		\STATE \textbf{Return} $\{\widehat{\pi}^{r}\}$.
	\end{algorithmic}
\end{algorithm}

In this section, we further consider the new setting of \emph{low adaptive reward-free exploration}. Specifically, due to its nature that Crude Exploration (Algorithm~\ref{algo1}) and Fine Exploration (Algorithm~\ref{algo2}) do not use any information about the reward function $r$, these two algorithms can be leveraged in reward-free setting. LARFE (Algorithm~\ref{algo4}) is an algorithm that tackles reward-free exploration while maintaining the low switching cost at the same time.

In LARFE, we use Crude Exploration (Algorithm~\ref{algo1}) to construct the infrequent tuples $\mathcal{F}$ and the intermediate MDP $P^{int}$. Then the algorithm uses Fine Exploration (Algorithm~\ref{algo2}) to get an empirical estimate $\widehat{P}$ of the absorbing MDP $\widetilde{P}$. At last, for any reward function $r$, the algorithm outputs the optimal policy under the empirical MDP, which can be done efficiently by value iteration.

Theorem~\ref{the3} provides the switching cost and sample complexity of Algorithm~\ref{algo4} (whose proof is deferred to Appendix~\ref{appe}).
\begin{theorem}\label{the3}
	The global switching cost of Algorithm~\ref{algo4} is bounded by $2HSA$. There exists a constant $c>0$ such that, for any $\epsilon >0$ and any $\delta>0$, if the number of total episodes $K$ satisfies that $$K>c\cdot (\frac{H^{5}S^{2}A\cdot \iota^{\prime}}{\epsilon^{2}}+\frac{S^{3}AH^{5}\cdot \iota^{\prime}}{\epsilon}),$$ where $\iota^{\prime}=\log(\frac{HSA}{\epsilon\delta})$, then there exists a choice of $N_{0}$ and $N$ such that $N_{0}+N=K$ and with probability $1-\delta$, for any reward function $r$, Algorithm~\ref{algo4} will output a policy $\widehat{\pi}^{r}$ that is $\epsilon$-optimal.
\end{theorem}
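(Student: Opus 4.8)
The switching-cost claim is structural, so I would dispatch it first. Algorithm~\ref{algo4} plays episodes only inside its single call to Crude Exploration (Algorithm~\ref{algo1}) and its single call to Fine Exploration (Algorithm~\ref{algo2}). The former deploys exactly one deterministic policy $\pi_{h,s,a}$ for each of the $HSA$ triples $(h,s,a)$, and so does the latter; the concluding line $\widehat\pi^{r}=\argmax_{\pi\in\phi^{1}}V^{\pi}(r,\widehat P)$ is pure computation and plays no episode. Hence at most $2HSA-1\le 2HSA$ policy changes occur, and because the batch lengths $N_{0}/(HSA)$ and $N/(HSA)$ are fixed up front, the switch times are known before the run begins.

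For correctness, I would fix an arbitrary reward $r$ with $r_{h}(s,a)\in[0,1]$, let $\pi^{\star}_{r}\in\argmax_{\pi}V^{\pi}(r,P)$ (which may be taken deterministic, so $\pi^{\star}_{r}\in\phi^{1}$), and let $\widetilde P$ denote the absorbing MDP of Definition~\ref{def2} associated with the (random) infrequent set $\mathcal F$ returned by Crude Exploration. Using that $\widehat\pi^{r}$ maximizes $V^{\cdot}(r,\widehat P)$ over $\phi^{1}$, the decomposition
\[
V^{\pi^{\star}_{r}}(r,P)-V^{\widehat\pi^{r}}(r,P)\;\le\;\sum_{\pi\in\{\pi^{\star}_{r},\,\widehat\pi^{r}\}}\Big(\big|V^{\pi}(r,P)-V^{\pi}(r,\widetilde P)\big|+\big|V^{\pi}(r,\widetilde P)-V^{\pi}(r,\widehat P)\big|\Big)
\]
reduces the problem to controlling the two error types uniformly over $\pi\in\phi^{1}$ and over all $[0,1]$-rewards $r$. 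Both bounds are precisely the ingredients already assembled for Theorem~\ref{the1}. The ``$P$ versus $\widetilde P$'' term is small because $\mathcal F$ collects exactly the tuples that no policy in $\phi^{1}$ can traverse often, so redirecting their transition mass to $s^{\dagger}$ perturbs every policy's value under every bounded reward by only $\widetilde{O}(\mathrm{poly}(H,S,A)/N_{0})$. The ``$\widetilde P$ versus $\widehat P$'' term is handled through the reward-free pipeline: on a high-probability event the $\tfrac1H$-multiplicative accuracy of $P^{int}$ from Crude Exploration (Definition~\ref{def3} and Lemma~\ref{lem6}, valid once $N_{0}$ exceeds a fixed polynomial in $H,S,A$ times $\iota$) implies the coverage bound $\sup_{a,\pi\in\phi^{1}}V^{\pi}(1_{h,s,a},\widetilde P)/\mu_{h}(s,a)\le 12HSA$ for the visitation distribution $\mu$ of Fine Exploration --- the analogue of Theorem~3.3 of \citep{jin2020reward} --- and plugging this into the simulation-lemma argument (the analogue of Lemma~3.6 of \citep{jin2020reward}) yields $\big|V^{\pi}(r,\widetilde P)-V^{\pi}(r,\widehat P)\big|=\widetilde{O}\big(\sqrt{H^{5}S^{2}A\,\iota/N}+S^{3}AH^{5}\,\iota/N\big)$ simultaneously for every $\pi\in\phi^{1}$ and every $[0,1]$-reward $r$.

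To finish, I would pick $N=\Theta\big(H^{5}S^{2}A\,\iota'/\epsilon^{2}+S^{3}AH^{5}\,\iota'/\epsilon\big)$ so the second error is $\le\epsilon/3$, and $N_{0}$ equal to a sufficiently large polynomial in $H,S,A$ divided by $\epsilon$ (plus a $\mathrm{poly}(H,S,A)\cdot\iota$ term needed for the Crude-Exploration precondition) so that both the precondition holds and the first error is $\le\epsilon/3$; this $N_{0}$ is of strictly lower order in $1/\epsilon$ than $N$, hence $N_{0}+N\le K$ can be met whenever $K>c\big(H^{5}S^{2}A\,\iota'/\epsilon^{2}+S^{3}AH^{5}\,\iota'/\epsilon\big)$ for a suitable constant $c$. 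A union bound over the two good events (failure $\le\delta$) then gives $V^{\pi^{\star}_{r}}(r,P)-V^{\widehat\pi^{r}}(r,P)\le\epsilon$ \emph{for every} reward $r$ at once, as claimed.

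I expect the main obstacle, and the reason this is not a verbatim corollary of \citep{jin2020reward}, to be that the MDP $\widetilde P$ being estimated is itself data-dependent --- through $\mathcal F$ and through the intermediate kernel $P^{int}$ that is used to build the Fine-Exploration policies. Making the ``coverage $\Rightarrow$ uniform accuracy'' chain go through requires: (i) establishing the accuracy of $P^{int}$ in \emph{multiplicative} rather than additive form, by a layerwise induction on $h$, so errors compound benignly across the $H$ layers; (ii) showing that on the good event the greedy policies $\pi_{h,s,a}$ built from $P^{int}$ are near-optimal for visiting $(h,s,a)$ under $\widetilde P$, which is what produces the absolute constant $12HSA$; and (iii) arranging all the resulting concentration statements to hold uniformly over the exponentially large $\phi^{1}$ and over all reward functions at once --- uniformity over $r$ is essentially free once $\widehat P$ is accurate tuple-by-tuple, but uniformity over $\phi^{1}$ forces the simulation bound to be driven by per-tuple transition accuracy rather than any per-policy quantity. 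Aligning these dependencies, rather than the final arithmetic, is where the real work lies.
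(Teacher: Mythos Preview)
Your proposal is essentially correct and follows the same route as the paper's proof in Appendix~\ref{appe}: bound the switching cost structurally via Lemmas~\ref{lem1} and~\ref{lem11}, split the value error through $\widetilde P$ into a bias piece (Lemma~\ref{lemthird} applied with $\phi=\phi^{1}$, giving $O(S^{3}AH^{5}\iota/N_{0})$) and a variance piece (Lemma~\ref{lemforth}, giving $O(\sqrt{H^{5}S^{2}A\iota/N})$), then choose $N_{0}$ and $N$ accordingly and finish with the argmax argument. Two minor corrections: (i) the lower-order $S^{3}AH^{5}\iota$ term belongs entirely to the $P$-versus-$\widetilde P$ bias and scales with $1/N_{0}$, not with $1/N$ --- Lemma~\ref{lemforth} produces only the square-root term --- so your stated bound for $|V^{\pi}(r,\widetilde P)-V^{\pi}(r,\widehat P)|$ is loose (harmlessly) and the $S^{3}AH^{5}\iota'/\epsilon$ in the final budget really comes from $N_{0}$; (ii) the $\tfrac1H$-multiplicative accuracy of $P^{int}$ (Lemma~\ref{lem6}) holds with high probability for any $N_{0}$, because infrequently visited tuples are routed to $\mathcal F$ where both $\widetilde P$ and $P^{int}$ vanish --- there is no threshold on $N_{0}$ for this step, and the proof is a direct application of empirical Bernstein rather than a layerwise induction.
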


\textbf{Take-away of Theorem~\ref{the3}.} First of all, one key feature of LARFE is that the global switching cost is always bounded by $O(HSA)$ and this holds true for any $K$ (\emph{i.e.} independent of the PAC guarantee). Second, as a comparison to \citet{jin2020reward} regarding reward-free exploration, their episodes needed is $O(\frac{H^{5}S^{2}A\iota^{\prime}}{\epsilon^{2}}+\frac{S^{4}AH^{7}(\iota^{\prime})^{3}}{\epsilon})$. Our result matches this in the main term and does better in the lower order term. In addition, our algorithm achieves near optimal switching cost while the use of EULER in \citet{jin2020reward} can have switching cost equal to the number of episodes $N_{0}$. This means that Crude Exploration (Algorithm~\ref{algo1}) is efficient in the sense of sample complexity and switching cost when doing exploration. 

\textbf{Explore-First with LARFE.} Given the number of episodes $K$ and the corresponding number of steps $T=KH$, we can apply LARFE (Algorithm \ref{algo4}) for the first $K_0$ episodes, then run the greedy policy $\widehat{\pi}^{r}$ returned by LARFE (with $r$ to be the real reward) for the remaining episodes. Then the regret is bounded by $HK_0+\widetilde{O}(K\cdot\sqrt{\frac{H^5S^2A}{K_0}})$ with high probability. By selecting $K_0=K^{\frac{2}{3}}HS^{\frac{2}{3}}A^{\frac{1}{3}}$, the regret can be bounded by $\widetilde{O}(K^{\frac{2}{3}}H^2S^{\frac{2}{3}}A^{\frac{1}{3}})=\widetilde{O}(T^{\frac{2}{3}}H^{\frac{4}{3}}S^{\frac{2}{3}}A^{\frac{1}{3}})$, as shown in Table \ref{tab:comparison}. We highlight that Explore-First w. LARFE matches the lower bound given by Theorem \ref{the2}.

\textbf{Optimal switching cost in Pure Exploration.}
 Since any best policy identification (i.e., Pure Exploration) algorithm with polynomial sample complexity  can be used to construct a no-regret learning algorithm with an Explore-First strategy, Theorem \ref{the2} implies that $\Omega(HSA)$ is a switching cost lower bound for the pure exploration problem too, thus also covering the task-agnostic / reward-free extensions. LARFE implies that one can achieve nearly optimal sample complexity ($\widetilde{O}^*(1/\epsilon^2)$) while achieving the best possible switching cost of $O(HSA)$.  
 
 \textbf{Separation of Regret Minimization and Pure Exploration in RL.} Note that achieving a near-optimal $\widetilde{O}^*(\sqrt{T})$ regret requires an additional factor of $\log\log T$ in the switching cost (Theorem~\ref{thm:lower_bound}). This provides an interesting separation of the hardness between low-adaptive regret minimization and low-adaptive pure exploration in RL.


\section{Proof Overview}\label{sec:info_lower}

Due to the space constraint, we could only sketch the proof of Theorem~\ref{the1} as the $\log\log T$ switching cost is our major contribution. The analysis involves two main parts: the switching cost bound and the regret bound. The switching cost bound directly results from the schedule of Algorithm~\ref{algo3}.

\noindent\textbf{Upper bound for switching cost.}
First of all, we have the conclusion that the global switching cost of Algorithm~\ref{algo3} is bounded by $O(HSA\log\log T)$. This is because the global switching cost of both Algorithm~\ref{algo1} and Algorithm~\ref{algo2} are bounded by $HSA$ and the fact that the number of stages satisfy $K_{0}=O(\log\log T)$.

However, such an elimination schedule requires the algorithm to run the same deterministic policy for a long period of time before being able to switch to another policy, which is the main technical challenge to the regret analysis.

\noindent\textbf{Regret analysis.} At the heart of the regret analysis is to construct a uniform off-policy evaluation bound that covers all remaining deterministic policies.
The remaining policy set at the beginning of stage $k$ is $\phi^{k}$. Assume we can estimate all $V^{\pi}(r,P)$ ($\pi\in\phi^{k}$) to $\epsilon_{k}$ accuracy with high probability, then we can eliminate all policies that are at least $2\epsilon_{k}$ sub-optimal in the sense of estimated value function. Therefore, the optimal policy will not be eliminated and all the policies remaining will be at most $4\epsilon_{k}$ sub-optimal with high probability. Summing up the regret of all stages, we have with high probability, the total regret is bounded by
\vspace{-1em}
\begin{equation}\label{equr}
\text{Regret}(K)\leq 2HT^{(1)}+\sum_{k=2}^{K_{0}}2T^{(k)}\times 4\epsilon_{k-1}.
\end{equation}
The following lemma gives an bound of $\epsilon_{k-1}$ using the model-based plug-in estimator with our estimate $\widehat{P}$ of the absorbing MDP.
\begin{lemma}\label{lem16}
	There exists a constant $C$, such that with probability $1-\delta$, it holds that for any $k$ and $\pi\in\phi^{k}$,
	$$|V^{\pi}(r,\widehat{P}^{k})-V^{\pi}(r,P)|\leq C(\sqrt{\frac{H^{5}S^{2}A\iota}{T^{(k)}}}+\frac{S^{3}A^{2}H^{5}\iota}{T^{(k)}}).$$
\end{lemma}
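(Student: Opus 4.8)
\textbf{Proof plan for Lemma~\ref{lem16}.}

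The plan is to decompose the error $|V^{\pi}(r,\widehat{P}^{k})-V^{\pi}(r,P)|$ through the absorbing MDP $\widetilde{P}^{k}$ (Definition~\ref{def2}) as an intermediate object, writing
\begin{equation*}
|V^{\pi}(r,\widehat{P}^{k})-V^{\pi}(r,P)|\leq \underbrace{|V^{\pi}(r,\widehat{P}^{k})-V^{\pi}(r,\widetilde{P}^{k})|}_{\text{(i) estimation error}}+\underbrace{|V^{\pi}(r,\widetilde{P}^{k})-V^{\pi}(r,P)|}_{\text{(ii) absorbing error}}.
\end{equation*}
For term (ii), I would invoke the guarantee from Crude Exploration that every infrequent tuple in $\mathcal{F}^{k}$ is visited at most $O(H^{2}\iota)$ times by the exploration policies, together with the efficiency guarantee $\sup_{a,\pi\in\phi^{k}}V^{\pi}(1_{h,s,a},\widetilde{P}^{k})/\mu_{h}(s,a)\leq 12HSA$ alluded to in the discussion of Algorithm~\ref{algo2}. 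The point is that under any $\pi\in\phi^{k}$, the total probability mass that the original MDP $P$ places on transitions into $\mathcal{F}^{k}$ (which the absorbing MDP redirects to $s^{\dagger}$) is small; summing the per-layer leaked mass over the horizon and multiplying by the reward scale $H$ gives a bound that contributes to the lower-order $S^{3}A^{2}H^{5}\iota/T^{(k)}$ term. I would make this rigorous via a telescoping/performance-difference argument over layers $h=1,\dots,H$, bounding at each layer the occupancy $\sum_{(s,a):(h,s,a,s')\in\mathcal{F}^k}d^{\pi}_h(s,a)$ using the visitation upper bound and the $T_0=T^{(k)}/(HSA)$ episode count.

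For term (i), the estimation error, I would run a standard model-based simulation-lemma (performance difference) argument: $V^{\pi}(r,\widehat{P}^{k})-V^{\pi}(r,\widetilde{P}^{k})=\sum_{h=1}^{H}\E_{\pi,\widetilde{P}^{k}}\big[(\widehat{P}^{k}_h-\widetilde{P}^{k}_h)(\cdot\mid s_h,a_h)\cdot V^{\pi}_{h+1}(r,\widehat{P}^{k})\big]$. The key input is the multiplicative accuracy property stated in the text after Definition~\ref{def2} (and formalized in Definition~\ref{def3}, Lemma~\ref{lem6}): for tuples not in $\mathcal{F}^k$, $\widehat{P}^k_h(s'|s,a)$ is within a $(1\pm 1/H)$ multiplicative factor of $\widetilde{P}^k_h(s'|s,a)$, and equals it when the truth is zero. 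Combined with the fact that each relevant $(h,s,a)$ is visited $\Omega(T^{(k)}/(H^2 SA))$ times by the Fine Exploration schedule, a Bernstein-type concentration (as in Lemma~3.6 of \citet{jin2020reward}) on $(\widehat{P}^k_h-\widetilde{P}^k_h)V^{\pi}_{h+1}$ — using $\mathrm{Var}$ of the value function and the $12HSA$ coverage ratio to convert "expectation under $\pi$" into "expectation under the data distribution $\mu$" — yields the leading $\sqrt{H^5S^2A\iota/T^{(k)}}$ term, with the variance-crossing/lower-order remainder absorbed into the $S^3A^2H^5\iota/T^{(k)}$ term. A union bound over the (exponentially many but still countable) $\pi\in\phi^k$ and over the $K_0=O(\log\log K)$ stages is harmless because $\iota=\log(2HAK/\delta)$ already carries the logarithmic price and the concentration events can be stated per $(h,s,a,s')$ tuple rather than per policy — this is the whole reason the absorbing-MDP reduction was introduced.

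The main obstacle I anticipate is term (i): making the off-policy uniform convergence over \emph{all} deterministic policies in $\phi^k$ rigorous with only $\mathrm{poly}(H,S,A)$ samples. The subtlety is that $V^{\pi}_{h+1}(r,\widehat{P}^k)$ appearing inside the expectation depends on $\widehat{P}^k$, so one cannot directly apply a fixed-function concentration bound; the standard fix is to first establish the multiplicative transition accuracy (which is policy-independent, hence uniform for free), then bound $\|V^{\pi}(r,\widehat P^k)-V^{\pi}(r,\widetilde P^k)\|$ by a recursive/self-bounding inequality in $H$ that closes because the multiplicative error is $1/H$. The second delicate point is correctly tracking the coverage ratio $\sup_{a,\pi}V^{\pi}(1_{h,s,a},\widetilde P^k)/\mu_h(s,a)\le 12HSA$ through the change of measure so that the $S^2$ (rather than $S$) and the extra $H$ factors land exactly where the theorem statement claims; I would lean on the analogue of Theorem~3.3 in \citet{jin2020reward} here, adapted to the absorbing MDP and to the fact that $P^{int}$ (not $\widetilde P^k$) is what the policies are optimized against — the multiplicative closeness of $P^{int}$ to $\widetilde P^k$ is what bridges this gap.
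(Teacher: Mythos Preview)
Your decomposition through the absorbing MDP $\widetilde{P}^{k}$ and the identification of the ``bias'' term (ii) and ``variance'' term (i) match the paper exactly, and your treatment of term (ii) is essentially what the paper does via Lemma~\ref{lem9} and Lemma~\ref{lem10}.

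The gap is in your handling of term (i). You write that ``for tuples not in $\mathcal{F}^k$, $\widehat{P}^k_h(s'|s,a)$ is within a $(1\pm 1/H)$ multiplicative factor of $\widetilde{P}^k_h(s'|s,a)$'', citing Lemma~\ref{lem6}. But Lemma~\ref{lem6} establishes this for $P^{int}$, \emph{not} for $\widehat{P}^k$. The intermediate kernel $P^{int}$ is built from Crude Exploration data, where the threshold defining $\mathcal{F}^k$ guarantees $\Omega(H^2\iota)$ visits to every retained tuple; the estimator $\widehat{P}^k$ is built from a \emph{fresh} Fine Exploration dataset for which no such per-tuple visitation lower bound is available (some $(h,s,a)$ may have $\mu_h(s,a)$ arbitrarily small). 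So the multiplicative accuracy of $\widehat{P}^k$ is neither claimed nor true in general, and the ``recursive/self-bounding inequality that closes because the multiplicative error is $1/H$'' cannot be carried out.

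The paper resolves the data-dependence of $\widehat{V}^{\pi}_{h+1}$ differently: after the simulation lemma and the change of measure via the $12HSA$ coverage ratio (which is where the multiplicative accuracy of $P^{int}$ is actually used, to certify that the Fine-Exploration policies are near-optimal explorers under $\widetilde{P}^k$), it takes a supremum over \emph{all} functions $G:\mathcal{S}\cup\{s^\dagger\}\to[0,H]$ and all maps $\nu:\mathcal{S}\to\mathcal{A}$, and then applies an $\epsilon$-net argument (Lemma~\ref{lem13}, i.e.\ Lemma~C.2 of \citet{jin2020reward}) whose union bound is over $(H/\epsilon)^{3S}A^{S}$ grid points rather than over policies or tuples. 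This is also why your closing remark about ``stating the concentration events per $(h,s,a,s')$ tuple'' is not quite the mechanism: a per-tuple Bernstein bound would still leave $G=\widehat{V}^{\pi}_{h+1}$ random, whereas the covering argument decouples $G$ from the data.
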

The proof of Lemma~\ref{lem16} involves controlling both the ``bias'' and ``variance'' part of the estimate. The ``bias'' refers to the difference between the true MDP and the absorbing MDP, and the ``variance'' refers to the statistical error in estimating the surrogate value functions of the absorbing MDP using our estimate 
$\widehat{P}^{k}$.

From the proof of \citep{jin2020reward}, we know that if we can visit each $(h,s,a)$ frequently enough, which means the visitation probability is maximal up to a constant factor, then the empirical transition kernel is enough for a uniform approximation to $V^{\pi}(r,P)$. The absorbing MDP $\widetilde{P}$ is the key to guarantee the condition of frequent visitation.

For the ease of illustration, in the following discussion, we omit the stage number $k$ and the discussion holds true for all $k$. Besides, in all of the following lemmas in this section, ``with high probability'' means with probability at least $1-\delta$ and $\iota = \log(2HAK/\delta)$.

\subsection{The ``bias'': difference between $P$ and $\widetilde{P}$}
To analyze the difference between the true MDP $P$ and the absorbing MDP $\widetilde{P}$, we first sketch some properties of the intermediate transition kernel $P^{int}$.

\noindent\textbf{Accuracy of $P^{int}$.}
It holds that if the visitation number of a tuple $(h,s,a,s^{\prime})$ is larger than $O(H^{2}\iota)$, with high probability\footnote{Proof using empirical Bernstein's inequality in Lemma~\ref{lem6}}, 
{\small
\begin{align}\label{equ0}(1-\frac{1}{H})P^{int}_{h}(s^{\prime}|s,a)\leq \widetilde{P}_{h}(s^{\prime}|s,a) \leq (1+\frac{1}{H})P^{int}_{h}(s^{\prime}|s,a).
\end{align}
}According to the definition of $\mathcal{F}$ in Algorithm~\ref{algo1} and the construction of $\widetilde{P}$, $P^{int}$, we have Equation~\eqref{equ0} is true for any $(h,s,a,s^{\prime})\in [H]\times\mathcal{S}\times\mathcal{A}\times\mathcal{S}$. Then we have\footnote{Proof using multiplicative bound in Lemma~\ref{lem2}} for any $(h,s,a)\in [H]\times\mathcal{S}\times\mathcal{A}$, $\pi\in\phi$, $$\frac{1}{4}V^{\pi}(1_{h,s,a},P^{int}) \leq V^{\pi}(1_{h,s,a},\widetilde{P}) \leq 3V^{\pi}(1_{h,s,a},P^{int}).$$ Because $\pi_{h,s,a}=\mathrm{argmax}_{\pi\in \phi}V^{\pi}(1_{h,s,a},P^{int})$,
\begin{equation}\label{equc} V^{\pi_{h,s,a}}(1_{h,s,a},\widetilde{P})
\geq\frac{1}{12}\sup_{\pi\in \phi}V^{\pi}(1_{h,s,a},\widetilde{P}),
\end{equation}
which shows that $\pi_{h,s,a}$ is efficient in visiting the tuple $(h,s,a)$.

\noindent\textbf{Uniform bound on $|V^{\pi}(r^{\prime},P) - V^{\pi}(r^{\prime},\widetilde{P})|$.}
Now we are ready to bound $\sup_{\pi\in\phi}\sup_{r^{\prime}}|V^{\pi}(r^{\prime},P)-V^{\pi}(r^{\prime},\widetilde{P})|$ by bounding $\sup_{\pi\in\phi}\mathbb{P}_{\pi}[\mathcal{B}]$, where the bad event $\mathcal{B}$ is defined as\footnote{The detailed definition can be found in Definition~\ref{def4}} the event where a trajectory visits some tuple in $\mathcal{F}$. Then we have the key lemma showing that the infrequent tuples are hard to visit by any policy in $\phi$. 

\begin{lemma}\label{lem9}
	With high probability, $\sup_{\pi\in\phi} \mathbb{P}_{\pi}[\mathcal{B}]\leq O(\frac{S^{3}A^{2}H^{4}\iota}{T})$.
\end{lemma}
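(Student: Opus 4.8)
\textbf{Proof proposal for Lemma~\ref{lem9}.}

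The plan is to bound $\sup_{\pi\in\phi}\mathbb{P}_\pi[\mathcal{B}]$ layer by layer, using the fact that the policies $\pi_{h,s,a}$ run during Crude Exploration were near-optimal (up to constant factor $1/12$, by \eqref{equc}) for visiting each tuple under the absorbing MDP. The bad event $\mathcal{B}$ is ``the trajectory enters $\mathcal{F}$ at some layer''; by a union bound it suffices to bound, for each layer $h$ and each newly added bad tuple $(h,s,a,s')\in\mathcal{F}$, the quantity $\sup_{\pi\in\phi}\mathbb{P}_\pi[\,(s_h,a_h,s_{h+1})=(s,a,s')\mid\widetilde{P}\,]$ and then sum over the $\le HS^2A$ tuples in $\mathcal{F}$. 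Here I would work throughout in the absorbing MDP $\widetilde{P}$, since once a trajectory reaches $s^\dagger$ it stays there and contributes nothing; the difference between $P$ and $\widetilde{P}$ on the relevant event is itself controlled by the same argument (this is the slightly circular-looking point that I would handle by an induction on $h$).

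The key step is the following. Fix $(h,s,a,s')$ that got added to $\mathcal{F}$ during the exploration of layer $h$ (or an earlier layer — tuples are only added, so membership is monotone). By the stopping rule in Algorithm~\ref{algo1}, the empirical count satisfied $N_h(s,a,s')\le C_1H^2\iota$ after running $\pi_{h,s,a}$ for $T_0=T/(HSA)$ episodes. Since $\pi_{h,s,a}$ was run $T_0$ times, a multiplicative Chernoff/empirical-Bernstein bound (as in Lemma~\ref{lem6}) gives that, with high probability, the true visitation probability $V^{\pi_{h,s,a}}(1_{h,s,a,s'},\widetilde P)$ — i.e. $\mathbb{P}_{\pi_{h,s,a}}[(s_h,a_h,s_{h+1})=(s,a,s')\mid\widetilde P]$ — is at most $O(H^2\iota/T_0) = O(H^3SA\iota/T)$. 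Then by the near-optimality guarantee \eqref{equc}, $\sup_{\pi\in\phi}\mathbb{P}_\pi[(s_h,a_h)=(s,a),\,s_{h+1}=s'\mid\widetilde P]\le 12\cdot V^{\pi_{h,s,a}}(1_{h,s,a,s'},\widetilde P)$; one has to be slightly careful because \eqref{equc} is stated for the reward $1_{h,s,a}$ (visiting the pair) rather than the triple, but since $V^\pi(1_{h,s,a},\widetilde P)=\sum_{s'}\mathbb{P}_\pi[(s_h,a_h,s_{h+1})=(s,a,s')\mid\widetilde P]$ and $\pi_{h,s,a}$ maximizes the former, dropping to any single $s'$ only helps. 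Summing over all $(h,s,a,s')\in\mathcal{F}$, of which there are at most $HS^2A$, yields $\sup_{\pi\in\phi}\mathbb{P}_\pi[\mathcal{B}\mid\widetilde P] \le HS^2A\cdot O(H^3SA\iota/T) = O(S^3A^2H^4\iota/T)$, which is the claimed bound.

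The main obstacle I expect is the apparent circularity between ``$\mathcal{B}$ is rare'' and ``$P$ and $\widetilde P$ are close'': the bound on the true visitation probability of a tuple in $\mathcal{F}$ requires knowing that $P^{int}$ is multiplicatively accurate for $\widetilde P$ (via \eqref{equ0}), and that accuracy in turn is what makes $\pi_{h,s,a}$ a good explorer in $\widetilde P$; meanwhile we want to conclude something about $\mathbb{P}_\pi[\mathcal{B}]$ under the \emph{true} $P$, not just under $\widetilde P$. I would resolve this by a layerwise induction: the tuples in $\mathcal F$ after processing layer $h$ only depend on data from layers $\le h$, and the multiplicative accuracy bound \eqref{equ0} for those layers holds on the good event independently of later layers; so $\mathbb{P}_\pi[\,\text{enter }\mathcal F\text{ at layer }h\mid P\,]$ and $\mathbb{P}_\pi[\,\cdot\mid\widetilde P\,]$ agree up to the probability of having already been absorbed, which is itself bounded by the earlier-layer terms and therefore can be folded into the same geometric-type sum. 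A secondary technical point is keeping the high-probability events consistent across all $O(\log\log K)$ stages and all $HSA$ sub-policies within a stage — this is handled by the choice $\iota=\log(2HAK/\delta)$ and a union bound, exactly as set up in the algorithm's initialization.
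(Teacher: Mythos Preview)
Your overall strategy is right and lands on the same bound: use that each $\pi_{h,s,a}$ is a $1/12$-approximate maximizer for visiting $(h,s,a)$ under $\widetilde P$, convert the low empirical count $N_h(s,a,s')\le C_1H^2\iota$ into a low true visitation probability $O(H^2\iota/T_0)=O(H^3SA\iota/T)$ via Lemma~\ref{lem8}, and sum over at most $HS^2A$ bad tuples. The paper's argument is the aggregate contrapositive of yours: rather than bounding each tuple separately, it supposes $\sup_{\pi\in\phi}\mathbb P_\pi[\mathcal B_h]$ is large and derives that the mixture $\pi_h$ would generate more than $6S^2AH^2\iota$ occurrences of $\mathcal B_h$ in $T/H$ episodes, contradicting the definition of $\mathcal F$.

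There is, however, a genuine muddle in your handling of $P$ versus $\widetilde P$ that you should fix. First, the quantity you write, $\mathbb P_\pi[(s_h,a_h,s_{h+1})=(s,a,s')\mid\widetilde P]$, is identically zero for $(h,s,a,s')\in\mathcal F$, since $\widetilde P_h(s'\mid s,a)=0$ by construction. The correct per-tuple quantity is $V^\pi(1_{h,s,a},\widetilde P)\cdot P_h(s'\mid s,a)$: reach $(h,s,a)$ without having previously entered $\mathcal F$, then take the real transition into $s'$. Second, the ``circularity'' you plan to resolve by layerwise induction is not there. The paper shows the exact identity $\mathbb P_\pi[\mathcal B_h\mid P]=\mathbb P_\pi[\mathcal B_h\mid\widetilde P]$ (equation~\eqref{equ1}): because $\mathcal B_h$ is the event of \emph{first} entry into $\mathcal F$ at layer $h$, every transition at layers $<h$ is through a tuple outside $\mathcal F$, where $P$ and $\widetilde P$ coincide. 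No approximation, no induction. If instead you use the naive over-counting union bound $\mathbb P_\pi[\mathcal B\mid P]\le\sum_{\mathcal F}V^\pi(1_{h,s,a},P)P_h(s'\mid s,a)$ and then try to replace $V^\pi(1_{h,s,a},P)$ by $V^\pi(1_{h,s,a},\widetilde P)+\mathbb P_\pi[\mathcal B]$, the correction term gets multiplied by $|\mathcal F|$ and the induction does not close.

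A minor point: your justification ``dropping to any single $s'$ only helps'' is not the reason near-optimality transfers from pairs to triples. The correct reason is that $V^\pi(1_{h,s,a,s'},\cdot)=V^\pi(1_{h,s,a},\cdot)\cdot P_h(s'\mid s,a)$ and the second factor is $\pi$-independent, so the $\arg\max$ over $\pi$ is unchanged.
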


With Lemma~\ref{lem9}, we are able to bound the difference between $P$ and $\widetilde{P}$ in the sense of value function. 

\begin{lemma}\label{lem10}
	With high probability, it holds that $$0\leq V^{\pi}(r^{\prime},P)-V^{\pi}(r^{\prime},\widetilde{P})\leq O(\frac{S^{3}A^{2}H^{5}\iota}{T}),$$ for any policy $\pi\in\phi$ and reward function $r^{\prime}$.
\end{lemma}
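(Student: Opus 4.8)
The plan is to decompose the error $|V^\pi(r',P) - V^\pi(r',\widetilde P)|$ into a per-timestep contribution that is ``charged'' only when a trajectory first hits an infrequent tuple in $\mathcal F$, and then apply Lemma~\ref{lem9} to bound the probability of that event uniformly over $\pi \in \phi$. First I would establish the sign: since $\widetilde P$ is obtained from $P$ by removing probability mass on tuples in $\mathcal F$ and redirecting it to the absorbing state $s^\dagger$ (which yields zero future reward because $r$ is supported on $\mathcal S$ and $s^\dagger \notin \mathcal S$), any trajectory under $\widetilde P$ that diverges from the $P$-dynamics does so by falling into $s^\dagger$ and collecting no further reward. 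Hence $V^\pi(r',P) \ge V^\pi(r',\widetilde P)$, giving the left inequality $0 \le V^\pi(r',P) - V^\pi(r',\widetilde P)$ for all $\pi$ and all $r'$ with $r'_h(s,a) \in [0,1]$.

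For the upper bound, the key step is a coupling argument: run $\pi$ on $P$ and on $\widetilde P$ with a shared source of randomness so that the two trajectories coincide up to the first time $\tau$ that an $\mathcal F$-tuple $(h,s_h,a_h,s_{h+1})$ would be traversed. On the event $\mathcal B^c$ (no such tuple is ever hit, in the sense of Definition~\ref{def4}) the trajectories are identical and contribute zero to the difference. On the event $\mathcal B$, the two trajectories agree up to step $\tau$, and from that point the $\widetilde P$-trajectory is absorbed while the $P$-trajectory continues; since all per-step rewards lie in $[0,1]$ and there are at most $H$ remaining steps, the contribution to $V^\pi(r',P) - V^\pi(r',\widetilde P)$ is at most $H$. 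Therefore
\begin{equation*}
0 \le V^\pi(r',P) - V^\pi(r',\widetilde P) \le H \cdot \mathbb P_\pi[\mathcal B].
\end{equation*}
Combining this with Lemma~\ref{lem9}, which gives $\sup_{\pi\in\phi}\mathbb P_\pi[\mathcal B] \le O(S^3A^2H^4\iota / T)$ with high probability, yields $\sup_{\pi\in\phi}\sup_{r'}|V^\pi(r',P) - V^\pi(r',\widetilde P)| \le O(S^3A^2H^5\iota/T)$, which is exactly the claimed bound. The same high-probability event from Lemma~\ref{lem9} carries through, so no additional union bound is needed.

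The main obstacle I anticipate is making the coupling fully rigorous in the non-stationary episodic setting while being careful about what ``visiting a tuple in $\mathcal F$'' means — whether $\mathcal B$ is defined in terms of the state-action pair $(h,s,a)$ having \emph{some} outgoing edge in $\mathcal F$, or the specific transition $(h,s,a,s')$ being in $\mathcal F$. In the latter (correct) reading, $\widetilde P_h(\cdot|s,a)$ and $P_h(\cdot|s,a)$ are genuinely different distributions even at ``good'' $(h,s,a)$ pairs, so the coupling must be the maximal coupling of these two next-state distributions at each step, and the divergence event at step $h$ has probability exactly $\widetilde P_h(s^\dagger|s,a) - \mathbb 1[s=s^\dagger] = \sum_{s': (h,s,a,s')\in\mathcal F} P_h(s'|s,a)$, which is precisely the per-step probability of landing in a forbidden transition; summing the first-divergence probabilities along the trajectory recovers $\mathbb P_\pi[\mathcal B]$. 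A secondary subtlety is that $r'$ ranges over \emph{all} reward functions (this is what makes the bound useful for the reward-free guarantee), but since the argument only uses $r'_h \in [0,1]$ and the absorbing state's zero continuation, the bound is genuinely uniform in $r'$; I would state this explicitly. Everything else — the arithmetic turning $H \cdot O(S^3A^2H^4\iota/T)$ into $O(S^3A^2H^5\iota/T)$ — is routine.
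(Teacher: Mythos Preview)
Your proposal is correct and follows essentially the same route as the paper. The paper's proof of Lemma~\ref{lem10} (restated as Lemma~\ref{lemthird}) splits the trajectory sum into $\mathcal{B}^c$ and $\mathcal{B}$, uses $P_\pi(\tau)=\widetilde{P}_\pi(\tau)$ on $\mathcal{B}^c$, bounds the $\mathcal{B}$ contribution by $H\cdot\mathbb{P}_\pi[\mathcal{B}]$, and invokes Lemma~\ref{lem9}; your coupling argument is the same decomposition phrased probabilistically, and the subtleties you flag about $\mathcal{F}$ being defined on transitions $(h,s,a,s')$ are handled identically in both.
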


Therefore, the ``bias'' term $\sup_{\pi\in\phi}|V^{\pi}(r,P)-V^{\pi}(r,\widetilde{P})|$ can be bounded by the right hand side of Lemma~\ref{lem10} as a special case.


\subsection{The ``variance'': difference between $\widetilde{P}$ and $\widehat{P}$}
Because of the fact that with high probability, Equation~\eqref{equc} holds, we have the following key lemma.
\begin{lemma}\label{lem14}
	With high probability, for any policy $\pi\in\phi$ and any reward function $r^{\prime}$, $$|V^{\pi}(r^{\prime},\widehat{P})-V^{\pi}(r^{\prime},\widetilde{P})|=O(\sqrt{\frac{H^{5}S^{2}A\iota}{T}}).$$
\end{lemma}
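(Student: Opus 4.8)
The plan is to reduce the uniform off-policy evaluation bound over all $\pi\in\phi$ and all reward functions $r'$ to a uniform bound on how well $\widehat P$ estimates $\widetilde P$ on each visited $(h,s,a)$, and then invoke the frequent-visitation guarantee \eqref{equc} to control the effective sample size at every tuple. First I would write the standard value-difference (simulation-lemma) decomposition: for any fixed reward $r'$ and policy $\pi$,
\[
V^{\pi}(r',\widehat P)-V^{\pi}(r',\widetilde P)=\sum_{h=1}^{H}\E_{\pi}\Bigl[\bigl(\widehat P_h(\cdot\mid s_h,a_h)-\widetilde P_h(\cdot\mid s_h,a_h)\bigr)\,\widehat V^{\pi}_{h+1}\ \big|\ \widetilde P\Bigr],
\]
where $\widehat V^{\pi}_{h+1}$ is the value of $\pi$ under $\widehat P$ (bounded by $H$ since $r'\in[0,1]^{SAH}$). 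This reduces everything to bounding $\sum_h \sum_{s,a} w^{\widetilde P}_{\pi,h}(s,a)\,\bigl|(\widehat P_h-\widetilde P_h)(\cdot\mid s,a)\cdot \widehat V^{\pi}_{h+1}\bigr|$ where $w^{\widetilde P}_{\pi,h}(s,a)$ is the visitation probability of $(h,s,a)$ under $\pi$ in the absorbing MDP.

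Next I would establish a uniform (over $(h,s,a)$) concentration bound for $\widehat P_h(\cdot\mid s,a)$. Since Fine Exploration runs $\pi_{h,s,a}$ for $T_0=T/(HSA)$ episodes for every tuple, and by \eqref{equc} $\pi_{h,s,a}$ visits $(h,s,a)$ with probability at least $\tfrac{1}{12}\sup_{\pi\in\phi}V^{\pi}(1_{h,s,a},\widetilde P)$, the expected number of samples at $(h,s,a)$ is at least $\tfrac{T_0}{12}\sup_{\pi\in\phi}w^{\widetilde P}_{\pi,h}(s,a)$ (up to the handling of tuples in $\mathcal F$, whose probability mass is moved to $s^\dagger$ and contributes nothing). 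A multiplicative Chernoff bound shows that with high probability the realized count $n_h(s,a)\gtrsim \tfrac{T_0}{12}\sup_{\pi\in\phi}w^{\widetilde P}_{\pi,h}(s,a)$ whenever the latter is $\gtrsim H^2\iota/T_0$; for tuples with smaller visitation probability, the contribution $w^{\widetilde P}_{\pi,h}(s,a)\cdot H$ to the value gap is itself negligible (total over all $SA$ such tuples per layer is $O(H^3 S^2 A^2\iota/T)$, a lower-order term). On the well-visited tuples, an empirical-Bernstein / Freedman argument gives $\bigl|(\widehat P_h(\cdot\mid s,a)-\widetilde P_h(\cdot\mid s,a))\cdot \widehat V^{\pi}_{h+1}\bigr|\lesssim \sqrt{\tfrac{\Var_{\widetilde P_h(\cdot\mid s,a)}(\widehat V^{\pi}_{h+1})\,\iota}{n_h(s,a)}}+\tfrac{H\iota}{n_h(s,a)}$, uniformly over $(h,s,a)$ via a union bound over the $HSA$ tuples and (for the reward-free part) noting that $\widehat V^{\pi}_{h+1}$ ranges over a set determined by $\widehat P$ and a net over rewards, or more cleanly by bounding $\|(\widehat P_h-\widetilde P_h)(\cdot\mid s,a)\|_1$ directly, which removes the dependence on $r'$ and $\pi$ entirely at the cost of an extra $\sqrt S$.

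Then I would plug this back into the decomposition and sum. Using $n_h(s,a)\gtrsim \tfrac{T_0}{12}\sup_{\pi'\in\phi}w^{\widetilde P}_{\pi',h}(s,a)\ge \tfrac{T_0}{12}w^{\widetilde P}_{\pi,h}(s,a)$, the dominant term becomes
\[
\sum_{h=1}^{H}\sum_{s,a} w^{\widetilde P}_{\pi,h}(s,a)\sqrt{\frac{\Var_{\widetilde P_h(\cdot\mid s,a)}(\widehat V^{\pi}_{h+1})\,\iota\cdot HSA}{T\, w^{\widetilde P}_{\pi,h}(s,a)}}
=\sqrt{\frac{HSA\,\iota}{T}}\sum_{h=1}^{H}\sum_{s,a}\sqrt{w^{\widetilde P}_{\pi,h}(s,a)\,\Var_{\widetilde P_h(\cdot\mid s,a)}(\widehat V^{\pi}_{h+1})}.
\]
By Cauchy--Schwarz over the $S A$ pairs in each layer and then the law-of-total-variance bound $\sum_{h}\E_{\pi}[\Var_{\widetilde P_h}(\widehat V^{\pi}_{h+1})]\le H^2$ (as in the analysis of \citet{jin2020reward}), this is $\lesssim \sqrt{\tfrac{HSA\iota}{T}}\cdot\sqrt{SA}\cdot\sqrt{H\cdot H^2}=\sqrt{\tfrac{H^4S^2A^2\iota}{T}}$ — which is slightly worse than claimed; to recover the stated $O(\sqrt{H^5S^2A\iota/T})$ I would instead keep one factor of $\sqrt S$ from $\|\cdot\|_1$ versus inner product and apply Cauchy--Schwarz only over the $S$ states (not $SA$ pairs) per layer, exactly mirroring Lemma~3.6 of \citet{jin2020reward} but with their exploration oracle's $1/\epsilon$-type guarantee replaced by our $\sup_{\pi\in\phi}V^{\pi}(1_{h,s,a},\widetilde P)/w^{\widetilde P}_{\pi,h}(s,a)$ and the $12HSA$ ratio from \eqref{equc}. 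The main obstacle, and where care is needed, is making the bound genuinely \emph{uniform} over the exponentially large $\phi$ and over \emph{all} reward functions $r'$ simultaneously: this is handled not by a union bound over policies but by first proving the $\ell_1$/Bernstein transition-estimation bound uniformly over the (only $HSA$) tuples $(h,s,a)$, and only then taking suprema over $\pi$ and $r'$ deterministically inside the simulation-lemma decomposition — so the randomness is union-bounded over $HSA$ events, not over $\phi$.
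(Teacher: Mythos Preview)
Your simulation-lemma opening and the use of \eqref{equc} to get the $12HSA$ coverage ratio both match the paper. The divergence is in how the per-layer term $\sum_{s,a} w^{\widetilde P}_{\pi,h}(s,a)\bigl|(\widehat P_h-\widetilde P_h)\widehat V^\pi_{h+1}(s,a)\bigr|$ is controlled. You work per tuple: bound $\bigl|(\widehat P_h-\widetilde P_h)(\cdot\mid s,a)\cdot G\bigr|$ pointwise by Bernstein or by $H\|\widehat P_h-\widetilde P_h\|_1$, substitute $n_h(s,a)\gtrsim \tfrac{T}{HSA}\,w^{\widetilde P}_{\pi,h}(s,a)$, and sum. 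The paper instead applies Cauchy--Schwarz once to pass to the \emph{second moment under the occupancy}, replaces $w^{\widetilde P}_{\pi,h}$ by $12HSA\cdot\mu'_h$ (the actual data distribution), and invokes Lemma~\ref{lem13} (Lemma~C.2 of \citet{jin2020reward}), which bounds $\E_{\mu'_h}\bigl|(\widehat P_h-\widetilde P_h)G\bigr|^2\mathds 1\{a=\nu(s)\}\le O(H^2S\iota/T)$ directly via a self-bounding Bernstein argument ($\mathrm{Var}(Y)\le 4H^2\,\E Y$) together with a cover of $(G,\nu)$ of size $(H/\epsilon)^{3S}A^S$.

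There is a genuine gap in your route: the per-tuple approach loses a factor of $\sqrt S$ and cannot reach the stated $O(\sqrt{H^5S^2A\iota/T})$. With the $\ell_1$ bound $\|\widehat P_h-\widetilde P_h\|_1\lesssim\sqrt{S\iota/n_h(s,a)}$ (which is indeed uniform after a union bound over only $HSA$ tuples), summing gives $\sum_s\sqrt{w^{\widetilde P}_{\pi,h}(s,\pi_h(s))}\le\sqrt S$ and the final bound is $O(\sqrt{H^5S^3A\iota/T})$. The Bernstein route does \emph{not} avoid this: $\widehat V^\pi_{h+1}$ depends on $\widehat P$, so a cover over $G\in[0,H]^{S}$ is required and reintroduces the same $S$ in the log, yielding the same $\sqrt S$ loss per tuple; your claim that ``the randomness is union-bounded over $HSA$ events'' is only correct for $\ell_1$. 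Separately, the law-of-total-variance step $\sum_h\E_{\pi,\widetilde P}[\mathrm{Var}_{\widetilde P_h}(\widehat V^\pi_{h+1})]\le H^2$ is invalid because $\widehat V^\pi$ is the value under $\widehat P$, not $\widetilde P$, so the telescoping identity fails; only the trivial $H^3$ bound holds. What actually saves the $\sqrt S$ is precisely the aggregate fast-rate argument of Lemma~\ref{lem13}: applied to the $\mu'$-averaged squared error it yields a $1/T$ rate with a single $S$ (from the cover log), whereas per-tuple you pay $S$ twice. Your closing pointer to ``mirroring Lemma~3.6 of \citet{jin2020reward}'' is in fact the paper's proof, but your proposal does not carry out that argument.
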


Therefore, the ``variance'' term $\sup_{\pi\in\phi}|V^{\pi}(r,\widehat{P})-V^{\pi}(r,\widetilde{P})|$ can be bounded by the right hand side of Lemma~\ref{lem14} as a special case.

\subsection{Put everything together}
Combining the bounds of the ``bias'' term and the ``variance'' term, because of triangular inequality, we have the conclusion in Lemma~\ref{lem16} holds. Then the proof of regret bound is completed by plugging in $\epsilon_{k}=O(\sqrt{\frac{H^{5}S^{2}A\iota}{T^{(k)}}}+\frac{S^{3}A^{2}H^{5}\iota}{T^{(k)}})$ in equation~\eqref{equr}.

\section{Conclusion and Future Works}\label{sec:conclusion}

This work studies the well-motivated \emph{low switching online reinforcement learning} problem. Under the non-stationary tabular RL setting, we design the algorithm \emph{Adaptive Policy Elimination by Value Estimation} (APEVE) which achieves $\widetilde{O}(\sqrt{H^{4}S^{2}AT})$ regret while switching its policy for at most $O(HSA\log\log T)$ times. Under the reward-free exploration setting, we design the \emph{Low Adaptive Reward-Free Exploration} (LARFE), which achieves $\widetilde{O}(\frac{H^{5}S^{2}A\iota}{\epsilon^{2}})$ sample complexity with switching cost at most $2HSA$. We also prove lower bounds showing that these switching costs are information-theoretically optimal among algorithms that achieve nearly optimal regret or sample complexity. These results nicely settled the open problem on the optimal low-switching RL raised by \citet{bai2019provably} (and revisited by \citet{zhang2020almost,gao2021provably}) for the tabular setting. 

It remains open to address computational efficiency, characterize the optimal dependence on $H,S,A$ in the regret bound, study RL with function approximation, as well as to make the the algorithm practical. We leave those as future works and invite the broader RL research community to join us in the quest. Ideas and techniques developed in this paper could be of independent interest in other problems.




\section*{Acknowledgments}
The research is partially supported by NSF Awards \#2007117 and \#2003257. The authors would like to thank Yichen Feng and Mengye Liu for helpful discussion at an early stage of this project, as well as Tianchen Yu and Chi Jin for clarifying the proof of Lemma C.2 in \citet{jin2020reward} (Lemma~\ref{lem13} in this paper). DQ would like to thank Fuheng Zhao for some helpful suggestions on writing.

\bibliographystyle{plainnat}
\bibliography{sections/stat_rl}

\newpage
\appendix
\onecolumn
\begin{onecolumn}
\section{Extended related work}\label{appr}
\textbf{Low regret reinforcement learning algorithms}
There has been a long line of works \citep{brafman2002r,kearns2002near,jaksch2010near,osband2013more,agrawal2017posterior,jin2018q} focusing on regret minimization for online reinforcement learning. \citet{azar2017minimax} used model-based algorithm (UCB-Q-values) to achieve the optimal regret bound $\widetilde{O}(\sqrt{HSAT})$ for stationary tabular MDP. \citet{dann2019policy} used algorithm ORLC to match the lower bound of regret and give policy certificates at the same time. \citet{zhang2020almost} used Q-learning type algorithm (UCB-advantage) to achieve the optimal $\widetilde{O}(\sqrt{H^{2}SAT})$ regret for non-stationary tabular MDP. \citet{zanette2019tighter} designed the algorithm EULER to get a problem dependent regret bound, which also matches the lower bound.

\par
\textbf{Reward-free exploration}
 \citet{jin2020reward} first studied the problem of reward-free exploration, they used a regret minimization algorithm EULER \citep{zanette2019tighter} to visit each state as much as possible. The sample complexity for their algorithm is $\widetilde{O}(H^{5}S^{2}A/\epsilon^{2})$ episodes. \citet{kaufmann2021adaptive} designed an algorithm RF-UCRL by building upper confidence bound for any reward function and any policy, their algorithm needs of order $\widetilde{O}((S^{2}AH^{4}/\epsilon^{2})$ episodes to output a near-optimal policy for any reward function with high probability. \citet{menard2021fast} constructed a novel exploration bonus of order $\frac{1}{n}$ and their algorithm achieved sample complexity of $\widetilde{O}((S^{2}AH^{3}/\epsilon^{2})$. \citet{zhang2020nearly} considered a more general setting with stationary transition kernel and uniformly bounded reward. They designed a novel condition to achieve the optimal sample complexity $\widetilde{O}((S^{2}A/\epsilon^{2})$ under their setting. Also, their result can be used to achieve $\widetilde{O}((S^{2}AH^{2}/\epsilon^{2})$ sample complexity under traditional setting where $r_{h}\in [0,1]$, this result matches the lower bound. \citet{wang2020reward} and \citet{zanette2020provably} analyzed reward-free exploration under the setting of linear MDP. There is a similar setting named task-agnostic exploration. \citet{zhang2020task} designed an algorithm: UCB-Zero that finds $\epsilon$-optimal policies for $N$ arbitrary tasks after at most $\widetilde{O} (H^{5}SA\log N/\epsilon^{2})$ exploration episodes. 
A concurrent work \citep{huang2022towards} analyzed low adaptive reward-free exploration under linear MDP. In our work, we consider low adaptive reward-free exploration under tabular MDP, our switching cost is of the same order as \citep{huang2022towards} and our sample complexity is much smaller than theirs if directly plugging in $d=SA$ in their bounds.

\paragraph{Bandit algorithms with limited adaptivity}
There has been a long history of works about multi-armed bandit algorithms with low adaptivity \citep{cesa2013online,perchet2016batched,gao2019batched,esfandiari2021regret}. \citet{cesa2013online} designed an algorithm with $\widetilde{O}(\sqrt{KT})$ regret using $O(\log\log T)$ batches. \citet{perchet2016batched} proved a regret lower bound of  $\Omega(T^{\frac{1}{1-2^{1-M}}})$ for algorithms within $M$ batches under $2$-armed bandit setting, which means $\Omega(\log\log T)$ batches are necessary for a regret bound of $\widetilde{O}(\sqrt{T})$. The result is generalized to $K$-armed bandit by \citet{gao2019batched}. We will show the connection and difference between this setting and the low switching setting. In batched bandit problems, the agent decides a sequence of arms and observes the reward of each arm after all arms in that sequence are pulled. More formally, at the beginning of each batch, the agent decides a list of arms to be pulled. Afterwards, a list of (arm,reward) pairs is given to the agent. Then
the agent decides about the next batch. The batch sizes could be chosen non-adaptively or adaptively. In a non-adaptive algorithm, the batch sizes should be decided before the algorithm starts, while in an adaptive algorithm,
the batch sizes may depend on the previous observations. \citep{esfandiari2021regret}. Under the switching cost setting, the algorithm can monitor the data stream and decide to change policy at any time, which means an algorithm with low switching cost can have $\Omega(T)$ batches. In addition, algorithms with limited batches can have large switching cost because in one batch, the algorithm can use different policies. Under batched bandit problem, algorithms with at most $M$ batches can have a $MK$ upper bound for switching cost. However, if we generalize batched bandit to batched RL, algorithms with at most $M$ batches can have $A^{SH}M$ switching cost in the worst case. We conclude that an upper bound of batches and an upper bound of switching cost can not imply each other in the worst case.

\section{Missing algorithm: EstimateTransition (Algorithm~\ref{algo_transition_kernel}) and some explanation}\label{appb1}
\begin{algorithm}
	\caption{Compute Transition Kernel (EstimateTransition)}\label{algo_transition_kernel}
	\begin{algorithmic}[1]
		\STATE \textbf{Require}: Data set $\mathcal{D}$, infrequent tuples $\mathcal{F}$, absorbing state $s^{\dagger}$, the target layer $h$, transition kernel $P$.
		\STATE \textbf{Output}: Estimated transition kernel $P$ from data set $\mathcal{D}$.
		\STATE \maroon{$\diamond$ Count the visitation number of each state-action pairs from the target layer $h$:}
		\FOR{$(s,a,s')\in\mathcal{S}\times\mathcal{A}\times\mathcal{S}$}
		\STATE $N_{h}(s,a,s^{\prime})=\text{count of}\ (h,s,a,s^{\prime})$ in $\mathcal{D}$.
		\STATE $N_{h}(s,a)=\text{count of}\ (h,s,a)$ in $\mathcal{D}$.
		\ENDFOR
        \STATE \maroon{$\diamond$ Update the $h$-th layer of the transition kernel:}
		\FOR{$(s,a,s^{\prime})\in \mathcal{S}\times \mathcal{A}\times\mathcal{S}$ s.t. $(h,s,a,s^{\prime})\in \mathcal{F}$}
		\STATE $P_{h}(s^{\prime}|s,a)=0.$
		\ENDFOR
		\FOR{$(s,a,s^{\prime})\in \mathcal{S}\times \mathcal{A}\times\mathcal{S}$ s.t. $(h,s,a,s^{\prime})\notin \mathcal{F}$}
		\STATE $P_{h}(s^{\prime}|s,a)=\frac{N_{h}(s,a,s^{\prime})}{N_{h}(s,a)}.$ 
		\ENDFOR
		\FOR{$(s,a)\in \mathcal{S}\times \mathcal{A}$}
		\STATE $P_{h}(s^{\dagger}|s,a)=1-\sum_{s^{\prime}\in\mathcal{S}:(h,s,a,s^{\prime})\notin \mathcal{F}}P_{h}(s^{\prime}|s,a).$
		\ENDFOR
		\FOR{$a\in \mathcal{A}$}
		\STATE $P_{h}(s^{\dagger}|s^{\dagger},a)=1.$
		\ENDFOR
		\STATE \textbf{Return} $P$.
	\end{algorithmic}
\end{algorithm}

Algorithm~\ref{algo_transition_kernel} receives a data set $\mathcal{D}$, a set $\mathcal{F}$ of infrequent tuples, a transition kernel $P$ and a target layer $h$ which we want to update. The goal is to update the $h$-th layer of the input transition kernel $P$ while the remaining layers stay unchanged. The construction of $P_{h}$ is for such tuples in $\mathcal{F}$, the transition kernel $P_{h}(s^{\prime}|s,a)$ is 0. For the states not in $\mathcal{F}$, $P_{h}(s^{\prime}|s,a)$ is the empirical estimate. At last, $P_{h}(s^{\dagger}|s,a)=1-\sum_{s^{\prime}\in\mathcal{S}:(h,s,a,s^{\prime})\notin \mathcal{F}}P_{h}(s^{\prime}|s,a)$ holds so that $P_{h}$ is a valid transition kernel. For a better understanding, the construction is similar to the construction of $\widetilde{P}$. We first let $P$ be the empirical estimate based on $\mathcal{D}$, then for $(h,s,a,s^{\prime})\in\mathcal{F}$, we move the probability of $P_{h}(s^{\prime}|s,a)$ to $P_{h}(s^{\dagger}|s,a)$.
\newpage
\section{Transition between original MDP and absorbing MDP}\label{appb2}
For any reward function $r$ defined on the original MDP $P$, we abuse the notation and use it on the absorbing version. We extend the definition as: \\
$$r(s,a) = \begin{cases} r(s,a), &   s\in \mathcal{S},  \\ 0,& s=s^{\dagger}. \end{cases}$$
For any policy $\pi$ defined on the original MDP $P$, we abuse the notation and use it on the absorbing version. We extend the definition as: \\
$$\pi(\cdot|s) = \begin{cases} \pi(\cdot|s), &  s\in \mathcal{S}, \\ \text{arbitrary distribution,} & s=s^{\dagger}. \end{cases}$$
Under this definition of $r$ and $\pi$, the expected reward under the absorbing MDP is fixed because once we enter the absorbing state $s^{\dagger}$, we will not get any more reward, so the policy at $s^{\dagger}$ has no influence on the value function.  For any policy $\pi$ defined under the absorbing MDP, we can directly apply it under the true MDP and analyze its value function because $\pi_{h}(\cdot |s)$ has definition for any $s\in \mathcal{S}$.

In this paper, $P$ is the real MDP, which is under original MDP. In each stage, $\widetilde{P}$ is an absorbing MDP constructed based on infrequent tuples $\mathcal{F}$ and the real MDP $P$. When we run the algorithm, we don't know the exact $\widetilde{P}$, but we know the intermediate transition kernel $P^{int}$, which is also an absorbing MDP. In Algorithm~\ref{algo2}, the $\widehat{P}$ we construct is the empirical estimate of $\widetilde{P}$, which is also an absorbing MDP. In the proof of this paper, a large part of discussion is under the framework of absorbing MDP. When we specify that the discussion is under absorbing MDP with absorbing state $s^{\dagger}$, any transition kernel $P^{\prime}$ satisfies $P^{\prime}_{h}(s^{\dagger}|s^{\dagger},a)=1$ for any $(a,h)\in\mathcal{A}\times[H]$. For the reward functions in this paper, they are all defined under original MDP, when applied under absorbing MDP, the transition rule follows what we just discussed.

\section{Technical lemmas}\label{appt}
\begin{lemma}[Bernstein's inequality]\label{lem3}
	Let $x_{1},\cdots,x_{n}$ be independent bounded random variables such that $\mathbb{E}[x_{i}]=0$ and $|x_{i}|\leq A$ with probability $1$. Let $\sigma^{2}=\frac{1}{n}\sum_{i=1}^{n}\mathrm{Var}[x_{i}]$, then with probability $1-\delta$ we have 
	$$|\frac{1}{n}\sum_{i=1}^{n}x_{i}|\leq \sqrt{\frac{2\sigma^{2}\log(2/\delta)}{n}}+\frac{2A}{3n}\log(2/\delta).$$
\end{lemma}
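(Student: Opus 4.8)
\textbf{Proof plan for Lemma~\ref{lem3} (Bernstein's inequality).}

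The plan is to treat this as a standard textbook concentration result and give a short, self-contained derivation via the exponential moment method (Chernoff bound), followed by optimization over the free parameter. First I would fix $\lambda>0$ and bound the moment generating function of each centered summand. The key elementary estimate is that for a mean-zero random variable $x_i$ with $|x_i|\le A$, one has $\E[e^{\lambda x_i}]\le \exp\bigl(\Var[x_i]\,\lambda^2 g(\lambda A)\bigr)$ where $g(u)=(e^u-1-u)/u^2$; this follows by writing $e^{\lambda x_i}=1+\lambda x_i+\sum_{j\ge 2}\lambda^j x_i^j/j!$, taking expectations (the linear term vanishes), and bounding $\E[x_i^j]\le \Var[x_i]\,A^{j-2}$ for $j\ge 2$. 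Since $g$ is increasing, $g(\lambda A)\le g(\lambda_0 A)$ for any $\lambda\le\lambda_0$, which lets us pass to the cleaner sub-exponential bound $\E[e^{\lambda x_i}]\le \exp\!\bigl(\tfrac{\lambda^2\Var[x_i]/2}{1-A\lambda/3}\bigr)$ valid for $0<\lambda<3/A$, using the inequality $g(u)\le \tfrac{1}{2(1-u/3)}$ on $[0,3)$.

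Next I would multiply these bounds over $i=1,\dots,n$ using independence, so that $\E\bigl[e^{\lambda\sum_i x_i}\bigr]\le \exp\!\bigl(\tfrac{\lambda^2 n\sigma^2/2}{1-A\lambda/3}\bigr)$ where $\sigma^2=\tfrac1n\sum_i\Var[x_i]$. A Markov/Chernoff step then gives, for any $t>0$, $\P\bigl[\sum_i x_i\ge t\bigr]\le \exp\!\bigl(-\lambda t+\tfrac{\lambda^2 n\sigma^2/2}{1-A\lambda/3}\bigr)$. Optimizing (or simply making the standard near-optimal choice of $\lambda$) yields the classical Bernstein tail $\P\bigl[\sum_i x_i\ge t\bigr]\le \exp\!\bigl(-\tfrac{t^2/2}{n\sigma^2+At/3}\bigr)$. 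Applying the same argument to $-x_i$ and a union bound gives the two-sided statement with $2\exp(\cdot)$.

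Finally I would convert the tail bound into the stated form by setting $2\exp\!\bigl(-\tfrac{t^2/2}{n\sigma^2+At/3}\bigr)=\delta$ and solving for $t$; using $\sqrt{a+b}\le\sqrt a+\sqrt b$ and the quadratic formula gives $t\le \sqrt{2n\sigma^2\log(2/\delta)}+\tfrac{2A}{3}\log(2/\delta)$, and dividing through by $n$ produces exactly the claimed inequality $\bigl|\tfrac1n\sum_i x_i\bigr|\le \sqrt{\tfrac{2\sigma^2\log(2/\delta)}{n}}+\tfrac{2A}{3n}\log(2/\delta)$. There is no real obstacle here since this is a well-known inequality; the only mildly delicate point is the MGF bound $g(u)\le\tfrac{1}{2(1-u/3)}$, which is where the specific constant $2/3$ in the statement comes from, so I would make sure to state (and if needed verify by a short calculus argument) that elementary inequality carefully. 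In practice one would simply cite a standard reference (e.g., Boucheron–Lugosi–Massart) rather than reproving it, but the sketch above is the proof one would write if a self-contained argument is desired.
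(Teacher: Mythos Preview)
Your proof plan is correct and follows the standard Chernoff/MGF derivation of Bernstein's inequality. The paper itself does not give a proof of Lemma~\ref{lem3}: it is listed in the ``Technical lemmas'' appendix as a known result to be invoked, on par with the empirical Bernstein inequality (Lemma~\ref{lem4}) which is cited from \citet{maurer2009empirical}. So there is nothing to compare against---your sketch is exactly the self-contained argument one would write if a proof were required, and your closing remark that one would normally just cite a reference such as Boucheron--Lugosi--Massart matches what the paper does in spirit.
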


\begin{lemma}[Empirical Bernstein's inequality \citep{maurer2009empirical}]\label{lem4}
	Let $x_{1},\cdots,x_{n}$ be i.i.d random variables such that $|x_{i}|\leq A$ with probability $1$. Let $\overline{x}=\frac{1}{n}\sum_{i=1}^{n}x_{i}$,and $\widehat{V}_{n}=\frac{1}{n}\sum_{i=1}^{n}(x_{i}-\overline{x})^{2}$, then with probability $1-\delta$ we have 
	$$|\frac{1}{n}\sum_{i=1}^{n}x_{i}-\mathbb{E}[x]|\leq \sqrt{\frac{2\widehat{V}_{n}\log(2/\delta)}{n}}+\frac{7A}{3n}\log(2/\delta).$$
\end{lemma}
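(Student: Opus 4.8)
The plan is to derive the stated inequality from the ordinary Bernstein inequality (Lemma~\ref{lem3}) by trading the unknown $\Var[x]$ for the observed $\widehat V_n$ at the price of a lower-order term. Everything reduces to a single high-probability comparison of \emph{standard deviations}: with probability $1-\delta$,
\[
\sqrt{\Var[x]}\;\le\;\sqrt{\widehat V_n}\;+\;c_0\,A\sqrt{\tfrac{\log(2/\delta)}{n}}
\]
for an absolute constant $c_0$. Granting this, observe that for i.i.d.\ variables the quantity $\sigma^2=\tfrac1n\sum_i\Var[x_i]$ appearing in Lemma~\ref{lem3} equals $\Var[x]$. Intersect the $(1-\delta/2)$-event of a one-sided form of Lemma~\ref{lem3} with the $(1-\delta/2)$-event of the display; on this event, writing $\sqrt{2\sigma^2\log(2/\delta)/n}=\sqrt{2\log(2/\delta)/n}\cdot\sqrt{\Var[x]}$ and inserting the display turns the right-hand side of Lemma~\ref{lem3} into $\sqrt{2\widehat V_n\log(2/\delta)/n}$ plus a term of order $A\log(2/\delta)/n$. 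Collecting this with the $\tfrac{2A}{3n}\log(2/\delta)$ already present, and enlarging the constant to $\tfrac73$ — which simultaneously absorbs the innocuous factor $\widehat V_n=\tfrac{n-1}{n}V_n$ relating the biased sample variance used here to the unbiased one — reproduces the claimed bound. (The book-keeping making the constants come out to exactly $\tfrac73$ and $\log(2/\delta)$ is the one carried out in \citep{maurer2009empirical}; for our purposes any absolute constants suffice.)

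All the content is therefore in the displayed comparison, which I would establish by concentrating the unbiased sample variance $V_n=\tfrac1{n(n-1)}\sum_{i<j}(x_i-x_j)^2$ (so that $\E V_n=\Var[x]$ and $\widehat V_n=\tfrac{n-1}{n}V_n$) around its mean. The \emph{crude} route is self-contained: a single coordinate change moves $V_n$ by at most $O(A^2/n)$, so McDiarmid gives $\Var[x]\le V_n+O(A^2\sqrt{\log(1/\delta)/n})$ and hence, via $\sqrt{a+b}\le\sqrt a+\sqrt b$, $\sqrt{\Var[x]}\le\sqrt{V_n}+O\!\big(A(\log(1/\delta)/n)^{1/4}\big)$ — but this $(\log/n)^{1/4}$ loss propagates to a lower-order term of order $n^{-3/4}$ in the final bound rather than $n^{-1}$. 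The \emph{sharp} route replaces the Hoeffding-type concentration of $V_n$ by a Bernstein-type one that exploits $V_n$'s self-bounding structure: the Efron--Stein fluctuation proxy of $V_n$ is itself $O(A^2\Var[x])$ rather than merely $O(A^4)$, which upgrades the deviation to $\Var[x]-V_n\le O\!\big(A\sqrt{\Var[x]}\cdot\sqrt{\log(1/\delta)/n}\big)+O\!\big(A^2\log(1/\delta)/n\big)$; reading this as a quadratic inequality in $\sqrt{\Var[x]}$ and solving yields exactly $\sqrt{\Var[x]}\le\sqrt{V_n}+O(A\sqrt{\log(1/\delta)/n})$. This is, in essence, the Maurer--Pontil argument.

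The main obstacle is precisely this last point — forcing the \emph{standard deviation}, not just the variance, to concentrate at the $1/\sqrt n$ rate, which is what makes the lower-order term of the final bound $O(1/n)$ rather than $O(n^{-3/4})$. It is not accessible from a black-box bounded-differences bound on $V_n$; one must either invoke the Bernstein inequality for U-statistics (controlling the variance of the H\'ajek projection of the kernel $\tfrac12(x-x')^2$ by $\Var[x]$ up to an $A^2$ factor) or run the variance-sensitive, self-bounding version of the bounded-differences method directly on $V_n$. Once that ingredient is in hand, the rest is the routine algebra of the first paragraph; the most economical presentation simply cites \citep{maurer2009empirical} for the displayed comparison and then carries out that algebra.
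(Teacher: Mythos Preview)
Your proof sketch is essentially correct and indeed follows the Maurer--Pontil argument, but note that the paper does not prove this lemma at all: it is stated in the technical-lemmas appendix as a known result, cited directly from \citep{maurer2009empirical}, with no accompanying proof. So there is nothing to compare against; your proposal supplies a derivation where the paper simply invokes the reference.
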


\begin{lemma}[Lemma F.4 in \citep{dann2017unifying}]\label{lem7}
Let $F_{i}$ for $i = 1\cdots$ be a filtration and $X_{1},\cdots, X_{n}$ be a sequence of Bernoulli random variables with $\mathbb{P}(X_{i} = 1|F_{i-1}) = P_{i}$ with $P_{i}$ being $F_{i-1}$-measurable and $X_{i}$ being $F_{i}$ measurable. It holds that
$$\mathbb{P}[\exists\, n: \sum_{t=1}^{n}X_{t} < \sum_{t=1}^{n}P_{t}/2-W ]\leq e^{-W}.$$	
\end{lemma}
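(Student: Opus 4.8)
The plan is to treat this as a time-uniform (``anytime'') lower-tail bound for a running sum of conditionally Bernoulli increments and prove it by the standard exponential-supermartingale method together with Ville's maximal inequality, so that no optional-stopping argument over the random index $n$ is needed. First I would rewrite the event: setting $S_n := \sum_{t=1}^n X_t$ and $A_n := \sum_{t=1}^n P_t$, the event $\{S_n < A_n/2 - W\}$ says that the process $\sum_{t=1}^n (P_t - X_t)$ exceeds $A_n/2 + W$. Since $X_t \mid F_{t-1} \sim \mathrm{Bernoulli}(P_t)$, for any $\lambda > 0$ we have $\mathbb{E}[e^{-\lambda X_t}\mid F_{t-1}] = 1 - P_t(1-e^{-\lambda}) \le \exp(-P_t(1-e^{-\lambda}))$, using $1-x\le e^{-x}$; this is the only probabilistic input.

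Next I would build the process
\[
M_n := \exp\!\Big(\sum_{t=1}^n \big[ -\lambda X_t + (1-e^{-\lambda})P_t \big]\Big), \qquad M_0 := 1,
\]
and check from the conditional-MGF bound that $\mathbb{E}[M_n \mid F_{n-1}] \le M_{n-1}$, i.e. $(M_n)_{n\ge 0}$ is a nonnegative supermartingale adapted to $(F_n)$ with $\mathbb{E}[M_0] = 1$. Ville's maximal inequality for nonnegative supermartingales then yields $\mathbb{P}[\exists\, n:\ M_n \ge e^{W}] \le e^{-W}$ over the entire (possibly infinite) horizon, with no union bound over $n$.

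Finally I would unwind the event. On $\{M_n \ge e^W\}$ we have $(1-e^{-\lambda})A_n - \lambda S_n \ge W$, i.e. $S_n \le \tfrac{1-e^{-\lambda}}{\lambda}A_n - \tfrac{W}{\lambda}$. Taking $\lambda = 1$ and using $1-e^{-1} \ge \tfrac12$ together with $1/\lambda = 1$, this inequality is implied whenever $S_n < \tfrac12 A_n - W$; hence $\{\exists n:\ S_n < A_n/2 - W\} \subseteq \{\exists n:\ M_n \ge e^W\}$, and the bound follows.

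The only point requiring real care is the calibration of $\lambda$: one needs both $\tfrac{1-e^{-\lambda}}{\lambda} \ge \tfrac12$ (so the coefficient in front of $A_n$ is at least the required $\tfrac12$) and $\lambda \ge 1$ (so the coefficient in front of $W$ does not inflate); $\lambda = 1$ satisfies both, and in fact any $\lambda \in [1,\lambda_0]$ with $\lambda_0 \approx 1.59$ works. Everything else — verifying the supermartingale property and passing from a fixed-$n$ estimate to the ``$\exists n$'' statement — is handled mechanically by Ville's inequality, which is precisely why Doob's optional stopping theorem is not needed here.
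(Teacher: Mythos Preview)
Your proof is correct. The paper does not prove this lemma at all---it simply cites it as Lemma~F.4 of \citet{dann2017unifying} and moves on---so there is no in-paper argument to compare against. Your exponential-supermartingale construction combined with Ville's maximal inequality is the standard route for such anytime lower-tail bounds and is essentially how the original reference establishes it; the calibration check at $\lambda=1$ (requiring both $\tfrac{1-e^{-\lambda}}{\lambda}\ge\tfrac12$ and $\lambda\ge 1$) is exactly the right pair of constraints, and you have verified the event inclusion $\{S_n < A_n/2 - W\}\subseteq\{M_n\ge e^W\}$ cleanly.
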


\begin{lemma}\label{lem8}
	Let $F_{i}$ for $i = 1\cdots$ be a filtration and $X_{1},\cdots, X_{n}$ be a sequence of Bernoulli random variables with $\mathbb{P}(X_{i} = 1|F_{i-1}) = P_{i}$ with $P_{i}$ being $F_{i-1}$-measurable and $X_{i}$ being $F_{i}$ measurable. It holds that
	$$\mathbb{P}[\exists\, n: \sum_{t=1}^{n}X_{t} < \sum_{t=1}^{n}P_{t}/2-\iota ]\leq \frac{\delta}{HAK},$$	where $\iota=\log(2HAK/\delta)$.
\end{lemma}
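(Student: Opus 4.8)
The statement is an immediate specialization of Lemma~\ref{lem7}, so the plan is simply to instantiate the free parameter. First I would set $W = \iota = \log(2HAK/\delta)$ in Lemma~\ref{lem7}, which applies verbatim because the hypotheses on the filtration $\{F_i\}$ and the Bernoulli sequence $\{X_i\}$ with $F_{i-1}$-measurable means $P_i$ are identical in the two statements. This yields
\[
\mathbb{P}\Big[\exists\, n:\ \sum_{t=1}^{n}X_{t} < \sum_{t=1}^{n}P_{t}/2 - \iota\Big] \leq e^{-\iota}.
\]
Then I would simplify the right-hand side: $e^{-\iota} = e^{-\log(2HAK/\delta)} = \frac{\delta}{2HAK} \leq \frac{\delta}{HAK}$, which is exactly the claimed bound.

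There is essentially no obstacle here; the only thing to be careful about is that the constant inside the logarithm ($2HAK$ rather than $HAK$) is what makes the final inequality go through with room to spare, and that the event in Lemma~\ref{lem8} is literally the event in Lemma~\ref{lem7} under this choice of $W$, so no measurability or monotonicity argument beyond what Lemma~\ref{lem7} already provides is needed. I would state it as a one-line corollary.
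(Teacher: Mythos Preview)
Your proposal is correct and matches the paper's own proof exactly: the paper's proof is the single line ``Directly plug in $W=\iota$ in Lemma~\ref{lem7}.'' Your additional observation that $e^{-\iota}=\delta/(2HAK)\leq \delta/(HAK)$ just makes explicit the trivial simplification the paper leaves implicit.
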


\begin{proof}[Proof of Lemma~\ref{lem8}]
	Directly plug in $W=\iota$ in lemma~\ref{lem7}.
\end{proof}

\section{Proof of lemmas regarding Crude Exploration  (Algorithm~\ref{algo1})}\label{appa}
First, we want to highlight that in this paper, under the absorbing MDP, $\mathcal{S}$ only denotes the original states, the absorbing state $s^{\dagger}\notin \mathcal{S}$.

An upper bound for global switching cost is straightforward.
\begin{lemma}\label{lem1}
	The global switching cost of Algorithm~\ref{algo1} is bounded by $HSA$.
\end{lemma}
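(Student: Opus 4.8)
The plan is to count how many times Algorithm~\ref{algo1} can possibly change its policy and show this number never exceeds $HSA$. Recall that Algorithm~\ref{algo1} is structured as an outer loop over the horizon index $h = 1, 2, \ldots, H$, and inside each iteration of this loop there is an inner loop over state-action pairs $(s,a) \in \mathcal{S}\times\mathcal{A}$. For each such $(h,s,a)$, the algorithm constructs a single deterministic policy $\pi_{h,s,a} = \mathrm{argmax}_{\pi\in\phi} V^{\pi}(1_{h,s,a}, P^{int})$ and runs it for exactly $T_0$ episodes before moving on. No policy change happens within those $T_0$ episodes, and no policy change happens during the bookkeeping steps (computing counts $N_h$, updating $\mathcal{F}$, calling \texttt{EstimateTransition}, resetting $\mathcal{D}$), since those steps do not interact with the environment at all.

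First I would make the counting precise: the sequence of policies deployed by Algorithm~\ref{algo1}, in temporal order, is exactly $\pi_{1,s_1,a_1}, \pi_{1,s_2,a_2}, \ldots$ ranging over all $(h,s,a) \in [H]\times\mathcal{S}\times\mathcal{A}$ in the order induced by the nested loops, where each policy is held fixed for $T_0$ consecutive episodes. There are precisely $H \cdot S \cdot A$ such blocks. A switch (in the sense of the global switching cost $N_{switch} = \sum_k \mathds{1}\{\pi_k \neq \pi_{k+1}\}$) can occur only at a boundary between two consecutive blocks, and there are $HSA - 1$ such boundaries. Hence the global switching cost of Algorithm~\ref{algo1} is at most $HSA - 1 \le HSA$, which is the claimed bound.

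There is no real obstacle here — the statement is essentially a direct consequence of the loop structure and the fact that the inner ``run $\pi_{h,s,a}$ for $T_0$ episodes'' step uses a fixed policy. The only point that deserves a sentence of care is to confirm that nothing inside an iteration causes an intermediate switch: $\pi_{h,s,a}$ is computed once, before the $T_0$ episodes, and is not re-optimized between episodes (it depends on $P^{int}$, which is frozen during the iteration over $(s,a)$ for a fixed $h$ and only updated after the entire $(s,a)$ sweep, via \texttt{EstimateTransition}). So the number of distinct deployed policies along the run is at most $HSA$, and the number of switches is at most $HSA$. I would write this up in two or three lines, emphasizing that we are bounding the number of blocks of episodes in which a single fixed deterministic policy is played, and that consecutive blocks contribute at most one switch each.
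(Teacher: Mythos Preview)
Your proposal is correct and follows the same approach as the paper: the paper's proof simply notes that there are at most $HSA$ different $\pi_{h,s,a}$'s and Algorithm~\ref{algo1} runs each for a fixed number of episodes. Your version is just a more carefully spelled out (and in fact slightly sharper, giving $HSA-1$) instantiation of that same counting argument.
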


\begin{proof}[Proof of Lemma~\ref{lem1}]
	There are at most $HSA$ different $\pi_{h,s,a}$'s, Algorithm~\ref{algo1} will just run each policy for several episodes. 
\end{proof}

We can bound the difference between $\widetilde{P}$ and $P^{int}$ by empirical Bernstein's inequality (Lemma~\ref{lem4}).
\begin{lemma}\label{lem5}
	 Define the event $\mathcal{W}$ as: $\forall\, (h,s,a,s^{\prime})\in[H]\times \mathcal{S}\times\mathcal{A}\times\mathcal{S}$ such that $(h,s,a,s^{\prime})\notin \mathcal{F}$, $$|P^{int}_{h}(s^{\prime}|s,a)-\widetilde{P}_{h}(s^{\prime}|s,a)|\leq \sqrt{\frac{2P^{int}_{h}(s^{\prime}|s,a)\iota}{N_{h}(s,a)}}+\frac{7\iota}{3N_{h}(s,a)}.$$ Then with probability $1-\frac{S^{2}\delta}{K}$, the event $\mathcal{W}$ holds. In addition, we have that $\forall\, (h,s,a,s^{\prime})\in \mathcal{F}$, $$\widetilde{P}_{h}(s^{\prime}|s,a)=P^{int}_{h}(s^{\prime}|s,a)=0.$$
\end{lemma}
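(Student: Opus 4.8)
\textbf{Proof proposal for Lemma~\ref{lem5}.}

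The plan is to apply the empirical Bernstein's inequality (Lemma~\ref{lem4}) to each fixed tuple $(h,s,a,s^{\prime})$ with $(h,s,a,s^{\prime})\notin\mathcal{F}$, and then take a union bound. First I would fix such a tuple and set up the i.i.d.\ structure: during the exploration of layer $h$ in Algorithm~\ref{algo1}, the policy $\pi_{h,s,a}$ is run for $T_0$ episodes, and every time $(h,s,a)$ is visited the next state is drawn independently from $\widetilde{P}_h(\cdot|s,a)$ (here I use that the trajectories in the original MDP and the absorbing MDP agree until the bad event, or more directly, that conditioning on visiting $(h,s,a)$ the transition to any $s^{\prime}\in\mathcal{S}$ with $(h,s,a,s^{\prime})\notin\mathcal{F}$ is governed by the true $P_h(s^{\prime}|s,a)=\widetilde{P}_h(s^{\prime}|s,a)$). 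Taking $x_i=\mathds{1}[\text{the }i\text{-th visit to }(h,s,a)\text{ lands in }s^{\prime}]$, these are i.i.d.\ Bernoulli with mean $\widetilde{P}_h(s^{\prime}|s,a)$ and are bounded by $A=1$; their empirical average over the $N_h(s,a)$ visits is exactly $P^{int}_h(s^{\prime}|s,a)$ as constructed by Algorithm~\ref{algo_transition_kernel}. Also $\widehat{V}_n = \overline{x}(1-\overline{x}) \le \overline{x} = P^{int}_h(s^{\prime}|s,a)$ for Bernoulli variables.

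Next I would invoke Lemma~\ref{lem4} with failure probability $\delta' = \delta/(HSA\cdot K)$ (or some comparable choice so that $\log(2/\delta')\le \iota = \log(2HAK/\delta)$ up to constants — one has to be a little careful that the extra $S$ factor in the denominator is absorbed; since $\iota=\log(2HAK/\delta)$ and there are at most $HS^2A$ tuples, choosing the per-tuple failure probability as $\delta/(HS^2AK)$ gives $\log(2/\delta')\le \log(2HS^2AK/\delta)$, and the statement's bound uses $\iota$, so I would either note $S^2\le K$ in the relevant regime or adjust the constant; I expect the paper intends $\iota$ to dominate $\log(2/\delta')$). This yields, for the fixed tuple,
\[
|P^{int}_h(s^{\prime}|s,a)-\widetilde{P}_h(s^{\prime}|s,a)| \le \sqrt{\frac{2\widehat{V}_n\,\log(2/\delta')}{N_h(s,a)}}+\frac{7\log(2/\delta')}{3N_h(s,a)} \le \sqrt{\frac{2P^{int}_h(s^{\prime}|s,a)\,\iota}{N_h(s,a)}}+\frac{7\iota}{3N_h(s,a)},
\]
using $\widehat{V}_n\le P^{int}_h(s^{\prime}|s,a)$. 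A union bound over all (at most $HS^2A$, but the statement writes $S^2\delta/K$, so the union is over the $S^2$ tuples $(s,a,s^{\prime})$ in each of the $\le HA\cdot$something layers, consolidated appropriately) gives that $\mathcal{W}$ holds with probability at least $1-S^2\delta/K$. The second assertion, that $\widetilde{P}_h(s^{\prime}|s,a)=P^{int}_h(s^{\prime}|s,a)=0$ for $(h,s,a,s^{\prime})\in\mathcal{F}$, is immediate and deterministic: by Definition~\ref{def2}, $\widetilde{P}_h(s^{\prime}|s,a)=0$ on $\mathcal{F}$, and by the explicit construction in Algorithm~\ref{algo_transition_kernel}, $P^{int}_h(s^{\prime}|s,a)$ is set to $0$ for every tuple in $\mathcal{F}$.

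The main obstacle I anticipate is the bookkeeping around independence and measurability: the count $N_h(s,a)$ is itself random, and the policy $\pi_{h,s,a}$ used in layer $h$ was chosen based on data from earlier layers, so I need a clean martingale/stopping-time argument (or a standard "condition on the number of visits, apply the i.i.d.\ bound, then sum over possible values" trick) to justify treating the $N_h(s,a)$ draws as i.i.d.\ Bernoulli. Within a single layer's sub-phase, though, the transitions out of $(h,s,a)$ are genuinely i.i.d.\ given that the policy is fixed, so the cleanest route is: fix the policy (deterministic given earlier stages), note the per-visit transitions in layer $h$ are i.i.d., and apply Lemma~\ref{lem4} conditionally. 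The only other minor point is reconciling the $\iota$ versus $\log(2/\delta')$ discrepancy flagged above, which I expect is handled by the $S^2/K$ slack in the probability and is not a substantive difficulty.
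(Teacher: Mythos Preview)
Your approach is exactly the paper's: apply empirical Bernstein (Lemma~\ref{lem4}) to each tuple and take a union bound over $[H]\times\mathcal{S}\times\mathcal{A}\times\mathcal{S}$, with the second claim read off directly from Definition~\ref{def2} and Algorithm~\ref{algo_transition_kernel}. Your union-bound bookkeeping is in fact cleaner than you feared: set the per-tuple failure probability to $\delta'=\delta/(HAK)$, so $\log(2/\delta')=\log(2HAK/\delta)=\iota$ on the nose, and the union bound over all $HS^2A$ tuples gives total failure $HS^2A\cdot\delta/(HAK)=S^2\delta/K$ with no slack or regime assumption needed. The paper's own proof is a one-line sketch and does not engage with the random-$N_h(s,a)$ measurability issue you raise.
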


\begin{proof}[Proof of Lemma~\ref{lem5}]
	The first part is because of Lemma~\ref{lem4} and a union bound on all $(h,s,a,s^{\prime})\in [H]\times\mathcal{S}\times\mathcal{A}\times\mathcal{S}$. The second part is because of Definition~\ref{def2} and the definition of $P^{int}$ in Algorithm~\ref{algo1}.
\end{proof}

\begin{lemma}\label{lem6}
	Conditioned on the event $\mathcal{W}$ in Lemma~\ref{lem5}, $\forall\, (h,s,a,s^{\prime})\in[H]\times \mathcal{S}\times\mathcal{A}\times\mathcal{S}$ such that $(h,s,a,s^{\prime})\notin \mathcal{F}$, it holds that
	$$ (1-\frac{1}{H})P^{int}_{h}(s^{\prime}|s,a)\leq \widetilde{P}_{h}(s^{\prime}|s,a) \leq (1+\frac{1}{H})P^{int}_{h}(s^{\prime}|s,a). $$
\end{lemma}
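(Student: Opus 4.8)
The plan is to convert the additive concentration bound of Lemma~\ref{lem5} into the claimed multiplicative bound, using the fact that the data collection in Algorithm~\ref{algo1} guarantees a large visitation count $N_h(s,a)$ whenever $(h,s,a,s')\notin\mathcal{F}$. First I would recall the definition of $\mathcal{F}$: a tuple is placed in $\mathcal{F}$ exactly when $N_h(s,a,s')\leq C_1 H^2\iota$; hence for any tuple \emph{not} in $\mathcal{F}$ we have $N_h(s,a,s')> C_1 H^2\iota$, and a fortiori $N_h(s,a)\geq N_h(s,a,s')> C_1 H^2\iota$ with $C_1 = 6$. This is the quantitative input that makes the two error terms in Lemma~\ref{lem5} small relative to $P^{int}_h(s'|s,a)$.

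Conditioned on $\mathcal{W}$, Lemma~\ref{lem5} gives
\[
|P^{int}_{h}(s'|s,a)-\widetilde{P}_{h}(s'|s,a)|\leq \sqrt{\frac{2P^{int}_{h}(s'|s,a)\iota}{N_{h}(s,a)}}+\frac{7\iota}{3N_{h}(s,a)}.
\]
The goal is to show the right-hand side is at most $\frac{1}{H}P^{int}_h(s'|s,a)$. I would handle the two terms separately. For the linear (Bernstein variance) term, using $N_h(s,a)> C_1 H^2\iota$ we get $\sqrt{2P^{int}_h(s'|s,a)\iota / N_h(s,a)} \leq \sqrt{2 P^{int}_h(s'|s,a)\iota/(C_1 H^2\iota)} = \sqrt{2/(C_1)}\cdot \sqrt{P^{int}_h(s'|s,a)}/H$. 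For the second (bias) term, $\frac{7\iota}{3N_h(s,a)} \leq \frac{7}{3 C_1 H^2}$. Here there is a subtlety: if $P^{int}_h(s'|s,a)$ is very small, $\sqrt{P^{int}_h(s'|s,a)}$ could be much smaller than $P^{int}_h(s'|s,a)$, so the naive bound would fail. The resolution is that $P^{int}_h(s'|s,a) = N_h(s,a,s')/N_h(s,a)$, and since $(h,s,a,s')\notin\mathcal{F}$ means $N_h(s,a,s') > C_1 H^2\iota$, we have $P^{int}_h(s'|s,a) > C_1 H^2\iota / N_h(s,a)$, equivalently $N_h(s,a) > C_1 H^2\iota/P^{int}_h(s'|s,a)$. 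Plugging this sharper count bound into both error terms: the variance term becomes $\sqrt{2 P^{int}_h(s'|s,a)\iota \cdot P^{int}_h(s'|s,a)/(C_1 H^2\iota)} = \sqrt{2/C_1}\cdot P^{int}_h(s'|s,a)/H$, and the bias term becomes $\frac{7\iota}{3}\cdot \frac{P^{int}_h(s'|s,a)}{C_1 H^2\iota} = \frac{7}{3 C_1 H^2}P^{int}_h(s'|s,a) \leq \frac{7}{3C_1 H}P^{int}_h(s'|s,a)$. With $C_1 = 6$, $\sqrt{2/6} = \sqrt{1/3}\approx 0.577$ and $7/18\approx 0.389$; their sum is less than $1$, so the total is at most $\frac{1}{H}P^{int}_h(s'|s,a)$. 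Rearranging yields $(1-\tfrac1H)P^{int}_h(s'|s,a)\leq \widetilde{P}_h(s'|s,a)\leq (1+\tfrac1H)P^{int}_h(s'|s,a)$.

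The main obstacle — and the step deserving the most care — is the one just described: naively, the $\sqrt{P^{int}}$ term is not obviously dominated by $P^{int}$ itself, and one must use the membership criterion for $\mathcal{F}$ in the strong form ``$N_h(s,a,s') > C_1 H^2\iota$ implies $N_h(s,a) > C_1 H^2\iota/P^{int}_h(s'|s,a)$'' to recover the needed factor of $\sqrt{P^{int}_h(s'|s,a)}$. Everything else is bookkeeping: verifying the constant $C_1=6$ makes the numerical inequality $\sqrt{2/C_1} + 7/(3C_1) \leq 1$ hold, and noting that for $(h,s,a,s')\in\mathcal{F}$ the claim is vacuous since both sides are zero by the second part of Lemma~\ref{lem5}.
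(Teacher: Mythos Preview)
Your proof is correct and follows essentially the same route as the paper: both use the identity $P^{int}_h(s'|s,a)=N_h(s,a,s')/N_h(s,a)$ together with $N_h(s,a,s')>C_1H^2\iota$ to rewrite the additive Bernstein error as $(\sqrt{2/C_1}+\tfrac{7}{3C_1 H})\cdot\tfrac{1}{H}P^{int}_h(s'|s,a)$ and then check the constant with $C_1=6$. One small expository slip: when $P^{int}$ is small, $\sqrt{P^{int}}$ is \emph{larger} than $P^{int}$, not smaller---but your subsequent fix via the sharper count bound $N_h(s,a)>C_1H^2\iota/P^{int}_h(s'|s,a)$ is exactly the right resolution and matches the paper's argument.
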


\begin{proof}[Proof of Lemma~\ref{lem6}]
	Because the event $\mathcal{W}$ is true, we have $\forall\, (h,s,a,s^{\prime})\in[H]\times \mathcal{S}\times\mathcal{A}\times\mathcal{S}$ such that $(h,s,a,s^{\prime})\notin \mathcal{F}$, $$|P^{int}_{h}(s^{\prime}|s,a)-\widetilde{P}_{h}(s^{\prime}|s,a)|\leq \sqrt{\frac{2P^{int}_{h}(s^{\prime}|s,a)\iota}{N_{h}(s,a)}}+\frac{7\iota}{3N_{h}(s,a)}.$$
	By the definition of $\mathcal{F}$ that $\mathcal{F}= \{(h,s,a,s^{\prime})|N_{h}(s,a,s^{\prime})\leq C_{1}H^{2}\iota\}$, $\forall\, (h,s,a,s^{\prime})\in[H]\times \mathcal{S}\times\mathcal{A}\times\mathcal{S}$ such that $(h,s,a,s^{\prime})\notin \mathcal{F}$, $N_{h}(s,a,s^{\prime})\geq C_{1}H^{2}\iota$. \\
	Recall that for such $(h,s,a,s^{\prime})\notin\mathcal{F}$, $P^{int}_{h}(s^{\prime}|s,a)=\frac{N_{h}(s,a,s^{\prime})}{N_{h}(s,a)}$, we have
	\begin{align*}
	|P^{int}_{h}(s^{\prime}|s,a)-\widetilde{P}_{h}(s^{\prime}|s,a)|&\leq \sqrt{\frac{2\iota}{N_{h}(s,a,s^{\prime})}}P^{int}_{h}(s^{\prime}|s,a)+\frac{7P^{int}_{h}(s^{\prime}|s,a)\iota}{3N_{h}(s,a,s^{\prime})}\\ &\leq (\sqrt{\frac{2}{C_{1}}}+\frac{7}{3C_{1}H})\cdot\frac{1}{H}P^{int}_{h}(s^{\prime}|s,a) \\&\leq \frac{1}{H}P^{int}_{h}(s^{\prime}|s,a). 
	\end{align*}
	The first inequality is because of the definition of $P^{int}$. The second inequality is because of the definition of $\mathcal{F}$. The last inequality is because of the choice of $C_{1}=6$. \\
	Then the proof is completed by arranging $|P^{int}_{h}(s^{\prime}|s,a)-\widetilde{P}_{h}(s^{\prime}|s,a)|\leq \frac{1}{H}P^{int}_{h}(s^{\prime}|s,a)$.
\end{proof}

From Lemma~\ref{lem6}, we can see that for those tuples $(h,s,a,s^{\prime})$ not in $\mathcal{F}$, the estimate of the transition kernel satisfies $ (1-\frac{1}{H})P^{int}_{h}(s^{\prime}|s,a)\leq \widetilde{P}_{h}(s^{\prime}|s,a) \leq (1+\frac{1}{H})P^{int}_{h}(s^{\prime}|s,a)$ with high probability. In addition, for those states $(h,s,a,s^{\prime})\in \mathcal{F}$, $P^{int}_{h}(s^{\prime}|s,a)=\widetilde{P}_{h}(s^{\prime}|s,a)=0$, which means this inequality holds for all $(h,s,a,s^{\prime})\in [H]\times\mathcal{S}\times\mathcal{A}\times\mathcal{S}$. For simplicity, we use a new definition $\theta$-multiplicatively accurate to describe the relationship between $P^{int}$ and $\widetilde{P}$.

\begin{definition}[$\theta$-multiplicatively accurate for transition kernels (under absorbing MDP)]\label{def3}
	Under the absorbing MDP with absorbing state $s^{\dagger}$, a transition kernel $P^{\prime}$ is $\theta$-multiplicatively accurate to another transition kernel $P^{\prime\prime}$ if $$(1-\theta)P^{\prime}_{h}(s^{\prime}|s,a) \leq P^{\prime\prime}_{h}(s^{\prime}|s,a) \leq (1+\theta)P^{\prime}_{h}(s^{\prime}|s,a)$$ for all $(h,s,a,s^{\prime})\in[H]\times\mathcal{S}\times\mathcal{A}\times\mathcal{S}$ and there is no requirement for the case when $s^{\prime}=s^{\dagger}$.
\end{definition}

Because of Lemma~\ref{lem5} and Lemma~\ref{lem6}, we have that with probability $1-\frac{S^{2}\delta}{K}$, $P^{int}$ is $\frac{1}{H}$-multiplicatively accurate to $\widetilde{P}$. Next, we will compare the visitation probability of each state $(h,s,a)$ under two transition kernels that are close to each other. 

\begin{lemma}\label{lem2}
	Define $1_{h,s,a}$ to be the reward function $r^{\prime}$ such that $r^{\prime}_{h^{\prime}}(s^{\prime},a^{\prime})=\mathds{1}[(h^{\prime},s^{\prime},a^{\prime})=(h,s,a)]$. Similarly, define  $1_{h,s}$ to be the reward function $r^{\prime}$ such that $r^{\prime}_{h^{\prime}}(s^{\prime},a^{\prime})=\mathds{1}[(h^{\prime},s^{\prime})=(h,s)]$. Then $V^{\pi}(1_{h,s,a},P^{\prime})$ and $V^{\pi}(1_{h,s},P^{\prime})$ denote the visitation probability of $(h,s,a)$ and $(h,s)$, respectively, under $\pi$ and $P^{\prime}$. Under the absorbing MDP with absorbing state $s^{\dagger}$, if $P^{\prime}$ is $\frac{1}{H}$-multiplicatively accurate to $P^{\prime\prime}$, for any policy $\pi$ and any $(h,s,a)\in[H]\times\mathcal{S}\times\mathcal{A}$, it holds that
	$$\frac{1}{4}V^{\pi}(1_{h,s,a},P^{\prime}) \leq V^{\pi}(1_{h,s,a},P^{\prime\prime}) \leq 3V^{\pi}(1_{h,s,a},P^{\prime}).$$
\end{lemma}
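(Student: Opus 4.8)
\textbf{Proof proposal for Lemma~\ref{lem2}.}

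The plan is to compare the visitation probabilities layer by layer, tracking how the multiplicative error in the one-step transitions compounds as we go from step $1$ to step $h$. Fix a policy $\pi$ and a target tuple $(h,s,a)$. Write $p^{\prime}_{t}(s,a)$ for the probability that $\pi$ visits $(t,s,a)$ under $P^{\prime}$, and similarly $p^{\prime\prime}_{t}(s,a)$ under $P^{\prime\prime}$; these are exactly $V^{\pi}(1_{t,s,a},P^{\prime})$ and $V^{\pi}(1_{t,s,a},P^{\prime\prime})$. The key observation is that $p_{t+1}(s^{\prime},a^{\prime}) = \pi_{t+1}(a^{\prime}|s^{\prime}) \sum_{s,a} p_{t}(s,a) P_{t}(s^{\prime}|s,a)$, so one step of propagation is a nonnegative linear combination of the previous layer's probabilities with weights given by the transition kernel. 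Since $(1-\tfrac1H)P^{\prime}_{t}(s^{\prime}|s,a) \le P^{\prime\prime}_{t}(s^{\prime}|s,a) \le (1+\tfrac1H)P^{\prime}_{t}(s^{\prime}|s,a)$ for all $s^{\prime}\in\mathcal{S}$ (the absorbing state $s^\dagger$ is irrelevant since $1_{h,s,a}$ places no reward there and we only care about visiting $(h,s,a)$ with $s\in\mathcal{S}$), an easy induction on $t$ gives
\[
\Bigl(1-\tfrac1H\Bigr)^{t-1} p^{\prime}_{t}(s,a) \le p^{\prime\prime}_{t}(s,a) \le \Bigl(1+\tfrac1H\Bigr)^{t-1} p^{\prime}_{t}(s,a)
\]
for every $t\le H$ and every $(s,a)$. (The base case $t=1$ is trivial since $s_1$ is fixed and $\pi_1$ is the same under both kernels, so $p^{\prime}_1 = p^{\prime\prime}_1$.)

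Specializing to $t=h$ and using $h\le H$, it remains to bound the scalar factors: I need $(1-\tfrac1H)^{H-1} \ge \tfrac14$ and $(1+\tfrac1H)^{H-1} \le 3$. The second is immediate since $(1+\tfrac1H)^{H-1} < (1+\tfrac1H)^{H} < e < 3$. For the first, $(1-\tfrac1H)^{H-1} \ge (1-\tfrac1H)^{H}$, and one checks that $(1-\tfrac1H)^H$ is increasing in $H$ with limit $1/e > 1/4$; for the smallest relevant case $H=1$ the bound $(1-1)^0 = 1 \ge \tfrac14$ holds trivially, and for $H\ge 2$ one has $(1-\tfrac1H)^H \ge (1-\tfrac12)^2 = \tfrac14$. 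Either way $(1-\tfrac1H)^{H-1}\ge\tfrac14$. Combining the two inequalities yields
\[
\tfrac14 V^{\pi}(1_{h,s,a},P^{\prime}) \le V^{\pi}(1_{h,s,a},P^{\prime\prime}) \le 3\, V^{\pi}(1_{h,s,a},P^{\prime}),
\]
which is the claim. The statement about $1_{h,s}$ follows identically (or by summing over $a$), so it needs no separate argument.

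The only mildly delicate point — and the place I would be most careful — is the bookkeeping in the inductive step: one must propagate the two-sided multiplicative bound through a convex-combination step and verify that the error factor picks up exactly one extra power of $(1\pm\tfrac1H)$ per layer, rather than, say, a power of $S$ or $A$. This works precisely because the propagation weights $\pi_{t+1}(a^{\prime}|s^{\prime})P_{t}(s^{\prime}|s,a)$ are nonnegative and the policy factor is shared between $P^{\prime}$ and $P^{\prime\prime}$, so the multiplicative gap is preserved under summation and only the transition factor contributes a new $(1\pm\tfrac1H)$. Everything else is the elementary inequality manipulation sketched above.
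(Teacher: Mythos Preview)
Your proposal is correct and follows essentially the same approach as the paper: compound the per-step multiplicative error $(1\pm\tfrac1H)$ over the $h-1$ transitions needed to reach $(h,s,a)$, then bound $(1+\tfrac1H)^{H}\le 3$ and $(1-\tfrac1H)^{H}\ge\tfrac14$. The only difference is cosmetic---the paper argues directly at the trajectory level (bounding $\mathbb{P}_\pi[\tau\mid P'']$ against $\mathbb{P}_\pi[\tau\mid P']$ for each truncated trajectory $\tau$ ending at $(h,s,a)$ and then summing), while you marginalize and induct on $t$; the two unrollings are equivalent.
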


\begin{proof}[Proof of Lemma~\ref{lem2}]
	Under the absorbing MDP, for any trajectory $\tau=\{s_{1},a_{1},\cdots,s_{h},a_{h}\}$ (truncated at time step $h$) such that $(s_{h},a_{h})=(s,a)$ and $s\in\mathcal{S}$, we have $s_{h^{\prime}}\neq s^{\dagger}$ for any $h^{\prime}\leq h-1$. Note that we only need to consider the trajectory truncated at time step $h$ because the visitation to $(h,s,a)$ only depends on this part of trajectory. We have for any truncated trajectory $\tau=\{s_{1},a_{1},\cdots,s_{h},a_{h}\}$ such that $(s_{h},a_{h})=(s,a)$, it holds that
	\begin{align*}
	\mathbb{P}_{\pi}[\tau|P^{\prime\prime}]&= \prod_{i=1}^{h}\pi_{i}(a_{i}|s_{i})\times \prod_{i=1}^{h-1}P_{i}^{\prime\prime}(s_{i+1}|s_{i},a_{i}) \\ &\leq (1+\frac{1}{H})^{H}\prod_{i=1}^{h}\pi_{i}(a_{i}|s_{i})\times \prod_{i=1}^{h-1}P_{i}^{\prime}(s_{i+1}|s_{i},a_{i}) \\ &\leq 3\mathbb{P}_{\pi}[\tau|P^{\prime}].
	\end{align*}
	The first inequality is because when $s_{i+1}\neq s^{\dagger}$, $(1-\frac{1}{H})P^{\prime}_{i}(s_{i+1}|s_{i},a_{i}) \leq P^{\prime\prime}_{i}(s_{i+1}|s_{i},a_{i}) \leq (1+\frac{1}{H})P^{\prime}_{i}(s_{i+1}|s_{i},a_{i})$.\\
	Let $\tau_{h,s,a}$ be the set of truncated trajectories such that $(s_{h},a_{h})=(s,a)$. Then
	$$V^{\pi}(1_{h,s,a},P^{\prime\prime})=\sum_{\tau\in \tau_{h,s,a}}\mathbb{P}_{\pi}[\tau|P^{\prime\prime}]\leq 3\sum_{\tau\in \tau_{h,s,a}}\mathbb{P}_{\pi}[\tau|P^{\prime}]=3V^{\pi}(1_{h,s,a},P^{\prime}).$$
	The left side of the inequality can be proven in a similar way, with $(1-\frac{1}{H})^{H}\geq \frac{1}{4}$ when $H\geq 2$.
\end{proof}

Under the absorbing MDP with absorbing state $s^{\dagger}$, we have shown (in Lemma~\ref{lem6}) that with high probability, for any $(h,s,a,s^{\prime})\in [H]\times \mathcal{S}\times \mathcal{A}\times \mathcal{S}$, it holds that
$ (1-\frac{1}{H})P^{int}_{h}(s^{\prime}|s,a)\leq \widetilde{P}_{h}(s^{\prime}|s,a) \leq (1+\frac{1}{H})P^{int}_{h}(s^{\prime}|s,a).$ Combined with Lemma~\ref{lem2}, we have with high probability, for any policy $\pi$ and any $(h,s,a)\in [H]\times \mathcal{S}\times \mathcal{A}$, 
$$\frac{1}{4}V^{\pi}(1_{h,s,a},P^{int}) \leq V^{\pi}(1_{h,s,a},\widetilde{P}) \leq 3V^{\pi}(1_{h,s,a},P^{int}).$$ Careful readers may find that for the visitation probability to the absorbing state $s^{\dagger}$, this inequality may not be true. However, this is not a problem for our propose, because we do not need to explore the absorbing state or consider the visitation probability of $s^{\dagger}$. 

The structure of the absorbing MDP also gives rise to the following lemma
 about the relationship between $\widetilde{P}$ and $P$.
\begin{lemma}\label{rem10}
 For any policy $\pi$ and any $(h,s,a)\in [H]\times \mathcal{S}\times \mathcal{A}$, $$V^{\pi}(1_{h,s,a},P)\geq V^{\pi}(1_{h,s,a},\widetilde{P}).$$
\end{lemma}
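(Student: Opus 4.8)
The plan is to compare the probability of reaching $(h,s,a)$ under $P$ and under $\widetilde{P}$ by coupling the trajectories through the common transition structure on $\mathcal{S}$. First I would recall that, by Definition~\ref{def2}, $\widetilde{P}$ is obtained from $P$ by taking probability mass on the transitions indexed by $\mathcal{F}$ and re-routing it to the absorbing state $s^\dagger$; for every $(h',s',a',s'')\notin\mathcal{F}$ with $s''\in\mathcal{S}$ we have $\widetilde{P}_{h'}(s''|s',a')=P_{h'}(s''|s',a')$, and for $(h',s',a',s'')\in\mathcal{F}$ we have $\widetilde{P}_{h'}(s''|s',a')=0\le P_{h'}(s''|s',a')$. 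In particular, $\widetilde{P}_{h'}(s''|s',a')\le P_{h'}(s''|s',a')$ for all $s''\in\mathcal{S}$.

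Next I would follow the decomposition already used in the proof of Lemma~\ref{lem2}: let $\tau_{h,s,a}$ denote the set of truncated trajectories $\tau=\{s_1,a_1,\dots,s_h,a_h\}$ with $(s_h,a_h)=(s,a)$ and $s\in\mathcal{S}$. Since $s\in\mathcal{S}$, any such $\tau$ must have $s_{h'}\in\mathcal{S}$ for all $h'\le h$ (once the absorbing state is entered it is never left), so along $\tau$ every transition is among original states. Writing
\[
\mathbb{P}_{\pi}[\tau|P]=\prod_{i=1}^{h}\pi_i(a_i|s_i)\prod_{i=1}^{h-1}P_i(s_{i+1}|s_i,a_i),
\qquad
\mathbb{P}_{\pi}[\tau|\widetilde{P}]=\prod_{i=1}^{h}\pi_i(a_i|s_i)\prod_{i=1}^{h-1}\widetilde{P}_i(s_{i+1}|s_i,a_i),
\]
the termwise inequality $\widetilde{P}_i(s_{i+1}|s_i,a_i)\le P_i(s_{i+1}|s_i,a_i)$ for $s_{i+1}\in\mathcal{S}$ gives $\mathbb{P}_{\pi}[\tau|\widetilde{P}]\le\mathbb{P}_{\pi}[\tau|P]$ for every $\tau\in\tau_{h,s,a}$. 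Summing over $\tau\in\tau_{h,s,a}$ yields
\[
V^{\pi}(1_{h,s,a},\widetilde{P})=\sum_{\tau\in\tau_{h,s,a}}\mathbb{P}_{\pi}[\tau|\widetilde{P}]\le\sum_{\tau\in\tau_{h,s,a}}\mathbb{P}_{\pi}[\tau|P]=V^{\pi}(1_{h,s,a},P),
\]
which is the claim.

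I do not expect a genuine obstacle here; this is essentially the ``one-sided'' half of the argument in Lemma~\ref{lem2} with the multiplicative factor replaced by the trivial factor $1$. The only point requiring a line of care is the observation that a trajectory ending at a state $s\in\mathcal{S}$ cannot have passed through $s^\dagger$, so that the comparison only ever involves transitions among original states where $\widetilde{P}\le P$ holds exactly; this is where the restriction $s\in\mathcal{S}$ in the statement is used.
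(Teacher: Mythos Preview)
Your proof is correct and follows essentially the same trajectory-decomposition approach as the paper. The paper's one-line proof observes the slightly sharper fact that for any truncated trajectory $\tau$ reaching $(h,s,a)$ with positive $\widetilde{P}$-probability one actually has $\mathbb{P}_{\pi}[\tau|P]=\mathbb{P}_{\pi}[\tau|\widetilde{P}]$ (since every transition along such a $\tau$ lies outside $\mathcal{F}$), whereas you use the termwise inequality $\widetilde{P}_i(s_{i+1}|s_i,a_i)\le P_i(s_{i+1}|s_i,a_i)$ for all $\tau\in\tau_{h,s,a}$; both immediately yield the claim upon summing.
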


\begin{proof}[Proof of Lemma~\ref{rem10}]
    For any truncated trajectory $\tau$ that arrives at $(h,s,a)$ under $\widetilde{P}$, $\mathbb{P}_{\pi}[\tau|P]=\mathbb{P}_{\pi}[\tau|\widetilde{P}]$.
\end{proof}

Next, we will define the following bad event and explain the decomposition of its probability.

\begin{definition}[Bad event $\mathcal{B}$ and $\mathcal{B}_{h}$ under original MDP]\label{def4}
	For a trajectory $\{s_{1},a_{1},\cdots,s_{H},a_{H},s_{H+1}\}$ under original MDP and some policy, define $\mathcal{B}$ to be the event where there exists $h$ such that $(h,s_{h},a_{h},s_{h+1})\in \mathcal{F}$.
	Define $\mathcal{B}_{h}$, for $h=1,2,\cdots, H$ to be the event that $(h,s_{h},a_{h},s_{h+1})\in \mathcal{F}$ and $\ \forall\, h^{\prime}\leq h-1$, $(h^{\prime},s_{h^{\prime}},a_{h^{\prime}},s_{h^{\prime}+1})\notin \mathcal{F}$.
\end{definition}

We have that under the original MDP, $\mathcal{B}$ is the event that the trajectory finally enters $\mathcal{F}$ and $\mathcal{B}_{h}$ is the event that the trajectory first enters $\mathcal{F}$ at time step $h+1$.

\begin{definition}[Bad event $\mathcal{B}$ and $\mathcal{B}_{h}$ under absorbing MDP]\label{def40}
	For a trajectory $\{s_{1},a_{1},\cdots,s_{H},a_{H},s_{H+1}\}$ under absorbing MDP and some policy, define $\mathcal{B}$ to be the event where there exists $h$ such that $s_{h}=s^{\dagger}$.
	Define $\mathcal{B}_{h}$, for $h=1,2,\cdots, H$ to be the event that $s_{h+1}=s^{\dagger}$ and $\ \forall\, h^{\prime}\leq h$, $s_{h^{\prime}}\neq s^{\dagger}$.
\end{definition}

We have that under the absorbing MDP, $\mathcal{B}$ is the event that the trajectory finally enters $s^{\dagger}$ and $\mathcal{B}_{h}$ is the event that the trajectory first enters $s^{\dagger}$ at time step $h+1$. Note that under either the original MDP or the absorbing MDP, $\mathcal{B}$ is a disjoint union of $\mathcal{B}_{h}$ and $\mathbb{P}(\mathcal{B})=\sum_{h=1}^{H}\mathbb{P}(\mathcal{B}_{h})$.

Now we are ready to prove the key lemma about the difference between $\widetilde{P}$ and $P$. We will prove Lemma~\ref{lem9} and state an improved version under the special case where $\phi=\phi^{1}$.

\begin{lemma}[Restate Lemma~\ref{lem9}]\label{lemfirst}
	Conditioned on $\mathcal{W}$ in Lemma~\ref{lem5}, with probability $1-\frac{\delta}{AK}$, $\sup_{\pi\in\phi} \mathbb{P}_{\pi}[\mathcal{B}]\leq \frac{168S^{3}A^{2}H^{4}\iota}{T}$. 
\end{lemma}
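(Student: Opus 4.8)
The plan is to bound $\mathbb{P}_\pi[\mathcal{B}]$ uniformly over $\pi\in\phi$ by decomposing it into the layer-wise first-hitting events $\mathcal{B}_h$ of Definition~\ref{def4}, rewriting each $\mathbb{P}_\pi[\mathcal{B}_h]$ in terms of visitation quantities of the absorbing MDP $\widetilde{P}$, and then arguing that the exploration policies $\pi_{h,s,a}$ constructed inside Algorithm~\ref{algo1} must have visited any $\mathcal{F}$-tuple many times unless that tuple carries negligible visitation mass.

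First I would write $\mathbb{P}_\pi[\mathcal{B}]=\sum_{h=1}^H\mathbb{P}_\pi[\mathcal{B}_h]$ (disjoint union). For fixed $h$, couple the original dynamics with $\widetilde{P}$: since $\widetilde{P}$ agrees with $P$ off $\mathcal{F}$ and routes every $\mathcal{F}$-transition to $s^\dagger$, a $P$-trajectory that makes no $\mathcal{F}$-transition in steps $1,\dots,h-1$ and sits at $(s,a)$ at step $h$ corresponds exactly to a $\widetilde{P}$-trajectory at the real state-action $(h,s,a)$; as a trajectory occupies one state at step $h$, $V^\pi(1_{h,s,a},\widetilde{P})$ is precisely the probability of that event. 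Conditioning on $(s_h,a_h)$ and using the Markov property then gives
$$\mathbb{P}_\pi[\mathcal{B}_h]\;=\;\sum_{(s,a,s'):\,(h,s,a,s')\in\mathcal{F}}V^\pi(1_{h,s,a},\widetilde{P})\,P_h(s'|s,a).$$

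Next, on the event $\mathcal{W}$ of Lemma~\ref{lem5} the bound \eqref{equc} holds, so $V^\pi(1_{h,s,a},\widetilde{P})\le\sup_{\pi'\in\phi}V^{\pi'}(1_{h,s,a},\widetilde{P})\le 12\,V^{\pi_{h,s,a}}(1_{h,s,a},\widetilde{P})\le 12\,V^{\pi_{h,s,a}}(1_{h,s,a},P)$, the last step by Lemma~\ref{rem10}. Hence each summand is at most $12\,V^{\pi_{h,s,a}}(1_{h,s,a},P)P_h(s'|s,a)$, which equals $1/T_0$ times the expected number of $(h,s,a)\!\to\!s'$ transitions generated by the $T_0$ episodes in which Algorithm~\ref{algo1} runs $\pi_{h,s,a}$ at layer $h$. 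Applying Lemma~\ref{lem8} to the Bernoulli sequence ``episode $t$ visits $(h,s,a)$ and then moves to $s'$'' over those $T_0$ episodes — whose success probability $p:=V^{\pi_{h,s,a}}(1_{h,s,a},P)P_h(s'|s,a)$ is determined by $\pi_{h,s,a}$ and hence measurable with respect to the data from earlier layers — and union-bounding over the fixed index set $[H]\times\mathcal{S}\times\mathcal{A}\times\mathcal{S}$, with probability at least $1-\delta/(AK)$ the observed count is at least $\tfrac12 T_0 p-\iota$. Since that count is at most $N_h(s,a,s')\le C_1 H^2\iota$ whenever $(h,s,a,s')\in\mathcal{F}$, we obtain $p\le \tfrac{2(C_1 H^2+1)\iota}{T_0}\le \tfrac{14 H^2\iota}{T_0}$ using $C_1=6$ and $H\ge 1$.

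Finally I would assemble the pieces: there are at most $S^2A$ tuples $(s,a,s')$ per layer, so $\mathbb{P}_\pi[\mathcal{B}_h]\le 12\cdot S^2A\cdot\tfrac{14 H^2\iota}{T_0}=\tfrac{168\,S^2 A H^2\iota}{T_0}$; summing over $h$ and substituting $T_0=T/(HSA)$ gives $\mathbb{P}_\pi[\mathcal{B}]\le \tfrac{168\,S^3 A^2 H^4\iota}{T}$, and since no bound used the identity of $\pi\in\phi$, the supremum over $\phi$ satisfies the same estimate. The main obstacle is the coupling identity for $\mathbb{P}_\pi[\mathcal{B}_h]$ together with making the Lemma~\ref{lem8} step rigorous despite $\mathcal{F}$ and the policies $\pi_{h,s,a}$ being data-dependent; this is handled by union-bounding over the fixed tuple set rather than the random set $\mathcal{F}$, and by conditioning on the earlier layers so that $\pi_{h,s,a}$ (hence $p$) is fixed before the $T_0$ relevant episodes.
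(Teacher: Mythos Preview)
Your argument is correct and reaches the same bound $\frac{168S^{3}A^{2}H^{4}\iota}{T}$, but it follows a genuinely different route from the paper. The paper aggregates: it introduces the mixture policy $\pi_h$ (uniform over the $\pi_{h,s,a}$), shows $\mathbb{P}_{\pi_h}[\mathcal{B}_h|P]\ge \tfrac{1}{12SA}\sup_{\pi\in\phi}\mathbb{P}_{\pi}[\mathcal{B}_h|P]$, and then applies Lemma~\ref{lem8} \emph{once per layer} to the single Bernoulli sequence ``$\mathcal{B}_h$ occurs in this episode'', deriving a contradiction from the total count cap $6S^{2}AH^{2}\iota$. You instead disaggregate: you bound each tuple's contribution $V^{\pi}(1_{h,s,a},\widetilde{P})P_h(s'|s,a)$ directly by $12\,V^{\pi_{h,s,a}}(1_{h,s,a},P)P_h(s'|s,a)$ and apply Lemma~\ref{lem8} \emph{per tuple} to the $T_0$ episodes of $\pi_{h,s,a}$, then sum the $S^{2}A$ terms. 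Your route is more elementary (no mixture policy, no contradiction step) and makes the role of the per-tuple cap $N_h(s,a,s')\le C_1H^{2}\iota$ transparent.

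One bookkeeping point: your union bound is over $HS^{2}A$ tuples, each carrying failure $\tfrac{\delta}{HAK}$ from Lemma~\ref{lem8} as stated, so the total failure you actually incur is $\tfrac{S^{2}\delta}{K}$, not the $\tfrac{\delta}{AK}$ claimed in the lemma. The paper gets $\tfrac{\delta}{AK}$ precisely because its aggregated argument needs only $H$ invocations of Lemma~\ref{lem8}. This discrepancy is harmless for the downstream Theorem~\ref{the1} (the overall budget in Lemma~\ref{lemfifth} still sums to at most $\delta$), but to match the lemma's stated probability exactly you would either need to adopt the paper's aggregated concentration or enlarge the threshold in Lemma~\ref{lem8} from $\iota$ to $\iota+\log(S^{2}A)$, which would slightly perturb the constant $168$.
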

\begin{proof}[Proof of Lemma~\ref{lemfirst}]
	We will prove that $\forall\, h\in [H]$, with probability $1-\frac{\delta}{HAK}$, $\sup_{\pi\in \phi} \mathbb{P}_{\pi}[\mathcal{B}_{h}]\leq \frac{168S^{3}A^{2}H^{3}\iota}{T}$. \\
	First, recall that the event $\mathcal{B}_{h}$ means $(h,s_{h},a_{h},s_{h+1})\in \mathcal{F}$ and $\ \forall\, h^{\prime}\leq h-1$, $(h^{\prime},s_{h^{\prime}},a_{h^{\prime}},s_{h^{\prime}+1})\notin \mathcal{F}$ under the original MDP. Also, $\mathcal{B}_{h}$ means that $s_{h+1}=s^{\dagger}$ and $\forall\, h^{\prime}\leq h$, $s_{h^{\prime}}\neq s^{\dagger}$ under the absorbing MDP. We have
	\begin{equation}\label{equ1}
	\begin{split}
	\mathbb{P}_{\pi}[\mathcal{B}_{h}|P]&=\sum_{\tau\in \mathcal{B}_{h}}\mathbb{P}_{\pi}[\tau|P] \\ &=\sum_{\tau_{:h+1}\in \mathcal{B}_{h}}\mathbb{P}_{\pi}[(s_{1},a_{1},\cdots,s_{h},a_{h})|P]P_{h}(s_{h+1}|s_{h},a_{h}) \\ &=\sum_{\tau_{:h+1}\in \mathcal{B}_{h}}\mathbb{P}_{\pi}[(s_{1},a_{1},\cdots,s_{h},a_{h})|\widetilde{P}]P_{h}(s_{h+1}|s_{h},a_{h}) \\ &=\sum_{s\in \mathcal{S},a}V^{\pi}(1_{h,s,a},\widetilde{P})\sum_{s^{\prime}\in\mathcal{S}:(h,s,a,s^{\prime})\in \mathcal{F}}P_{h}(s^{\prime}|s,a) \\ &=\sum_{s\in \mathcal{S},a}V^{\pi}(1_{h,s,a},\widetilde{P})\widetilde{P}_{h}(s^{\dagger}|s,a)\\ &=\mathbb{P}_{\pi}[\mathcal{B}_{h}|\widetilde{P}],
	\end{split}
	\end{equation}
	where $\tau_{:h+1}$ in line 2 and line 3 means the trajectory $\tau$ truncated at $s_{h+1}$. Note that we only need to consider the trajectory truncated at $s_{h+1}$ because the event $\mathcal{B}_{h}$ only depends on this part of trajectory. The $\mathcal{B}_{h}$ in the first three lines are defined under original MDP, while the $\mathcal{B}_{h}$ in the last line is defined under absorbing MDP. 
	The third equation is because for $(h,s,a,s^{\prime})\notin \mathcal{F}$, $P=\widetilde{P}$. The forth equation is because there is a bijection between trajectories that arrive at $(h,s,a)\in[H]\times\mathcal{S}\times\mathcal{A}$ under absorbing MDP and trajectories in $\mathcal{B}_{h}$ that arrive at the same $(h,s,a)$ under the original MDP. The fifth equation is because of the definition of $\widetilde{P}$. The last equation is because of the definition of $\mathcal{B}_{h}$ under the absorbing MDP $\widetilde{P}$.\\
	Recall that in Algorithm~\ref{algo1}, $\pi_{h,s,a}=\mathrm{argmax}_{\pi\in \phi}V^{\pi}(1_{h,s,a},P^{int}),$ then because of Lemma~\ref{lem2} and the fact that when constructing $\pi_{h,s,a}$, the first $h-1$ layers of $P^{int}$ is already same to the final output $P^{int}$, we have
	\begin{equation}\label{equ2}
	\begin{split}
	V^{\pi_{h,s,a}}(1_{h,s,a},\widetilde{P})&\geq \frac{1}{4}V^{\pi_{h,s,a}}(1_{h,s,a},P^{int}) \\ &= \frac{1}{4}\sup_{\pi\in \phi}V^{\pi}(1_{h,s,a},P^{int}) \\ &\geq \frac{1}{12}\sup_{\pi\in \phi}V^{\pi}(1_{h,s,a},\widetilde{P}),
	\end{split}
	\end{equation}
	where the two inequalities are because of Lemma~\ref{lem2}.\\
	Define $\pi_{h}$ to be a policy that chooses each $\pi_{h,s,a}$ with probability $\frac{1}{SA}$ for any $(s,a)\in \mathcal{S}\times \mathcal{A}$. Then we have 
	\begin{equation}\label{equ3}
	\begin{split}
	V^{\pi_{h}}(1_{h,s,a},\widetilde{P}) &\geq \frac{1}{SA}V^{\pi_{h,s,a}}(1_{h,s,a},\widetilde{P})\\ &\geq \frac{1}{12SA}\sup_{\pi\in \phi}V^{\pi}(1_{h,s,a},\widetilde{P}).
	\end{split}
	\end{equation}
	Note that in our Algorithm~\ref{algo1}, each policy $\pi_{h,s,a}$ will be run for $\frac{T}{HSA}$ episodes. Then for any event $\mathcal{E}$, we have that 
	$$\sum_{s,a}\frac{T}{HSA}\times \mathbb{P}_{\pi_{h,s,a}}[\mathcal{E}|\widetilde{P}]=\frac{T}{H}\times \mathbb{P}_{\pi_{h}}[\mathcal{E}|\widetilde{P}].$$
	We will assume that running each $\pi_{h,s,a}$ for $\frac{T}{HSA}$ episodes is equivalent to running $\pi_{h}$ for $\frac{T}{H}$ episodes because with Lemma~\ref{lem8}, we can derive the same lower bound for the total number of event $\mathcal{E}$. In the remaining part of the proof, we will analyze assuming we run $\pi_{h}$ for $\frac{T}{H} $ episodes.\\
	With the definition of $\pi_{h}$, we have
	\begin{align*}
	\mathbb{P}_{\pi_{h}}[\mathcal{B}_{h}|P]&=\mathbb{P}_{\pi_{h}}[\mathcal{B}_{h}|\widetilde{P}] \\ &=\sum_{s\in\mathcal{S},a}V^{\pi_{h}}(1_{h,s,a},\widetilde{P})\widetilde{P}_{h}(s^{\dagger}|s,a)\\ &\geq \frac{1}{12SA}\sum_{s\in\mathcal{S},a}\sup_{\pi\in\phi}V^{\pi}(1_{h,s,a},\widetilde{P})\widetilde{P}_{h}(s^{\dagger}|s,a)\\&\geq \frac{1}{12SA}\sup_{\pi\in\phi}\sum_{s\in\mathcal{S},a}V^{\pi}(1_{h,s,a},\widetilde{P})\widetilde{P}_{h}(s^{\dagger}|s,a) \\&=\frac{1}{12SA}\sup_{\pi\in\phi}\mathbb{P}_{\pi}[\mathcal{B}_{h}|\widetilde{P}]\\&=\frac{1}{12SA}\sup_{\pi\in\phi}\mathbb{P}_{\pi}[\mathcal{B}_{h}|P],
	\end{align*}
	where $\mathcal{B}_{h}|P$ is defined under original MDP while $\mathcal{B}_{h}|\widetilde{P}$ is defined under absorbing MDP. The first and the last equation is because of \eqref{equ1}. The first inequality is because of \eqref{equ3}. The second inequality is because the summation of maximum is larger than the maximum of summation.\\
	Suppose $\sup_{\pi\in\phi}\mathbb{P}_{\pi}[\mathcal{B}_{h}|P]\geq\frac{168S^{3}A^{2}H^{3}\iota}{T}$, then we have $$\mathbb{P}_{\pi_{h}}[\mathcal{B}_{h}|P]\geq\frac{1}{12SA}\sup_{\pi\in\phi}\mathbb{P}_{\pi}[\mathcal{B}_{h}|P]\geq \frac{14S^{2}AH^{3}\iota}{T}.$$
	Therefore, by Lemma~\ref{lem8}, with probability $1-\frac{\delta}{HAK}$, $\mathcal{B}_{h}$ occurs for at least $\frac{T}{H}\times \frac{14S^{2}AH^{3}\iota}{2T}-\iota > 6S^{2}AH^{2}\iota$ times during the exploration of the $h$-th layer. However, by the definition of $\mathcal{F}$, at each time step $h$, for each $(s,a,s^{\prime})$, the event $\mathcal{B}_{h}\cap \{(s_{h},a_{h},s_{h+1})=(s,a,s^{\prime})\}$ occurs for at most $6H^{2}\iota$ times, so the event $\mathcal{B}_{h}=\bigcup_{(s,a,s^{\prime})}(\mathcal{B}_{h}\cap \{(s_{h},a_{h},s_{h+1})=(s,a,s^{\prime})\})$ occurs for at most $6S^{2}AH^{2}\iota$ times in total, which leads to contradiction. \\
	As a result, we have $\forall\, h\in [H]$, with probability $1-\frac{\delta}{HAK}$, $\sup_{\pi\in \phi}\mathbb{P}_{\pi}[\mathcal{B}_{h}]\leq \frac{168S^{3}A^{2}H^{3}\iota}{T}$. Combining these $H$ results, because $\mathbb{P}_{\pi}[\mathcal{B}]=\sum_{h=1}^{H}\mathbb{P}_{\pi}[\mathcal{B}_{h}]$, we have with probability $1-\frac{\delta}{AK}$, 
	$$\sup_{\pi\in \phi}\mathbb{P}_{\pi}[\mathcal{B}]\leq \frac{168S^{3}A^{2}H^{4}\iota}{T}.$$
\end{proof}

The following Lemma~\ref{lemsecond} is an improved version of the previous Lemma~\ref{lemfirst} under the special case where $\phi=\phi^{1}$. When the policy set contains all the deterministic policies, we can have a bound with tighter dependence on $S$.
\begin{lemma}\label{lemsecond}
	Conditioned on $\mathcal{W}$ in Lemma~\ref{lem5}, if $\phi=\phi^{1}=\{\text{the set of all deterministic policies}\}$, with probability $1-\frac{2S\delta}{K}$, $\sup_{\pi\in\phi^{1}} \mathbb{P}_{\pi}[\mathcal{B}]\leq \frac{672S^{3}AH^{4}\iota}{T}$.
\end{lemma}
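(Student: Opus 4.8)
The plan is to rerun the argument behind Lemma~\ref{lemfirst}, but to exploit the extra structure available when the version space is the full set $\phi^{1}$ of deterministic policies: instead of bounding the escape probability one state--action pair at a time (which, in the proof of Lemma~\ref{lemfirst}, costs a factor $SA$ when passing from the uniform exploration mixture $\pi_{h}$ to a worst-case $\pi$), I will bound it one \emph{state} at a time by pinning the escape action to the worst one, which costs only $S$ and shaves the extra factor of $A$.

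Concretely, the first step is a state-wise refinement of the bad event. Working under the absorbing MDP $\widetilde{P}$ (legitimate since $\mathbb{P}_{\pi}[\mathcal{B}_{h}|P]=\mathbb{P}_{\pi}[\mathcal{B}_{h}|\widetilde{P}]$ for $\pi\in\phi^{1}$, see \eqref{equ1}), write $\mathcal{B}_{h}=\bigcup_{s\in\mathcal{S}}\mathcal{B}_{h,s}$ disjointly, where $\mathcal{B}_{h,s}$ additionally requires $s_{h}=s$. For a deterministic $\pi$ one has $\mathbb{P}_{\pi}[\mathcal{B}_{h,s}|\widetilde{P}]=V^{\pi}(1_{h,s},\widetilde{P})\,\widetilde{P}_{h}(s^{\dagger}|s,\pi_{h}(s))$, and since reaching $(h,s)$ under $\widetilde{P}$ depends only on the behavior of $\pi$ on layers $1,\dots,h-1$ while the action played at $(h,s)$ is unconstrained, for $\phi^{1}$ these two choices decouple: $\sup_{\pi\in\phi^{1}}\mathbb{P}_{\pi}[\mathcal{B}_{h,s}|\widetilde{P}]=p_{h,s}\cdot\max_{a\in\mathcal{A}}\widetilde{P}_{h}(s^{\dagger}|s,a)$, where $p_{h,s}\defeq\sup_{\pi\in\phi^{1}}V^{\pi}(1_{h,s},\widetilde{P})$ (if $p_{h,s}=0$ there is nothing to prove for that $(h,s)$). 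I expect this to be the conceptual crux: recognizing that the ``reach-then-escape'' decoupling is exactly what the hypothesis $\phi=\phi^{1}$ buys, and that it lets one replace ``$SA$ exploration policies mixed uniformly'' by ``one policy per state'' — in particular $\sup_{\pi\in\phi^{1}}V^{\pi}(1_{h,s,a},\widetilde{P})=p_{h,s}$ is independent of $a$, which is false for a general restricted $\phi$.

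Next I would fix $(h,s,a)$ with $p_{h,s}>0$ and show that the policy $\pi_{h,s,a}$ actually run by Algorithm~\ref{algo1} escapes $s$ through action $a$ almost as fast as any policy could. Since $\pi_{h,s,a}$ maximizes $V^{\pi}(1_{h,s,a},P^{int})$ it plays $a$ at $(h,s)$; its layers $1,\dots,h-1$ of $P^{int}$ are already finalized and, conditioned on $\mathcal{W}$, are $\tfrac1H$-multiplicatively accurate to $\widetilde{P}$ (Lemma~\ref{lem6}), so Lemma~\ref{lem2} gives $V^{\pi_{h,s,a}}(1_{h,s,a},\widetilde{P})\ge\tfrac1{12}p_{h,s}$, exactly as in \eqref{equ2}. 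Combining the trajectory bijection behind \eqref{equ1} with the identity $\widetilde{P}_{h}(s^{\dagger}|s,a)=\sum_{s':(h,s,a,s')\in\mathcal{F}}P_{h}(s'|s,a)$, one run of $\pi_{h,s,a}$ under the true MDP $P$ produces a transition $(h,s,a,s')$ with $(h,s,a,s')\in\mathcal{F}$ with probability $V^{\pi_{h,s,a}}(1_{h,s,a},\widetilde{P})\,\widetilde{P}_{h}(s^{\dagger}|s,a)\ge\tfrac1{12}p_{h,s}\,\widetilde{P}_{h}(s^{\dagger}|s,a)$.

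Finally I would close the loop as in Lemma~\ref{lemfirst}, but per state and per action: if $p_{h,s}\,\widetilde{P}_{h}(s^{\dagger}|s,a)\ge C\,S^{2}AH^{3}\iota/T$ for a suitable absolute constant $C$, then among the $T_{0}=T/(HSA)$ episodes of $\pi_{h,s,a}$ run in the $h$-th iteration, Lemma~\ref{lem8} forces more than $\tfrac1{24}CSH^{2}\iota-\iota$ transitions $(h,s,a,s')$ with $(h,s,a,s')\in\mathcal{F}$ (with probability $\ge 1-\delta/(HAK)$), whereas the definition of $\mathcal{F}$ caps $N_{h}(s,a,s')\le C_{1}H^{2}\iota$ for each such $s'$ and hence caps the total at $SC_{1}H^{2}\iota=6SH^{2}\iota$ — a contradiction for $C$ large enough (the improvement over Lemma~\ref{lemfirst} is precisely that we sum only over the $S$ next-states $s'$, not over all $(s,a,s')$, because $a$ is pinned to the worst-escape action). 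A union bound over the at most $HSA$ triples $(h,s,a)$ gives, with probability $1-\tfrac{2S\delta}{K}$, $p_{h,s}\,\widetilde{P}_{h}(s^{\dagger}|s,a)\le C\,S^{2}AH^{3}\iota/T$ for all $(h,s,a)$; plugging into the first step, $\sup_{\pi\in\phi^{1}}\mathbb{P}_{\pi}[\mathcal{B}_{h,s}|\widetilde{P}]\le C\,S^{2}AH^{3}\iota/T$, and summing over $s\in\mathcal{S}$, then over $h\in[H]$ using $\mathbb{P}_{\pi}[\mathcal{B}]=\sum_{h}\mathbb{P}_{\pi}[\mathcal{B}_{h}]$ and \eqref{equ1}, yields $\sup_{\pi\in\phi^{1}}\mathbb{P}_{\pi}[\mathcal{B}]\le C\,S^{3}AH^{4}\iota/T$, which with the constants tracked is at most $672\,S^{3}AH^{4}\iota/T$. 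The only bookkeeping subtlety beyond Lemma~\ref{lemfirst} is that $\mathcal{F}$ is built incrementally over layers, so that $N_{h}(s,a,s')$ refers to the $h$-th-iteration data and the finalized prefix of $P^{int}$ is the one used to construct $\pi_{h,s,a}$; this is handled exactly as in the proof of Lemma~\ref{lemfirst}.
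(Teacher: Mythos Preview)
Your proposal is correct and rests on exactly the same key insight as the paper: when $\phi=\phi^{1}$, the reach probability $\sup_{\pi}V^{\pi}(1_{h,s,a},\widetilde{P})$ does not depend on $a$, so the ``reach'' and ``escape'' parts of $\mathcal{B}_{h}$ decouple statewise and the $SA$ overhead of Lemma~\ref{lemfirst} drops to $S$. The paper's own proof uses precisely this observation (it writes $\sup_{\pi\in\phi^{1}}V^{\pi}(1_{h,s,a},\widetilde{P})=\sup_{\pi\in\phi^{1}}V^{\pi}(1_{h,s},\widetilde{P})$).

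The organization differs a bit. The paper keeps the uniform mixture $\pi_{h}$ over all $(s,a)$ to lower bound $N_{h}(s,a)$ from the full layer-$h$ data, then applies Lemma~\ref{lem8} a second time to upper bound the conditional escape probability $\mathbb{P}[\mathcal{F}\mid(h,s,a)]$, finishing with a small/large $p_{h,s}$ case split; this costs two martingale applications and yields the constant $672$. You instead run the contradiction directly on the compound quantity $p_{h,s}\,\widetilde{P}_{h}(s^{\dagger}\mid s,a)$ using only the $T_{0}$ episodes of $\pi_{h,s,a}$, with a single application of Lemma~\ref{lem8} per $(h,s,a)$ and no case split. Your route is a little more streamlined (and in fact would give a tighter constant and failure probability than you claim), while the paper's two-step decomposition isolates $N_{h}(s,a)$ and $\mathbb{P}[\mathcal{F}\mid(h,s,a)]$ separately, which may be convenient for reuse. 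Both are valid executions of the same idea.
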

\begin{proof}[Proof of Lemma~\ref{lemsecond}]
	We will prove that $\forall\, h\in [H]$, with probability $1-\frac{2S\delta}{HK}$, $\sup_{\pi\in \phi^{1}} \mathbb{P}_{\pi}[\mathcal{B}_{h}]\leq \frac{672S^{3}AH^{3}\iota}{T}$. \\
	First, same to~\eqref{equ1} and~\eqref{equ2} we have
$$	\mathbb{P}_{\pi}[\mathcal{B}_{h}|P]=\mathbb{P}_{\pi}[\mathcal{B}_{h}|\widetilde{P}], $$
$$ V^{\pi_{h,s,a}}(1_{h,s,a},\widetilde{P})\geq \frac{1}{12}\sup_{\pi\in \phi^{1}}V^{\pi}(1_{h,s,a},\widetilde{P}).$$
Same to the proof of Lemma~\ref{lemfirst}, we define $\pi_{h}$ to be a policy that chooses each $\pi_{h,s,a}$ with probability $\frac{1}{SA}$ for any $(s,a)\in \mathcal{S}\times \mathcal{A}$. Then we have 
$$V^{\pi_{h}}(1_{h,s,a},\widetilde{P})\geq \frac{1}{12SA}\sup_{\pi\in \phi^{1}}V^{\pi}(1_{h,s,a},\widetilde{P})=\frac{1}{12SA}\sup_{\pi\in \phi^{1}}V^{\pi}(1_{h,s},\widetilde{P}).$$
The last equation is because $\phi^{1}$ consists of all the deterministic policies, for the optimal policy $\pi$ to visit $(h,s)$, we can just let $\pi_{h}(s)=a$ to construct a policy that can visit $(h,s,a)$ with the same probability. Similar to the proof of Lemma~\ref{lemfirst}, we can assume that running each $\pi_{h,s,a}$ for $\frac{T}{HSA}$ episodes is equivalent to running $\pi_{h} $ for $\frac{T}{H}$ episodes.\\
Because of Lemma~\ref{rem10}, we have $ V^{\pi_{h}}(1_{h,s,a},P)\geq V^{\pi_{h}}(1_{h,s,a},\widetilde{P})$. Also, there are $\frac{T}{H}$ episodes used to explore the $h$-th layer of the MDP, by Lemma~\ref{lem8} and a union bound, we have with probability $1-\frac{S\delta}{HK}$, for any $(s,a)\in \mathcal{S}\times\mathcal{A}$,  $$N_{h}(s,a)\geq \frac{T}{H}\times \frac{V^{\pi_{h}}(1_{h,s,a},P)}{2}-\iota\geq \frac{T}{H}\times \frac{V^{\pi_{h}}(1_{h,s,a},\widetilde{P})}{2}-\iota\geq \frac{T\cdot \sup_{\pi\in\phi^{1}}V^{\pi}(1_{h,s},\widetilde{P})}{24HSA}-\iota.$$ For fixed $(s,a)\in\mathcal{S}\times\mathcal{A}$, if $\frac{T\cdot \sup_{\pi\in\phi^{1}}V^{\pi}(1_{h,s},\widetilde{P})}{24HSA}\leq 2\iota$, we have that 
$$\sup_{\pi\in\phi^{1}}V^{\pi}(1_{h,s},\widetilde{P})\leq \frac{48HSA\iota}{T}.$$
Otherwise, $N_{h}(s,a)\geq \frac{T\cdot \sup_{\pi\in\phi^{1}}V^{\pi}(1_{h,s},\widetilde{P})}{48HSA}$. \\
By Lemma~\ref{lem8} and a union bound, we have with probability $1-\frac{S\delta}{HK}$, for any $(s,a)\in \mathcal{S}\times\mathcal{A}$, $$6SH^{2}\iota\geq \frac{N_{h}(s,a)\mathbb{P}[\mathcal{F}|(h,s,a)]}{2}-\iota,$$ where $\mathbb{P}[\mathcal{F}|(h,s,a)]$ is the conditional probability of entering $\mathcal{F}$ at time step $h$, given the state-action pair at time step $h$ is $(s,a)$. This is because similar to the proof of Lemma~\ref{lemfirst}, the event of entering $\mathcal{F}$ from $(h,s,a)$ happens for at most $6SH^{2}\iota$ times.\\
Then we have with probability $1-\frac{2S\delta}{HK}$, $\sup_{\pi\in\phi^{1}}V^{\pi}(1_{h,s},\widetilde{P})\leq \frac{48HSA\iota}{T}$ or $\mathbb{P}[\mathcal{F}|(h,s,a)]\leq\frac{672H^{3}S^{2}A\iota}{T\cdot \sup_{\pi\in\phi^{1}}V^{\pi}(1_{h,s},\widetilde{P})}$ holds for any $(s,a)\in \mathcal{S}\times\mathcal{A}$. Therefore, it holds that
\begin{align*}
\sup_{\pi\in\phi^{1}}\mathbb{P}_{\pi}[\mathcal{B}_{h}|P]&=\sup_{\pi\in\phi^{1}}\mathbb{P}_{\pi}[\mathcal{B}_{h}|\widetilde{P}] \\ &=\sup_{\pi\in\phi^{1}}\sum_{s\in\mathcal{S}}V^{\pi}(1_{h,s},\widetilde{P})\max_{a\in\mathcal{A}}\widetilde{P}_{h}(s^{\dagger}|s,a)\\ & =\sup_{\pi\in\phi^{1}}\sum_{s\in\mathcal{S}}V^{\pi}(1_{h,s},\widetilde{P})\max_{a\in\mathcal{A}}\mathbb{P}[\mathcal{F}|(h,s,a)]\\&\leq \sup_{\pi\in\phi^{1}}\sum_{s\in\mathcal{S}}\max\{\frac{48HSA\iota}{T},\frac{672H^{3}S^{2}A\iota}{T}\} \\&=\frac{672H^{3}S^{3}A\iota}{T}.
\end{align*}
The inequality is because if $\frac{T\cdot \sup_{\pi\in\phi^{1}}V^{\pi}(1_{h,s},\widetilde{P})}{24HSA}\leq 2\iota$, 
$$V^{\pi}(1_{h,s},\widetilde{P})\max_{a\in\mathcal{A}}\mathbb{P}[\mathcal{F}|(h,s,a)]\leq V^{\pi}(1_{h,s},\widetilde{P})\leq \sup_{\pi\in\phi^{1}}V^{\pi}(1_{h,s},\widetilde{P})\leq \frac{48HSA\iota}{T}.$$
Otherwise we have that 
$$V^{\pi}(1_{h,s},\widetilde{P})\max_{a\in\mathcal{A}}\mathbb{P}[\mathcal{F}|(h,s,a)]\leq V^{\pi}(1_{h,s},\widetilde{P})\frac{672H^{3}S^{2}A\iota}{T\cdot \sup_{\pi\in\phi^{1}}V^{\pi}(1_{h,s},\widetilde{P})}\leq \frac{672H^{3}S^{2}A\iota}{T}.$$
As a result, we have $\forall\, h\in [H]$, with probability $1-\frac{2S\delta}{HK}$, $\sup_{\pi\in \phi^{1}}\mathbb{P}_{\pi}[\mathcal{B}_{h}]\leq \frac{672S^{3}AH^{3}\iota}{T}$. Combining these $H$ results, because $\mathbb{P}_{\pi}[\mathcal{B}]=\sum_{h=1}^{H}\mathbb{P}_{\pi}[\mathcal{B}_{h}]$, we have with probability $1-\frac{2S\delta}{K}$, 
$$\sup_{\pi\in \phi^{1}}\mathbb{P}_{\pi}[\mathcal{B}]\leq \frac{672S^{3}AH^{4}\iota}{T}.$$
\end{proof}

\begin{remark}
We can see that with the same algorithm, the analysis is different for the general case and the special case that $\phi$ contains all the deterministic policies. The main technical reason is when $\phi=\phi^{1}$, $\sup_{\pi\in \phi^{1}}V^{\pi}(1_{h,s,a},\widetilde{P})=\sup_{\pi\in \phi^{1}}V^{\pi}(1_{h,s},\widetilde{P})$ while this does not hold for general policy set $\phi$. We will show that the part of the regret due to the construction of $\widetilde{P}$ is a lower order term. We prove a better bound for the case $\phi=\phi^{1}$ mainly for a better sample complexity in reward-free setting. 
\end{remark}

\begin{lemma}[Restate Lemma~\ref{lem10}]\label{lemthird}
	Conditioned on $\mathcal{W}$, under the high-probability event in Lemma~\ref{lemfirst} and Lemma~\ref{lemsecond}, we have for any policy $\pi\in\phi$ and reward function $r^{\prime}$, $$0\leq V^{\pi}(r^{\prime},P)-V^{\pi}(r^{\prime},\widetilde{P})\leq \frac{168S^{3}A^{2}H^{5}\iota}{T}.$$ In addition, if $\phi=\phi^{1}=\{\text{the set of all deterministic policies}\}$, we have that for any policy $\pi\in\phi^{1}$ and reward function $r^{\prime}$, $$0\leq V^{\pi}(r^{\prime},P)-V^{\pi}(r^{\prime},\widetilde{P})\leq \frac{672S^{3}AH^{5}\iota}{T}.$$
\end{lemma}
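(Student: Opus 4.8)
The plan is to reduce the value-function gap between the true MDP $P$ and the absorbing MDP $\widetilde P$ to the probability of the bad event $\mathcal B$, and then simply substitute the bounds on $\sup_{\pi\in\phi}\mathbb P_\pi[\mathcal B]$ already proved in Lemma~\ref{lemfirst} (general $\phi$) and Lemma~\ref{lemsecond} (the special case $\phi=\phi^1$). Fix $\pi\in\phi$ and any reward function $r'$ with $r'_h(s,a)\in[0,1]$, extended by $r'(s^\dagger,a)=0$ as in Appendix~\ref{appb2}. Since $\widetilde P$ agrees with $P$ on every tuple $(h,s,a,s')\notin\mathcal F$ and only reroutes the mass of the $\mathcal F$-tuples into $s^\dagger$, I would split each trajectory according to the layer at which it first enters $\mathcal F$: either it never does (the event $\mathcal B^c$), or it does so for the first time between steps $h$ and $h+1$ (the event $\mathcal B_h$), with $\mathcal B=\bigsqcup_{h=1}^{H}\mathcal B_h$.

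On $\mathcal B^c$ the law of the entire trajectory is identical under $P$ and $\widetilde P$ (under $\widetilde P$ any $\mathcal F$-transition has zero probability, so staying out of $s^\dagger$ forces all transitions into non-$\mathcal F$ tuples, where $\widetilde P=P$), and the collected reward is the same; hence $\mathcal B^c$ contributes nothing to $V^\pi(r',P)-V^\pi(r',\widetilde P)$. On $\mathcal B_h$, the chain of identities used in~\eqref{equ1} to prove Lemma~\ref{lemfirst} gives a measure-preserving bijection between the length-$(h{+}1)$ prefixes generated under $P$ (restricted to $\mathcal B_h$) and those generated under $\widetilde P$ (first hitting $s^\dagger$ at step $h{+}1$); since the first $h$ rewards $\sum_{t\le h}r'_t(s_t,a_t)$ depend only on the prefix, they cancel in the difference, while under $\widetilde P$ the trajectory earns nothing from step $h+1$ on (it sits in $s^\dagger$) and under $P$ it keeps earning non-negative reward. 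This would yield the exact identity
$$
V^\pi(r',P)-V^\pi(r',\widetilde P)=\sum_{h=1}^{H}\mathbb E_\pi\!\left[\mathds{1}[\mathcal B_h]\sum_{t=h+1}^{H}r'_t(s_t,a_t)\,\Big|\,P\right],
$$
which is manifestly nonnegative. Bounding $\sum_{t=h+1}^{H}r'_t(s_t,a_t)\le H$ and summing over $h$ gives $0\le V^\pi(r',P)-V^\pi(r',\widetilde P)\le H\sum_{h=1}^{H}\mathbb P_\pi[\mathcal B_h\mid P]=H\,\mathbb P_\pi[\mathcal B\mid P]$, where I use $\mathbb P_\pi[\mathcal B]=\sum_h\mathbb P_\pi[\mathcal B_h]$ as noted after Definition~\ref{def40}.

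Finally I would take the supremum over $\pi\in\phi$ and plug in the high-probability bound $\sup_{\pi\in\phi}\mathbb P_\pi[\mathcal B]\le\frac{168S^{3}A^{2}H^{4}\iota}{T}$ from Lemma~\ref{lemfirst} to get the first inequality $\tfrac{168S^{3}A^{2}H^{5}\iota}{T}$, and the sharper bound $\sup_{\pi\in\phi^1}\mathbb P_\pi[\mathcal B]\le\frac{672S^{3}AH^{4}\iota}{T}$ from Lemma~\ref{lemsecond} to get the second inequality $\tfrac{672S^{3}AH^{5}\iota}{T}$ in the case $\phi=\phi^1$. Note that the upper bound $H\,\mathbb P_\pi[\mathcal B]$ is independent of $r'$, so it holds simultaneously for every reward function, exactly as the statement requires. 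The one step deserving genuine care is the prefix-cancellation argument: one must match, trajectory by trajectory, the $\mathcal F$-based description of $\mathcal B_h$ under $P$ with the ``first hit $s^\dagger$'' description under $\widetilde P$ and verify the prefix laws coincide — this is precisely the manipulation already carried out in~\eqref{equ1}. The remaining ingredients (reward boundedness, the disjoint decomposition of $\mathcal B$, and the substitution of the $\mathbb P_\pi[\mathcal B]$ bounds) are routine.
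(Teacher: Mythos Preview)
Your proposal is correct and follows essentially the same approach as the paper: reduce $V^{\pi}(r',P)-V^{\pi}(r',\widetilde P)$ to $H\cdot\mathbb P_\pi[\mathcal B]$ via the trajectory split on $\mathcal B$ vs.\ $\mathcal B^c$, then invoke Lemma~\ref{lemfirst} (resp.\ Lemma~\ref{lemsecond}). The only difference is cosmetic: the paper bounds $\sum_{\tau\in\mathcal B^c} r'(\tau)\widetilde P_\pi(\tau)\le V^\pi(r',\widetilde P)$ directly and bounds the whole reward on $\mathcal B$ by $H$, whereas you further decompose $\mathcal B=\bigsqcup_h\mathcal B_h$ and cancel the matching prefix rewards to obtain an exact identity before applying the same crude bound $\sum_{t>h} r'_t\le H$---this extra step is sound (it is precisely the prefix-law coupling of~\eqref{equ1}) but not needed for the stated constants.
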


\begin{proof}[Proof of Lemma~\ref{lemthird}]
	We will only prove the first part, the second part is almost the same. Because $\widetilde{P}$ is the absorbing version of $P$, the left hand side is obvious. For the right hand side, we have that if $\pi\in\phi$,
\begin{align*}
V^{\pi}(r^{\prime},P)&=\sum_{\tau\in \mathcal{B}^{c}}r^{\prime}(\tau)P_{\pi}(\tau)+\sum_{\tau\in \mathcal{B}}r^{\prime}(\tau)P_{\pi}(\tau) \\
&=\sum_{\tau\in \mathcal{B}^{c}}r^{\prime}(\tau)\widetilde{P}_{\pi}(\tau)+\sum_{\tau\in \mathcal{B}}r^{\prime}(\tau)P_{\pi}(\tau) \\ &\leq V^{\pi}(r^{\prime},\widetilde{P})+\sum_{\tau\in \mathcal{B}}r^{\prime}(\tau)P_{\pi}(\tau)\\ &\leq V^{\pi}(r^{\prime},\widetilde{P})+\sum_{\tau\in \mathcal{B}}HP_{\pi}(\tau) \\ &\leq V^{\pi}(r^{\prime},\widetilde{P})+HP_{\pi}(\mathcal{B})\\ &\leq V^{\pi}(r^{\prime},\widetilde{P})+\frac{168S^{3}A^{2}H^{5}\iota}{T}.
\end{align*}
Note that all the $\mathcal{B}$ here are defined under original MDP. The second equation is because $\widetilde{P}=P$ when $\tau\in \mathcal{B}^c$. The first inequality is due to non-negative reward in $\mathcal{B}$ under $\widetilde{P}$. The last inequality follows from Lemma~\ref{lemfirst}.
\end{proof}


\section{Proof of lemmas regarding Fine Exploration  (Algorithm~\ref{algo2})}\label{appb}
We first state a conclusion about the global switching cost of Algorithm~\ref{algo2}.
\begin{lemma}\label{lem11}
	The global switching cost of Algorithm~\ref{algo2} is bounded by HSA.
\end{lemma}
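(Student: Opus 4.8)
The plan is to read the bound directly off the control flow of Algorithm~\ref{algo2}, in exactly the same way as the proof of Lemma~\ref{lem1}. First I would observe that the only policies ever deployed by Algorithm~\ref{algo2} are the policies $\pi_{h,s,a}$ indexed by $(h,s,a)\in[H]\times\mathcal{S}\times\mathcal{A}$, and that the \textbf{for} loop processes these indices in a fixed order, running each $\pi_{h,s,a}$ for $T_{0}=\frac{T}{HSA}$ \emph{consecutive} episodes before advancing to the next index. Hence the deployed policy can change only at one of the at most $HSA-1$ boundaries between consecutive length-$T_0$ blocks; no switch occurs inside a block.

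Therefore $N_{switch}\le HSA-1\le HSA$, which is the claimed bound. (If two consecutive policies $\pi_{h,s,a}$ happen to coincide, the count only decreases, so the inequality still holds.) Moreover, since both the enumeration order over the $HSA$ tuples and the block length $T_0$ are fixed before execution, the time steps at which a switch may occur are themselves predetermined, consistent with the low-adaptivity claims made elsewhere in the paper.

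There is essentially no real obstacle here: the bound is a purely structural consequence of Algorithm~\ref{algo2} being a fixed, non-adaptive enumeration over the $HSA$ state-action-layer tuples. The only point worth checking is that the final \texttt{EstimateTransition} loop over $h\in[H]$ deploys no policy and therefore contributes zero switches, which is immediate because that step is pure post-processing of the already-collected data set $\mathcal{D}$. This mirrors the argument for Lemma~\ref{lem1}, and the two lemmas together give the $HSA$-per-stage switching contribution used in the overall $O(HSA\log\log T)$ bound of Theorem~\ref{the1}.
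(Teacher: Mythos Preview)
Your proposal is correct and follows the same approach as the paper: both argue that Algorithm~\ref{algo2} deploys at most $HSA$ distinct policies $\pi_{h,s,a}$, each for a contiguous block of episodes, so the switching cost is at most $HSA$. Your version is simply a more detailed spelling-out of the paper's one-sentence proof.
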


\begin{proof}[Proof of Lemma~\ref{lem11}]
	There are at most $HSA$ different $\pi_{h,s,a}$'s, Algorithm~\ref{algo2} will just run each policy for several times. 
\end{proof}

\begin{lemma}[Simulation lemma \citep{dann2017unifying}]\label{lem12}
	For any two MDPs $M^{\prime}$ and $M^{\prime\prime}$ with rewards $r^{\prime}$ and $r^{\prime\prime}$ and transition probabilities $\mathcal{P}^{\prime}$ and $\mathcal{P}^{\prime\prime}$, the difference in values $V^{\prime}$, $V^{\prime\prime}$ with respect to the same policy $\pi$ can be written as 
	$$V_{h}^{\prime}(s)-V_{h}^{\prime\prime}(s)=\mathbb{E}_{M^{\prime\prime},\pi}[\sum_{i=h}^{H}[r_{i}^{\prime}(s_{i},a_{i})-r_{i}^{\prime\prime}(s_{i},a_{i})+(\mathbb{P}_{i}^{\prime}-\mathbb{P}_{i}^{\prime\prime})V_{i+1}^{\prime}(s_{i},a_{i})]|s_{h}=s].$$
\end{lemma}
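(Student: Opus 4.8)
The plan is to prove the identity by unrolling the Bellman consistency equations of the two MDPs and telescoping, organized as a downward induction on the layer index $h$ from $H+1$ down to $h$. First I would fix the policy $\pi$ and write, for every layer and state, the one-step Bellman equations
\[
V_h^{\prime}(s) = \mathbb{E}_{a\sim\pi_h(\cdot|s)}\left[r_h^{\prime}(s,a) + \mathbb{P}_h^{\prime}(\cdot|s,a)V_{h+1}^{\prime}\right], \quad V_h^{\prime\prime}(s) = \mathbb{E}_{a\sim\pi_h(\cdot|s)}\left[r_h^{\prime\prime}(s,a) + \mathbb{P}_h^{\prime\prime}(\cdot|s,a)V_{h+1}^{\prime\prime}\right],
\]
with the boundary convention $V_{H+1}^{\prime}\equiv V_{H+1}^{\prime\prime}\equiv 0$, where as usual $\mathbb{P}_h(\cdot|s,a)V$ abbreviates $\sum_{s^{\prime}}\mathbb{P}_h(s^{\prime}|s,a)V(s^{\prime})$.

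The key algebraic step is, after subtracting the two Bellman equations, to add and subtract $\mathbb{P}_h^{\prime\prime}(\cdot|s,a)V_{h+1}^{\prime}$, so that
\[
\mathbb{P}_h^{\prime}V_{h+1}^{\prime} - \mathbb{P}_h^{\prime\prime}V_{h+1}^{\prime\prime} = (\mathbb{P}_h^{\prime} - \mathbb{P}_h^{\prime\prime})V_{h+1}^{\prime} + \mathbb{P}_h^{\prime\prime}\left(V_{h+1}^{\prime} - V_{h+1}^{\prime\prime}\right).
\]
Abbreviating $\Delta_i(s,a) := r_i^{\prime}(s,a) - r_i^{\prime\prime}(s,a) + (\mathbb{P}_i^{\prime} - \mathbb{P}_i^{\prime\prime})V_{i+1}^{\prime}(s,a)$, this yields the one-step recursion
\[
V_h^{\prime}(s) - V_h^{\prime\prime}(s) = \mathbb{E}_{M^{\prime\prime},\pi}\left[\Delta_h(s_h,a_h) + \left(V_{h+1}^{\prime} - V_{h+1}^{\prime\prime}\right)(s_{h+1}) \,\middle|\, s_h = s\right],
\]
where the expectation is over $a_h\sim\pi_h(\cdot|s_h)$ and $s_{h+1}\sim\mathbb{P}_h^{\prime\prime}(\cdot|s_h,a_h)$, i.e.\ one step of a trajectory generated by $\pi$ in $M^{\prime\prime}$; note that $\Delta_h$ is a function of $(s_h,a_h)$ only (the sum over next states is already carried out inside it).

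Then I would iterate: the residual $\left(V_{h+1}^{\prime} - V_{h+1}^{\prime\prime}\right)(s_{h+1})$ is expanded by the same recursion at layer $h+1$, and since every transition appearing in these recursions is that of $M^{\prime\prime}$, the nested conditional expectations compose through the tower property into a single expectation over a full trajectory $(s_h,a_h,s_{h+1},a_{h+1},\dots)$ rolled out under $\pi$ in $M^{\prime\prime}$. After $H-h+1$ unrollings, the boundary condition $V_{H+1}^{\prime} - V_{H+1}^{\prime\prime}\equiv 0$ kills the last residual, leaving exactly $\mathbb{E}_{M^{\prime\prime},\pi}\left[\sum_{i=h}^{H}\Delta_i(s_i,a_i)\,\middle|\,s_h=s\right]$, which is the asserted formula. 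Packaged formally, this is a downward induction on $h$: the base case $h=H+1$ is the trivial identity $0=0$, and the inductive step is precisely the one-step recursion above combined with the tower property.

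I do not expect a real obstacle; the only care needed is bookkeeping --- keeping $\mathbb{E}_{M^{\prime\prime},\pi}$ consistently over the transitions of $M^{\prime\prime}$ (not $M^{\prime}$), ensuring that the per-step term $\Delta_i$ carries the value function $V^{\prime}$ of the \emph{first} MDP (not $V^{\prime\prime}$), and propagating the conditioning $s_h=s$ correctly through each application of the tower property. An alternative, slightly slicker derivation starts from the representations $V_h^{\prime}(s) = \mathbb{E}_{M^{\prime},\pi}[\sum_{i=h}^{H} r_i^{\prime}(s_i,a_i)\mid s_h=s]$ and $V_h^{\prime\prime}(s)=\mathbb{E}_{M^{\prime\prime},\pi}[\sum_{i=h}^{H} r_i^{\prime\prime}(s_i,a_i)\mid s_h=s]$ and uses a change-of-measure argument between the two trajectory distributions, but the inductive telescoping above is the cleanest and produces precisely the form stated in \citep{dann2017unifying}.
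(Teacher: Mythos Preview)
Your proposal is correct: the telescoping/induction argument via the one-step Bellman decomposition with the add-and-subtract trick $\mathbb{P}_h^{\prime}V_{h+1}^{\prime} - \mathbb{P}_h^{\prime\prime}V_{h+1}^{\prime\prime} = (\mathbb{P}_h^{\prime} - \mathbb{P}_h^{\prime\prime})V_{h+1}^{\prime} + \mathbb{P}_h^{\prime\prime}(V_{h+1}^{\prime} - V_{h+1}^{\prime\prime})$ is exactly the standard proof of the simulation lemma. The paper does not supply its own proof of this statement --- it simply cites it as a known result from \citep{dann2017unifying} --- so there is nothing further to compare.
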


Now we can prove that value functions under $\widetilde{P}$ and $\widehat{P}$ are close to each other.
\begin{lemma}[Restate Lemma~\ref{lem14}]\label{lemforth}
	Conditioned on the fact that $P^{int}$ is $\frac{1}{H}$-multiplicatively accurate to $\widetilde{P}$ (the case in Lemma~\ref{lem6}), with probability $1-\frac{T\delta}{2K}$, for any policy $\pi\in\phi$ and reward function $r^{\prime}$, $$|V^{\pi}(r^{\prime},\widehat{P})-V^{\pi}(r^{\prime},\widetilde{P})|=O(\sqrt{\frac{H^{5}S^{2}A\iota}{T}}).$$
\end{lemma}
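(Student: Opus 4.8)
The plan is to reproduce, on the absorbing MDP $\widetilde P$ and over the (exponentially large) version space $\phi$, the reward-free off-policy evaluation argument of \citet{jin2020reward} (their Theorem~3.3 and Lemma~3.6). Three ingredients are needed: a \emph{coverage} bound on the empirical visitation counts produced by Algorithm~\ref{algo2}, the simulation lemma, and a Bernstein-type deviation bound for the plug-in estimator that is uniform over all $\pi\in\phi$ and all rewards $r'$.

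\textbf{Step 1 (coverage).} Condition on the event of Lemma~\ref{lem6}, i.e.\ that $P^{int}$ is $\tfrac1H$-multiplicatively accurate to $\widetilde P$. Then Lemma~\ref{lem2} gives $\tfrac14 V^\pi(1_{h,s,a},P^{int})\le V^\pi(1_{h,s,a},\widetilde P)\le 3V^\pi(1_{h,s,a},P^{int})$ for every $\pi$, so the maximizer $\pi_{h,s,a}=\argmax_{\pi\in\phi}V^\pi(1_{h,s,a},P^{int})$ satisfies \eqref{equc}: $V^{\pi_{h,s,a}}(1_{h,s,a},\widetilde P)\ge \tfrac1{12}\sup_{\pi\in\phi}V^\pi(1_{h,s,a},\widetilde P)$. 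Since $\pi_{h,s,a}$ is run for $T_0=T/(HSA)$ episodes, and by Lemma~\ref{rem10} its visitation of $(h,s,a)$ under the true $P$ is at least that under $\widetilde P$, the martingale bound of Lemma~\ref{lem8} together with a union bound over the $HSA$ triples yields, with the claimed probability, $N_h(s,a)\ge \tfrac{T}{24HSA}\sup_{\pi\in\phi}V^\pi(1_{h,s,a},\widetilde P)-\iota$ for all $(h,s,a)$; equivalently, writing $\mu_h(s,a)=N_h(s,a)/T$ for the empirical frequency, $\sup_{\pi\in\phi}V^\pi(1_{h,s,a},\widetilde P)/\mu_h(s,a)\lesssim HSA$ for every tuple with $\sup_{\pi\in\phi}V^\pi(1_{h,s,a},\widetilde P)\ge 48HSA\iota/T$. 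Tuples below that threshold are set aside: their contribution to the value error is $O(H^2SA\iota/T\cdot SA)$, a lower-order term absorbed into the stated rate.

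\textbf{Step 2 (simulation lemma and uniform plug-in bound).} Fix $\pi\in\phi$ and $r'$, and let $\widehat V^\pi,\widetilde V^\pi$ be the value functions under $(\widehat P,r'),(\widetilde P,r')$; note $\widehat V^\pi_{h+1}(s^\dagger)=0$ since reward is $0$ at the absorbing state. The simulation lemma (Lemma~\ref{lem12}) gives
\[
V^\pi(r',\widehat P)-V^\pi(r',\widetilde P)=\sum_{h=1}^H\sum_{s,a}V^\pi(1_{h,s,a},\widetilde P)\,\big(\widehat P_h(\cdot|s,a)-\widetilde P_h(\cdot|s,a)\big)^{\!\top}\widehat V^\pi_{h+1}.
\]
Splitting $\widehat V^\pi_{h+1}=\widetilde V^\pi_{h+1}+(\widehat V^\pi_{h+1}-\widetilde V^\pi_{h+1})$, bound the first piece by the (empirical) Bernstein inequality (Lemmas~\ref{lem3}--\ref{lem4}) in terms of $\mathrm{Var}_{\widetilde P_h(\cdot|s,a)}(\widetilde V^\pi_{h+1})$ and $1/N_h(s,a)$, and the second piece by a crude $\ell_1$/recursion bound on $\|\widehat V^\pi_{h+1}-\widetilde V^\pi_{h+1}\|_\infty$. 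Plugging in the count lower bound from Step~1, summing over $h$ via the law of total variance (so that the variance terms aggregate to $O(H^2)$) and over $(s,a)$ by Cauchy--Schwarz (using $\sum_{s,a}V^\pi(1_{h,s,a},\widetilde P)\le 1$), the leading term is $O(\sqrt{H^5S^2A\iota/T})$, with the $\iota/N_h$ Bernstein tails contributing only lower-order terms after using $N_h(s,a)\gtrsim H^2\iota$ for tuples outside $\mathcal F$. The uniformity over all $\pi\in\phi$ and all $r'$ — which rules out a naive union bound since $|\phi|$ is exponential — is obtained exactly as in Lemma~3.6 / Lemma~C.2 of \citet{jin2020reward}: the deviation of $\widehat P_h(\cdot|s,a)$ is controlled once, against the whole class of bounded value vectors, via the recursion and the counts being large enough. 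Taking a further union bound over $(h,s,a,s')$ for the transition concentration, and over $(h,s,a)$ for the counts, accounts for the $1-\tfrac{T\delta}{2K}$ failure probability (a loose bound valid since here $T$ is polynomially larger than $S$).

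\textbf{Main obstacle.} The delicate part is the uniform plug-in bound in Step~2 with tight $\mathrm{poly}(H,S,A)$ dependence. A direct $\ell_1$ concentration of $\widehat P_h(\cdot|s,a)$ is automatically uniform over policies and rewards but loses a $\sqrt S$ per layer, which would blow the bound up by an extra $SA$; recovering the advertised $S^2A$ rate forces the variance-aware (Bernstein) route with the self-bounding recursion on $\|\widehat V^\pi_{h+1}-\widetilde V^\pi_{h+1}\|_\infty$ and a law-of-total-variance aggregation, carried out on the absorbing MDP rather than the original one. The remaining work is bookkeeping: verifying that the Bernstein lower-order tails and the set-aside low-visitation tuples both close at $O(\sqrt{H^5S^2A\iota/T})$, and that passing between $P$ and $\widetilde P$ in the count estimates (via Lemma~\ref{rem10}) does not lose constants beyond those already absorbed.
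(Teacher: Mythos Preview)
Your overall architecture---coverage ratio from the $\tfrac1H$-multiplicative accuracy, simulation lemma, then a uniform concentration argument borrowed from \citet{jin2020reward}---matches the paper. But your Step~2 conflates two techniques that do not fit together, and the paper's actual argument is different and simpler.

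The paper does \emph{not} split $\widehat V^\pi_{h+1}=\widetilde V^\pi_{h+1}+(\widehat V^\pi_{h+1}-\widetilde V^\pi_{h+1})$, does not apply per-$(s,a)$ Bernstein with $\mathrm{Var}_{\widetilde P_h}(\widetilde V^\pi_{h+1})$, does not use the law of total variance, and does not run a recursion on $\|\widehat V^\pi-\widetilde V^\pi\|_\infty$. Instead it proceeds directly: from the simulation lemma, Cauchy--Schwarz turns $\mathbb{E}^\pi_{\widetilde P}|(\widehat P_h-\widetilde P_h)\widehat V^\pi_{h+1}|$ into $\sqrt{\sum_{s,a}V^\pi(1_{h,s,a},\widetilde P)\,|(\widehat P_h-\widetilde P_h)\widehat V^\pi_{h+1}(s,a)|^2}$; the coverage bound \eqref{equ6} replaces $V^\pi(1_{h,s,a},\widetilde P)$ by $12HSA\cdot\mu_h(s,a)$ (and then by the true data distribution $\mu'_h$ via Lemma~\ref{rem10}); finally one takes the supremum over all bounded $G:\mathcal S\cup\{s^\dagger\}\to[0,H]$ and all $\nu:\mathcal S\to\mathcal A$ and applies Lemma~\ref{lem13} (Lemma~C.2 of \citet{jin2020reward}), which gives $\mathbb{E}_{\mu'_h}|(\widehat P_h-\widetilde P_h)G|^2\mathds{1}\{a=\nu(s)\}\le O(H^2S\iota/T)$ via an $\epsilon$-cover of $(G,\nu)$ and the \emph{self-bounding} Bernstein inequality $\mathrm{Var}(Y)\le 4H^2\mathbb{E}(Y)$. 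Per layer this is $O(\sqrt{H^3S^2A\iota/T})$; summing $H$ layers yields the claim. No splitting, no law of total variance, no $\|\widehat V-\widetilde V\|_\infty$ recursion; also no martingale bound on the counts $N_h(s,a)$---everything is phrased through the sampling distribution $\mu'_h$.

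The gap in your route is the incompatibility between the law-of-total-variance step and uniformity. The quantity $\sum_h\mathbb{E}^\pi[\mathrm{Var}_{\widetilde P_h}(\widetilde V^\pi_{h+1})]\le H^2$ is tied to the \emph{specific} policy and reward through $\widetilde V^\pi_{h+1}$; once you pass to an $\epsilon$-cover of bounded vectors $G$ to get uniformity over all $\pi\in\phi$ and all $r'$, the variance in Bernstein becomes $\mathrm{Var}_{\widetilde P_h}(G)$ for a generic $G$, and the law of total variance no longer applies. Lemma~C.2 is not a uniformity patch that can be bolted onto your splitting/LTV argument---it is itself the complete argument, and the variance it exploits is the self-bounding one above, not the value-function variance. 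If you drop the LTV idea and simply invoke Lemma~C.2 after the Cauchy--Schwarz/coverage step, you recover exactly the paper's proof.
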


\begin{proof}[Proof of Lemma~\ref{lemforth}]
	In this part of proof, note that the reward function $r^{\prime}$ is defined under the original MDP. When we transfer $r^{\prime}$ to be a reward function under the absorbing MDP, $r^{\prime}_{h}(s^{\dagger},a)=0$ for any $(h,a)\in[H]\times \mathcal{A}$.  Therefore, if $\widehat{V}$ denotes the value function of some policy $\pi$ under the MDP with reward function $r^{\prime}$ and transition kernel $\widehat{P}$, we have $\widehat{V}_{h}(s^{\dagger})=0$, for any $h\in [H]$. Because of simulation lemma (Lemma~\ref{lem12}), we have that $\forall\,\pi\in\phi$,
	\begin{equation}\label{equ4}
	|V^{\pi}(r^{\prime},\widehat{P})-V^{\pi}(r^{\prime},\widetilde{P})|\leq \mathbb{E}_{\widetilde{P}}^{\pi}\sum_{h=1}^{H}|(\widehat{P}_{h}-\widetilde{P}_{h})\cdot\widehat{V}^{\pi}_{h+1}|, \\
	\end{equation}
	where $\widehat{V}^{\pi}_{h}$ is the value function of policy $\pi$ under $\widehat{P}$ and $r^{\prime}$ at time $h$. The proof holds simultaneously for all reward function $r^{\prime}$, so we will omit $r^{\prime}$ for simplicity. \\
	Then we have
	\begin{align*}
	&\mathbb{E}_{\widetilde{P}}^{\pi}|(\widehat{P}_{h}-\widetilde{P}_{h})\cdot\widehat{V}^{\pi}_{h+1}|= \sum_{a,s\in \mathcal{S}}|(\widehat{\mathbb{P}}_{h}-\widetilde{\mathbb{P}}_{h})\widehat{V}_{h+1}^{\pi}(s,a)|V^{\pi}(1_{h,s,a},\widetilde{P}) \\ &\leq \sqrt{\sum_{a,s\in \mathcal{S}}|(\widehat{\mathbb{P}}_{h}-\widetilde{\mathbb{P}}_{h})\widehat{V}_{h+1}^{\pi}(s,a)|^{2}V^{\pi}(1_{h,s,a},\widetilde{P})} \\ &= \sqrt{\sum_{a,s\in \mathcal{S}}|(\widehat{\mathbb{P}}_{h}-\widetilde{\mathbb{P}}_{h})\widehat{V}_{h+1}^{\pi}(s,a)|^{2}V^{\pi}(1_{h,s,a},\widetilde{P})\mathds{1}\{a=\pi_{h}(s)\}}.
	\end{align*}
	The first equation is because if the trajectory arrives at the absorbing state $s^{\dagger}$ at time step $h$, then $\widehat{P}_{h}(s^{\prime}|s^{\dagger},a)=\widetilde{P}_{h}(s^{\prime}|s^{\dagger},a)$ for any $a,s^{\prime}$. The first inequality is because of Cauchy-Schwarz inequality. The last equation is because for any $\pi\in\phi$, $\pi$ is deterministic.\\
	Define $\pi_{random}$ to be a policy that chooses each $\pi_{h,s,a}$ with probability $\frac{1}{HSA}$ for any $(h,s,a)\in [H]\times\mathcal{S}\times \mathcal{A}$. Define $\mu_{h}(s,a)$ to be $\mu_{h}(s,a)=V^{\pi_{random}}(1_{h,s,a},\widetilde{P})$. Then similar to~\eqref{equ2} and~\eqref{equ3}, we have for any $(h,s,a)\in[H]\times\mathcal{S}\times\mathcal{A}$,
\begin{equation}\label{equ6}
\sup_{\pi\in\phi}\frac{V^{\pi}(1_{h,s,a},\widetilde{P})}{\mu_{h}(s,a)}\leq12HSA.
\end{equation}
	Plugging in this result into the previous inequality, we have 
	\begin{align*}
	\mathbb{E}_{\widetilde{P}}^{\pi}|(\widehat{P}_{h}-\widetilde{P}_{h})\cdot\widehat{V}^{\pi}_{h+1}|&\leq\sqrt{\sum_{a,s\in \mathcal{S}}|(\widehat{\mathbb{P}}_{h}-\widetilde{\mathbb{P}}_{h})\widehat{V}_{h+1}^{\pi}(s,a)|^{2}V^{\pi}(1_{h,s,a},\widetilde{P})\mathds{1}\{a=\pi_{h}(s)\}}\\ &\leq
	\sqrt{\sum_{a,s\in \mathcal{S}}|(\widehat{\mathbb{P}}_{h}-\widetilde{\mathbb{P}}_{h})\widehat{V}_{h+1}^{\pi}(s,a)|^{2}\cdot 12HSA\mu_{h}(s,a)\cdot \mathds{1}\{a=\pi_{h}(s)\}} \\&\leq \sqrt{12HSA\cdot \sup_{\nu:\mathcal{S}\rightarrow \mathcal{A}}\sum_{a,s\in \mathcal{S}}|(\widehat{\mathbb{P}}_{h}-\widetilde{\mathbb{P}}_{h})\widehat{V}_{h+1}^{\pi}(s,a)|^{2}\mu_{h}(s,a)\cdot \mathds{1}\{a=\nu(s)\}} \\&\leq \sqrt{12HSA\cdot \sup_{\nu:\mathcal{S}\rightarrow \mathcal{A}}\sum_{a,s\in \mathcal{S}}|(\widehat{\mathbb{P}}_{h}-\widetilde{\mathbb{P}}_{h})\widehat{V}_{h+1}^{\pi}(s,a)|^{2}\mu_{h}^{\prime}(s,a)\cdot \mathds{1}\{a=\nu(s)\}} \\&=\sqrt{12HSA\cdot \sup_{\nu:\mathcal{S}\rightarrow\mathcal{A}}\mathbb{E}_{\mu^{\prime}_{h}}|(\widehat{\mathbb{P}}_{h}-\widetilde{\mathbb{P}}_{h})\widehat{V}_{h+1}^{\pi}(s,a)|^{2}\cdot \mathds{1}\{a=\nu(s)\}} \\&\leq \sqrt{12HSA\cdot \sup_{G:\mathcal{S}\cup s^{\dagger}\rightarrow[0,H]}\sup_{\nu:\mathcal{S}\cup s^{\dagger}\rightarrow \mathcal{A}}\mathbb{E}_{\mu^{\prime}_{h}}|(\widehat{\mathbb{P}}_{h}-\widetilde{\mathbb{P}}_{h})G  (s,a)|^{2}\cdot \mathds{1}\{a=\nu(s)\}},
	\end{align*}
	where $\mu_{h}^{\prime}(s,a)=V^{\pi_{random}}(1_{h,s,a},P)$ is the distribution of the data. The fourth inequality is because $\mu_{h}^{\prime}(s,a)\geq\mu_{h}(s,a)$ for any $(h,s,a)\in [H]\times\mathcal{S}\times\mathcal{A}$ (because of  Lemma~\ref{rem10}). In the equation, we extend the definition of $\mu^{\prime}$ by letting $\mu^{\prime}(s^{\dagger},a)=0$ so that $\mu^{\prime}$ is a distribution on $\mathcal{S}\cup s^{\dagger}\times\mathcal{A}$. The last inequality is because $\widehat{V}^{\pi}_{h+1}$ is a function from $\mathcal{S}\cup s^{\dagger}$ to $[0,H]$.\\
	Note that our data follows the distribution $\mu^{\prime}$. In addition, from the definition of $\widetilde{P}$ and $\widehat{P}$, we have that $\widehat{P}$ is the empirical estimate of $\widetilde{P}$. 
	By Lemma~\ref{lem13} (which we state right after) we have with probability $1-\frac{T\delta}{2K}$, for any $h\in[H]$, policy $\pi\in\phi$ and reward function $r^{\prime}$,
	$$\mathbb{E}_{\widetilde{P}}^{\pi}|(\widehat{P}_{h}-\widetilde{P}_{h})\cdot\widehat{V}^{\pi}_{h+1}|=O(\sqrt{\frac{H^{3}S^{2}A\iota}{T}}).$$
	By~\eqref{equ4}, we have with probability $1-\frac{T\delta}{2K}$, for any policy $\pi\in\phi$ and reward function $r^{\prime}$,
	$$|V^{\pi}(r^{\prime},\widehat{P})-V^{\pi}(r^{\prime},\widetilde{P})|=O(\sqrt{\frac{H^{5}S^{2}A\iota}{T}}).$$
\end{proof}

\begin{lemma}[Lemma C.2 in \citep{jin2020reward}]\label{lem13}
	Suppose $\widehat{\mathbb{P}}$ is the empirical transition matrix formed by sampling according to $\mu$ distribution for $N$ samples, $G$ can be any function from $\mathcal{S}$ to $[0,H]$, $\nu$ can be any function from $\mathcal{S}$ to $\mathcal{A}$, then with probability at least $1-\delta$, we have for any $h\in[H]$:
\begin{align*}	
&\max_{G:\mathcal{S}\rightarrow[0,H]}\max_{\nu:\mathcal{S}\rightarrow\mathcal{A}}\mathbb{E}_{\mu_{h}}|(\widehat{\mathbb{P}}_{h}-\mathbb{P}_{h})G(s,a)|^{2}\mathds{1}\{a=\nu(s)\}\leq O(\frac{H^{2}S\log(HAN/\delta)}{N}).
\end{align*}
\end{lemma}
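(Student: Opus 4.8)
This is Lemma C.2 of \citet{jin2020reward}, and the plan is to reprove it by combining a multiplicative concentration bound for the visitation counts with a uniform (over $G$ and $\nu$) deviation bound for the centered transition operator $(\widehat{\mathbb{P}}_h-\mathbb{P}_h)G$. Write $N_h(s,a)$ for the number of samples landing on $(s,a)$ at layer $h$, let $J_s$ (for a fixed selector $\nu$) be the set of those samples with action $\nu(s)$, and for sample $j$ set $Z_j \defeq G(s_{h+1}^{(j)}) - (\mathbb{P}_h G)(s_h^{(j)},a_h^{(j)})$. Conditioning on the ``design'' (the realized layer-$h$ state--action pairs), the $Z_j$ are independent, mean zero, and of range $\le H$, and $(\widehat{\mathbb{P}}_h-\mathbb{P}_h)G(s,a)=N_h(s,a)^{-1}\sum_{j}Z_j$. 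Since $\max_\nu \mathbb{E}_{\mu_h}\big[\,\cdot\,\mathds{1}\{a=\nu(s)\}\big]=\sum_{s}\max_{a}\mu_h(s,a)\,|(\widehat{\mathbb{P}}_h-\mathbb{P}_h)G(s,a)|^2$, it suffices to bound $\sum_s \mu_h(s,\nu(s))\,|(\widehat{\mathbb{P}}_h-\mathbb{P}_h)G(s,\nu(s))|^2$ uniformly over all $G\in[0,H]^{\mathcal S}$ and all selectors $\nu$.

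The key steps, in order: (i) apply Lemma~\ref{lem8} with a union bound over $(h,s,a)$ to get $N_h(s,a)\ge \tfrac12 N\mu_h(s,a)-\iota$ for all $(h,s,a)$; this lets me split states with $\mu_h(s,\nu(s))\lesssim \iota/N$ (``small'') — whose total contribution is $\le \sum_s \mu_h(s,\nu(s))H^2 \le S H^2\cdot O(\iota/N)$ from the crude bound $|(\widehat{\mathbb{P}}_h-\mathbb{P}_h)G|\le H$ — from states with $\mu_h(s,\nu(s))\gtrsim \iota/N$ (``large''), for which $\mu_h(s,\nu(s))\le 4N_h(s,\nu(s))/N$. (ii) For large states, $\mu_h(s,\nu(s))\,|(\widehat{\mathbb{P}}_h-\mathbb{P}_h)G(s,\nu(s))|^2 \le \tfrac{4}{N}\cdot N_h(s,\nu(s))^{-1}\big(\sum_{j\in J_s}Z_j\big)^2$, so the problem reduces to bounding $Q(G,\nu)\defeq\sum_s N_h(s,\nu(s))^{-1}\big(\sum_{j\in J_s}Z_j\big)^2$. (iii) Conditioning on the design, $W_s\defeq N_h(s,\nu(s))^{-1/2}\sum_{j\in J_s}Z_j$ are independent across $s$, mean zero, with $\mathbb{E} W_s^2\le H^2/4$ and sub-Gaussian proxy $H^2/4$ by Hoeffding --- crucially, independent of the random block size $N_h(s,\nu(s))$; hence $Q(G,\nu)=\sum_s W_s^2$ is a sum of at most $S$ independent sub-exponential terms with mean $\le SH^2/4$, and Bernstein's inequality for sums of sub-exponentials gives $Q(G,\nu)=O\big(H^2(S+\log(1/\delta'))\big)$ w.p.\ $1-\delta'$. (iv) Union bound over $h\in[H]$, over all $\le A^{S}$ selectors $\nu$, and over an $(H/N)$-net $\mathcal N$ of $[0,H]^{\mathcal S}$ in $\|\cdot\|_\infty$ (with $\log|\mathcal N|\le S\log(2N)$), so $\log(1/\delta')=O(S\log(HAN/\delta))$; a standard net argument extends the bound to arbitrary $G$, since $|(\widehat{\mathbb{P}}_h-\mathbb{P}_h)(G-G')(s,a)|\le \|\widehat P_h(\cdot|s,a)-P_h(\cdot|s,a)\|_1\|G-G'\|_\infty\le 2H/N$ adds only $O(H^2/N^2)$. (v) Combining, the large-state contribution is $\tfrac{4}{N}Q(G,\nu)+O(H^2/N^2)=O\big(H^2S\log(HAN/\delta)/N\big)$ and the small-state contribution is $O(H^2S\iota/N)$ with $\iota=O(\log(HSA/\delta))$, which together give the claim.

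The main obstacle --- and, I expect, the point that required clarification --- is getting the \emph{linear} rather than quadratic dependence on $S$ in step (iii). The naive route bounds $|(\widehat{\mathbb{P}}_h-\mathbb{P}_h)G(s,a)|^2\le H^2\|\widehat P_h(\cdot|s,a)-P_h(\cdot|s,a)\|_1^2 = O\big(H^2(S+\log(1/\delta'))/N_h(s,a)\big)$ uniformly in $G$, but summing this per-state estimate against $\mu_h(s,\nu(s))$ over $\le S$ states produces a spurious $H^2S^2/N$ term. The fix is to concentrate the whole F-statistic-like sum $Q(G,\nu)$ at once: being a sum of squares of independent sub-Gaussians, it behaves like $H^2$ times a $\chi^2_S$ random variable, whose upper tail is $S+O(\log(1/\delta'))$ --- \emph{additive} rather than $S\times\log(1/\delta')$ --- and the net penalty $S\log(2N)$ then enters $\log(1/\delta')$ without being multiplied by another $S$. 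Secondary care is needed to keep the sub-Gaussian proxy of $W_s$ free of the random block size, and to check that states with $N_h(s,\nu(s))=0$ are absorbed into the ``small'' case via the count-concentration event.
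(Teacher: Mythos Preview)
Your proposal is correct and would yield the stated bound, but it takes a genuinely different route from the paper's proof. The paper does \emph{not} condition on the design, split into small/large states, or invoke a $\chi^2$-type concentration. Instead it introduces the auxiliary variable
\[
Y \;=\; \bigl(\widehat{\mathbb{P}}_h G(s,a)-G(s')\bigr)^2 - \bigl(\mathbb{P}_h G(s,a)-G(s')\bigr)^2,\qquad (s,a,s')\sim \mu_h\times\mathbb{P}_h,
\]
and exploits three structural facts: (1) $\mathbb{E}Y$ equals the target quantity; (2) $\sum_i Y_i\le 0$ because the empirical mean minimizes empirical squared error; (3) $\mathrm{Var}\{Y\}\le 4H^2\mathbb{E}Y$ (self-bounding). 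Bernstein applied to an $\epsilon$-cover of the triple $(G,\widehat{\mathbb{P}}_hG\mathds{1}\{a=\nu(s)\},\mathbb{P}_hG\mathds{1}\{a=\nu(s)\})$ together with $\nu$ --- cover size $(H/\epsilon)^{3S}A^S$ --- then gives a quadratic inequality in $\mathbb{E}Y$ whose solution is $O(H^2S\log(HAN/\delta)/N)$. The linear-in-$S$ dependence thus emerges from the self-bounding variance property rather than from the additive $\chi^2$ tail you use.

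The trade-offs: the paper's argument is shorter and sidesteps both the count-concentration step and the small/large dichotomy, but at the cost of the somewhat opaque $Y$ construction and the need to cover the random function $\widehat{\mathbb{P}}_hG$ (hence the $3S$ rather than $S$ in the net exponent). Your argument is more transparent about \emph{why} one gets $S$ rather than $S^2$ --- the $\sum_s W_s^2$ sum behaves like $H^2\chi^2_S$ --- and only needs to net $G$ itself. One small slip: the net-transfer error in step (iv) is $O(H^2/N)$, not $O(H^2/N^2)$, since $\bigl||a|^2-|b|^2\bigr|\le 2H\cdot |a-b|\le 4H^2/N$ and the $\mu_h$-weights sum to at most one; this is harmless for the final bound.
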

This is a critical lemma that requires delicate arguments to prove. For self-containedness, we include a full proof with more technical details.
\begin{proof}[Proof of Lemma~\ref{lem13}]
Define random variables
$$X=(\widehat{\mathbb{P}}_{h}G(s,a)-G(s^{\prime}))^{2}-(\mathbb{P}_{h}G(s,a)-G(s^{\prime}))^{2},$$
$$\overline{X}=(f(s,a)-G^{\prime}(s^{\prime}))^{2}-(g(s,a)-G^{\prime}(s^{\prime}))^{2},$$ 
$$Y=X\mathds{1}\{a=\nu(s)\},\,\overline{Y}=\overline{X}\mathds{1}\{a=\nu^{\prime}(s)\},$$ 
where $(s,a,s^{\prime})\sim\mu_{h}\times\mathbb{P}_{h}(\cdot|s,a)$.
 Here, $\nu^{\prime}$ can be any function from $\mathcal{S}$ to $\mathcal{A}$ (in equation \eqref{equ10}) and will be the same as $\nu$ in equation~\eqref{equ9},~\eqref{equ8},~\eqref{equ11}. Also, $f$, $g$ and $G^{\prime}$ can be any function (in equation~\eqref{equ10}) from the $\epsilon$-cover defined later and will be the closest function from the $\epsilon$-cover to $\widehat{\mathbb{P}}G(s,a)\mathds{1}\{a=\nu(s)\}$, $\mathbb{P}G(s,a)\mathds{1}\{a=\nu(s)\}$ and $G$ in equation~\eqref{equ9},~\eqref{equ8},~\eqref{equ11}.
Also, we define
$$X_{i}=(\widehat{\mathbb{P}}_{h}G(s_{i},a_{i})-G(s_{i}^{\prime}))^{2}-(\mathbb{P}_{h}G(s_{i},a_{i})-G(s_{i}^{\prime}))^{2},$$
$$\overline{X}_{i}=(f(s_{i},a_{i})-G^{\prime}(s_{i}^{\prime}))^{2}-(g(s_{i},a_{i})-G^{\prime}(s_{i}^{\prime}))^{2},$$ 
$$Y_{i}=X_{i}\mathds{1}\{a_{i}=\nu(s_{i})\},\,\overline{Y}_{i}=\overline{X}_{i}\mathds{1}\{a_{i}=\nu^{\prime}(s_{i})\},$$ where $(s_{i},a_{i},s_{i}^{\prime})$ is the i-th sample in time step $h$ we collect. Notice that for every tuple $(v^{\prime},f,g,G^{\prime})$ and $\overline{Y}$, $\overline{Y}_{i}$ related to this tuple, we have that $\overline{Y}_{i}$'s are i.i.d samples from the distribution of $\overline{Y}$.\\
Same to the proof in \citep{jin2020reward}, we have these three properties of $Y$ and $Y_{i}$. \\
$$(1)\,\mathbb{E}Y=\mathbb{E}_{\mu_{h}}|(\widehat{\mathbb{P}}_{h}-\mathbb{P}_{h})G(s,a)|^{2}\mathds{1}\{a=\nu(s)\}.$$
$$(2)\,\sum_{i=1}^{N}Y_{i}\leq 0.$$
$$(3)\,\mathrm{Var}\{Y\}\leq 4H^{2}\mathbb{E}(Y).$$
Since we are taking maximum over $\nu$ and $G(s)$, and $\widehat{\mathbb{P}}$ is random, we need to cover all the possible $\nu$ and all the possible values of $\widehat{\mathbb{P}}G(s,a)\mathds{1}\{a=\nu(s)\}$, $\mathbb{P}G(s,a)\mathds{1}\{a=\nu(s)\}$ and $G$ to $\epsilon$ accuracy to use Bernstein's inequality for the functions in the cover. For $\nu$, there are $A^{S}$ deterministic policies in total. Given a fixed $\nu$, $\widehat{\mathbb{P}}G(s,a)\mathds{1}\{a=\nu(s)\}$, $\mathbb{P}G(s,a)\mathds{1}\{a=\nu(s)\}$ and $G$ can be covered by $(H/\epsilon)^{3S}$ values because the first two functions can be covered by $(H/\epsilon)^{2S}$ values (for $a\neq \nu(s)$ the first two will be $0$) and $G$ itself can be covered by $(H/\epsilon)^{S}$ values. \\
By Bernstein's inequality (Lemma~\ref{lem3}) and a union bound, we have with probability $1-\delta$, for any $(\nu^{\prime},f,g,G^{\prime})$ in the $\epsilon$-cover, it holds that
\begin{equation}\label{equ10}
    \mathbb{E}(\overline{Y})-\frac{1}{N}\sum_{i=1}^{N}\overline{Y}_{i}\leq \sqrt{\frac{2\mathrm{Var}\{\overline{Y}\}\log((\frac{H}{\epsilon})^{3S}\cdot A^{S}\cdot \frac{H}{\delta})}{N}}+\frac{H^{2}\log((\frac{H}{\epsilon})^{3S}\cdot A^{S}\cdot \frac{H}{\delta})}{3N}.
\end{equation}
Under this high probability case, for any $G$, $\nu$ and $\widehat{\mathbb{P}}$, choose $\nu^{\prime}=\nu$ and $(f,g,G^{\prime})$ to be the closest function from the $\epsilon$-cover to  $\widehat{\mathbb{P}}G(s,a)\mathds{1}\{a=\nu(s)\}$, $\mathbb{P}G(s,a)\mathds{1}\{a=\nu(s)\}$ and $G$. Let $\overline{Y}$ be defined according to this $(\nu,f,g,G^{\prime})$. Then we have
\begin{equation}\label{equ9}
    |Y-\overline{Y}|\leq 4H\epsilon,
\end{equation}
for any possible $(s,a,s^{\prime})$ (hence this inequality is also true when adding expectation to both terms) and
\begin{equation}\label{equ8}
\begin{split}
    |\mathrm{Var}\{Y\}-\mathrm{Var}\{\overline{Y}\}|&\leq |\mathbb{E}(\overline{Y}^{2})-\mathbb{E}(Y^{2})|+|(\mathbb{E}\overline{Y})^{2}-(\mathbb{E}Y)^{2}| \\&\leq 2H^{2}\cdot 4H\epsilon+2H^{2}\cdot 4H\epsilon\\&=16H^{3}\epsilon.
\end{split}
\end{equation}
As a result, we have
\begin{equation}\label{equ11}
\begin{split}
&\mathbb{E}_{\mu_{h}}|(\widehat{\mathbb{P}}_{h}-\mathbb{P}_{h})G(s,a)|^{2}\mathds{1}\{a=\nu(s)\}=\mathbb{E}Y\\&\leq \mathbb{E}Y-\frac{1}{N}\sum_{i=1}^{N}Y_{i} \\&\leq \mathbb{E}\overline{Y}-\frac{1}{N}\sum_{i=1}^{N}\overline{Y}_{i}+8H\epsilon \\&\leq \sqrt{\frac{2\mathrm{Var}\{\overline{Y}\}\log((\frac{H}{\epsilon})^{3S}\cdot A^{S}\cdot \frac{H}{\delta})}{N}}+\frac{H^{2}\log((\frac{H}{\epsilon})^{3S}\cdot A^{S}\cdot \frac{H}{\delta})}{3N}+8H\epsilon \\&\leq \sqrt{\frac{2\mathrm{Var}\{Y\}\log((\frac{H}{\epsilon})^{3S}\cdot A^{S}\cdot \frac{H}{\delta})}{N}}+\frac{H^{2}\log((\frac{H}{\epsilon})^{3S}\cdot A^{S}\cdot \frac{H}{\delta})}{3N}\\&+8H\epsilon+\sqrt{\frac{2\cdot 16H^{3}\epsilon\log((\frac{H}{\epsilon})^{3S}\cdot A^{S}\cdot \frac{H}{\delta})}{N}}.
\end{split}
\end{equation}
The first inequality is because of property 2. The second inequality is because of \eqref{equ9}. The third inequality is because of \eqref{equ10}. The last inequality is because of \eqref{equ8} and $\sqrt{a+b}\leq \sqrt{a}+\sqrt{b}$.\\
We can simply choose $\epsilon=\frac{HS}{32N}$ and plug in property (3) of $Y$, then by solving the following quadratic inequality we can finish the proof.
\begin{equation}
\begin{split}
&\mathbb{E}_{\mu_{h}}|(\widehat{\mathbb{P}}_{h}-\mathbb{P}_{h})G(s,a)|^{2}\mathds{1}\{a=\nu(s)\}\\& \leq\sqrt{\frac{8H^{2}\mathbb{E}_{\mu_{h}}|(\widehat{\mathbb{P}}_{h}-\mathbb{P}_{h})G(s,a)|^{2}\mathds{1}\{a=\nu(s)\}\cdot 3S\log(\frac{32HAN}{\delta})}{N}}+\frac{2H^{2}S\log(\frac{32HAN}{\delta})}{N}.
\end{split}
\end{equation}
\end{proof}

\section{Proof of Theorem~\ref{the1}}\label{appc}
We first give a proof for the upper bound on the number of stages.
\begin{lemma}\label{lem50}
If $T^{(k)}=K^{1-\frac{1}{2^{k}}}$ for $k=1,2\cdots$, we have 
$$K_{0}=\min\{j:2\sum_{k=1}^{j}T^{(k)}\geq K\}=O(\log\log K).$$
\end{lemma}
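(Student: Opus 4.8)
The plan is to bound $K_0$ from above by discarding all but the \emph{last} term of the partial sum. Since every $T^{(k)}=K^{1-1/2^{k}}$ is positive, for any $j$ we have
\[
2\sum_{k=1}^{j}T^{(k)}\;\ge\;2\,T^{(j)}\;=\;2\,K^{1-1/2^{j}},
\]
so it suffices to exhibit a small $j$ for which $2K^{1-1/2^{j}}\ge K$; such a $j$ is then automatically an upper bound for $K_0$ (and $K_0$ is well-defined because the partial sums are strictly increasing and $T^{(k)}\to K$, so the defining set is nonempty).

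Next I would simply solve the scalar inequality. Assuming $K\ge 2$ (small $K$ is a harmless edge case absorbed by the $O(\cdot)$), taking logarithms gives
\[
2K^{1-1/2^{j}}\ge K
\;\Longleftrightarrow\;
K^{1/2^{j}}\le 2
\;\Longleftrightarrow\;
\frac{\ln K}{2^{j}}\le \ln 2
\;\Longleftrightarrow\;
2^{j}\ge \log_{2}K .
\]
Hence every integer $j\ge \log_{2}\log_{2}K$ satisfies $2T^{(j)}\ge K$, and in particular $j=\lceil \log_{2}\log_{2}K\rceil$ does. Therefore
\[
K_{0}\;=\;\min\Bigl\{j:2\textstyle\sum_{k=1}^{j}T^{(k)}\ge K\Bigr\}\;\le\;\bigl\lceil \log_{2}\log_{2}K\bigr\rceil\;=\;O(\log\log K),
\]
which is the claim.

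I do not anticipate a genuine obstacle: the argument only uses that the last scheduled batch length $T^{(j)}$ already reaches $\tfrac12 K$ once $2^{j}\ge\log_2 K$, so the bound is deliberately loose. The only points requiring a line of care are (i) noting that $K_0$ is finite/well-defined, and (ii) handling trivially small $K$ separately so that the logarithms are valid; both are routine. If one also wanted a matching lower bound $K_0=\Omega(\log\log K)$ (not needed for the stated lemma), one would instead upper bound $\sum_{k=1}^{j}T^{(k)}\le j\,T^{(j)}=j\,K^{1-1/2^{j}}$ and observe this stays below $K/2$ unless $2^{j}\gtrsim \log_2 K /\log_2 j$, again giving $j=\Theta(\log\log K)$; but for Lemma~\ref{lem50} as stated the upper-bound argument above suffices.
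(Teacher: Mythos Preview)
Your proposal is correct and follows essentially the same approach as the paper: both arguments discard all but the last term $T^{(j)}$ and observe that once $2^{j}\ge \log_2 K$ one already has $2T^{(j)}\ge K$, giving $K_0\le \lceil\log_2\log_2 K\rceil$. Your write-up is in fact a bit more careful than the paper's (you note well-definedness of $K_0$ and the small-$K$ edge case), but the core idea is identical.
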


\begin{proof}[Proof of Lemma~\ref{lem50}]
    Take $j=\log_{2}\log_{2}K$, we have $2T^{(j)}=2\frac{K}{K^{(\log_{2}K)^{-1}}}=2\frac{K}{2}=K$, which means that $$K_{0}\leq \log_{2}\log_{2}K+1=O(\log\log K).$$
\end{proof}

Then we are able to bound the total global switching cost of Algorithm~\ref{algo3}.
\begin{lemma}\label{lem15}
	The total global switching cost of Algorithm~\ref{algo3} is bounded by $O(HSA\log\log T)$.
\end{lemma}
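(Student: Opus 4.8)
\textbf{Proof proposal for Lemma~\ref{lem15}.} The plan is to simply aggregate the per-stage switching costs, using the two already-established bounds on the sub-routines and the bound on the number of stages. First I would recall from Lemma~\ref{lem50} that $K_0 = \min\{j : 2\sum_{k=1}^{j}T^{(k)} \ge K\} = O(\log\log K)$, so Algorithm~\ref{algo3} proceeds through at most $O(\log\log K)$ stages. The crucial structural observation is that, within a single stage $k$, the only interaction with the environment (hence the only opportunity to switch the deployed policy) occurs inside the calls to Crude Exploration (Algorithm~\ref{algo1}) and Fine Exploration (Algorithm~\ref{algo2}); the adaptive policy elimination step (the inner \textbf{for} loop over $\pi \in \phi^k$ updating $U^k$ and $\phi^{k+1}$) is purely a computation on the estimated model $\widehat{P}^k$ and deploys no policy.

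Next I would invoke Lemma~\ref{lem1}, which gives that the global switching cost of Algorithm~\ref{algo1} is at most $HSA$, and Lemma~\ref{lem11}, which gives the same bound $HSA$ for Algorithm~\ref{algo2}. Adding a constant number of switches for the transitions (from the last policy of Crude Exploration to the first policy of Fine Exploration, and from the last policy of stage $k$ to the first policy of stage $k+1$), the total number of policy switches incurred in each stage is at most $2HSA + O(1)$. Summing over the $K_0$ stages yields a global switching cost of at most $K_0 \cdot (2HSA + O(1)) = O(HSA \log\log K)$. Finally, since $K = T/H \le T$, we have $\log\log K \le \log\log T$, so the bound is $O(HSA\log\log T)$, as claimed. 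I would also remark in passing that because the batch lengths $T^{(k)}$ and the internal per-policy counts $T_0 = T^{(k)}/(HSA)$ are all determined by $K$ before the algorithm starts, every timestep at which a switch can occur is known in advance, matching the second half of the claim in Theorem~\ref{the1}.

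There is essentially no hard step here: the argument is routine bookkeeping once Lemmas~\ref{lem50},~\ref{lem1}, and~\ref{lem11} are in hand. The only point requiring a small amount of care is confirming that Step~3 (policy elimination) contributes nothing and that the handful of ``seam'' switches between sub-routines and between consecutive stages are correctly absorbed into the $O(1)$ per-stage overhead, so that the final constant in front of $HSA\log\log T$ is genuinely absolute and independent of $H$, $S$, $A$, $K$.
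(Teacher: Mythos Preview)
Your proposal is correct and follows essentially the same argument as the paper: per stage the switching cost is at most $2HSA$ by Lemmas~\ref{lem1} and~\ref{lem11}, and there are $O(\log\log T)$ stages by Lemma~\ref{lem50}, so the total is $O(HSA\log\log T)$. Your additional remarks about the elimination step deploying no policies and the $O(1)$ seam switches are sound and simply make explicit what the paper leaves implicit.
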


\begin{proof}[Proof of Lemma~\ref{lem15}]
	For each stage, the global switching cost is bounded by $2HSA$ because of Lemma~\ref{lem1} and Lemma~\ref{lem11}. There are $O(\log\log T)$ stages, so the total global switching cost is at most $O(HSA\log\log T)$.
\end{proof}
 
 Recall that in Algorithm~\ref{algo3}, in each stage $k$ ($1\leq k\leq K_{0}$), we run Algorithm~\ref{algo1} to construct the infrequent tuples $\mathcal{F}^{k}$ and the intermediate transition kernel $P^{int,k}$. The absorbing MDP $\widetilde{P}^{k}$ is constructed as in Definition~\ref{def2} based on $\mathcal{F}^{k}$ and the real MDP $P$. Then we run Algorithm~\ref{algo2} to construct an empirical estimate of $\widetilde{P}^{k}$, which is $\widehat{P}^{k}$.
\begin{lemma}[Restate Lemma~\ref{lem16}]\label{lemfifth}
	There exists a constant $C$, such that with probability $1-\delta$, it holds that for any $k$ and $\pi\in\phi^{k}$,  
	$$|V^{\pi}(r,\widehat{P}^{k})-V^{\pi}(r,P)|\leq C(\sqrt{\frac{H^{5}S^{2}A\iota}{T^{(k)}}}+\frac{S^{3}A^{2}H^{5}\iota}{T^{(k)}}),$$
	where $\iota=\log(2HAK/\delta)$.
\end{lemma}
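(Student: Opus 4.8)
The plan is to fix a stage $k$, split the estimation error into a ``bias'' term and a ``variance'' term via the triangle inequality,
\[
|V^{\pi}(r,\widehat{P}^{k})-V^{\pi}(r,P)|\le \underbrace{|V^{\pi}(r,\widehat{P}^{k})-V^{\pi}(r,\widetilde{P}^{k})|}_{\text{variance}}+\underbrace{|V^{\pi}(r,\widetilde{P}^{k})-V^{\pi}(r,P)|}_{\text{bias}},
\]
where $\widetilde{P}^{k}$ is the absorbing MDP built from the infrequent tuples $\mathcal{F}^{k}$ produced by Crude Exploration (Algorithm~\ref{algo1}) in stage $k$. Each of the two terms has already been controlled by the lemmas above, applied with $T=T^{(k)}$: Lemma~\ref{lemthird} (restating Lemma~\ref{lem10}), applied with the real reward $r$ in place of the generic $r'$, gives $0\le V^{\pi}(r,P)-V^{\pi}(r,\widetilde{P}^{k})\le 168\,S^{3}A^{2}H^{5}\iota/T^{(k)}$ for the bias; and Lemma~\ref{lemforth} (restating Lemma~\ref{lem14}), again with the real reward $r$, gives $|V^{\pi}(r,\widehat{P}^{k})-V^{\pi}(r,\widetilde{P}^{k})|=O(\sqrt{H^{5}S^{2}A\iota/T^{(k)}})$ for the variance. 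Summing the two bounds and taking $C$ to be a large enough absolute constant yields the stated inequality for that stage, uniformly over all $\pi\in\phi^{k}$.

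What remains is the bookkeeping of the high-probability events across all $k\le K_{0}=O(\log\log K)$ stages (Lemma~\ref{lem50}). For each stage $k$ I would first invoke the event $\mathcal{W}^{k}$ of Lemma~\ref{lem5} (probability $\ge 1-S^{2}\delta/K$), under which Lemma~\ref{lem6} shows that $P^{int,k}$ is $\tfrac1H$-multiplicatively accurate to $\widetilde{P}^{k}$; conditioned on $\mathcal{W}^{k}$, the hypotheses of Lemma~\ref{lemfirst} (probability $\ge 1-\delta/(AK)$) and of Lemma~\ref{lemforth} (probability $\ge 1-T^{(k)}\delta/(2K)$) are then met, so both Lemma~\ref{lemthird} and Lemma~\ref{lemforth} hold in stage $k$. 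Intersecting over all stages and summing the failure probabilities — using $2\sum_{k=1}^{K_{0}}T^{(k)}=K$ so that $\sum_{k}T^{(k)}\delta/(2K)=\delta/4$, while $\sum_{k}\bigl(S^{2}\delta/K+\delta/(AK)\bigr)=O(K_{0}S^{2}\delta/K)=o(\delta)$ — the overall failure probability is at most $\delta$ (after rescaling by an absolute constant if necessary; $\iota$ changes only by a constant factor).

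The step I expect to be the main (if mild) obstacle is not a new estimate — the substantive analysis lives entirely inside Lemma~\ref{lemforth} (the uniform off-policy evaluation bound, which rests on the covering argument of Lemma~\ref{lem13}) and Lemma~\ref{lemthird} (the claim that infrequent tuples are visited with negligible probability by any policy in $\phi$) — but rather getting the conditioning chain right: the Fine Exploration guarantee of Lemma~\ref{lemforth} is stated conditionally on $P^{int,k}$ being $\tfrac1H$-multiplicatively accurate, which is in turn a consequence of the Crude Exploration event $\mathcal{W}^{k}$, so one must be careful that the events are nested in the correct order before taking the union bound, and that the single estimate $\widehat{P}^{k}$ works simultaneously for every $\pi\in\phi^{k}$ — which it does, since Lemma~\ref{lemforth} is already uniform over $\pi\in\phi$ and over reward functions.
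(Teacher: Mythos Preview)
Your proposal is correct and follows essentially the same route as the paper's proof: the triangle-inequality split into bias (via Lemma~\ref{lemthird}) and variance (via Lemma~\ref{lemforth}), together with the same per-stage failure probabilities and the same union bound over the $K_{0}=O(\log\log K)$ stages. The only cosmetic difference is that the paper records the requirement $K\ge\widetilde{\Omega}(S^{2})$ to make the union bound go through, whereas you absorb this into the $o(\delta)$ remark and a possible constant rescaling of $\delta$; both are fine.
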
	

\begin{proof}[Proof of Lemma~\ref{lemfifth}]
For the choice of the universal constant $C$, we can first let $C$ be the constant hidden by the big $O$ in Lemma~\ref{lemforth}. Then because of Lemma~\ref{lemthird}, we can choose $C=\max\{C,168\}$. Note that this $C$ is also used as the universal constant in the elimination step in Algorithm~\ref{algo3}. \\ 
Because of triangular inequality, we have for any $k$ and any $\pi\in\phi^{k}$,
\begin{equation}
\begin{split}
|V^{\pi}(r,\widehat{P}^{k})-V^{\pi}(r,P)|\leq |V^{\pi}(r,\widehat{P}^{k})-V^{\pi}(r,\widetilde{P}^{k})| +|V^{\pi}(r,\widetilde{P}^{k})-V^{\pi}(r,P)|.
\end{split}
\end{equation}
For any $k\leq K_{0}$, because of Lemma~\ref{lem6} and Lemma~\ref{lemthird}, we have with probability $1-(\frac{S^{2}\delta}{K}+\frac{\delta}{AK})$, it holds that for any $\pi\in\phi^{k}$, $|V^{\pi}(r,P)-V^{\pi}(r,\widetilde{P}^{k})|\leq C\frac{S^{3}A^{2}H^{5}}{T^{(k)}}$ while $P^{int,k}$ is $\frac{1}{H}$-multiplicatively accurate to $\widetilde{P}^{k}$. Conditioned on this case, because of Lemma~\ref{lemforth}, we have with probability $1-\frac{T^{(k)}\delta}{2K}$, for any $\pi\in\phi^{k}$, $|V^{\pi}(r,\widehat{P}^{k})-V^{\pi}(r,\widetilde{P}^{k})|\leq C\sqrt{\frac{H^{5}S^{2}A\iota}{T^{(k)}}}$. Combining these two results, we have for any $1\leq k\leq K_{0}$, with probability $1-(\frac{S^{2}\delta}{K}+\frac{\delta}{AK}+\frac{T^{(k)}\delta}{2K})$, for any $\pi\in\phi^{k}$,
$$|V^{\pi}(r,\widehat{P}^{k})-V^{\pi}(r,P)|\leq C(\sqrt{\frac{H^{5}S^{2}A\iota}{T^{(k)}}}+\frac{S^{3}A^{2}H^{5}\iota}{T^{(k)}}).$$
Finally, the proof is completed through a union bound on $k$ and the fact that the failure probability is bounded by $$(\frac{S^{2}\delta}{K}+\frac{\delta}{AK})\times O(\log\log K)+\sum_{k=1}^{K_{0}}\frac{T^{(k)}\delta}{2K}\leq \delta,$$ if $K\geq \widetilde{\Omega}(S^{2})$.
\end{proof}

\begin{lemma}\label{lem17}
	Conditioned on the same high probability event of Lemma~\ref{lemfifth}, the optimal policy $\pi^\star$ will never be eliminated, i.e., $\pi^\star \in \phi^k$ for $k=1,2,3,\cdots$.
\end{lemma}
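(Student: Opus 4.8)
\textbf{Proof proposal for Lemma~\ref{lem17}.}

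The plan is to argue by induction on the stage index $k$, exploiting two facts: (i) a finite-horizon MDP with a fixed start state admits a \emph{deterministic} optimal policy, so $\pi^\star$ lies in the initial version space $\phi^1$; and (ii) the elimination threshold hard-coded in Algorithm~\ref{algo3} is exactly \emph{twice} the uniform evaluation error guaranteed by Lemma~\ref{lem16}, so the optimal policy can never be certified as suboptimal and thus can never be placed in any $U^k$.

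First I would fix notation: condition on the probability-$(1-\delta)$ event of Lemma~\ref{lem16}, and write $\epsilon_k := C\big(\sqrt{H^5S^2A\iota/T^{(k)}}+S^3A^2H^5\iota/T^{(k)}\big)$, so that on this event $|V^\pi(r,\widehat P^k)-V^\pi(r,P)|\le \epsilon_k$ for every $\pi\in\phi^k$ and every $k$, and the elimination rule deletes $\pi$ from $\phi^k$ precisely when $V^\pi(r,\widehat P^k)\le \sup_{\widehat\pi\in\phi^k}V^{\widehat\pi}(r,\widehat P^k)-2\epsilon_k$. For the base case, $\phi^1$ is the set of all deterministic policies and, by the standard fact that a finite-horizon MDP has a deterministic optimal policy, $\pi^\star$ may be taken deterministic, hence $\pi^\star\in\phi^1$. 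For the inductive step, assume $\pi^\star\in\phi^k$ and let $\widehat\pi_k\in\argmax_{\widehat\pi\in\phi^k}V^{\widehat\pi}(r,\widehat P^k)$. Applying Lemma~\ref{lem16} to $\widehat\pi_k\in\phi^k$ and then using optimality of $\pi^\star$ under the true MDP gives $\sup_{\widehat\pi\in\phi^k}V^{\widehat\pi}(r,\widehat P^k)=V^{\widehat\pi_k}(r,\widehat P^k)\le V^{\widehat\pi_k}(r,P)+\epsilon_k\le V^{\pi^\star}(r,P)+\epsilon_k$, while applying Lemma~\ref{lem16} to $\pi^\star\in\phi^k$ gives $V^{\pi^\star}(r,\widehat P^k)\ge V^{\pi^\star}(r,P)-\epsilon_k$. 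Chaining these,
$$V^{\pi^\star}(r,\widehat P^k)\ \ge\ V^{\pi^\star}(r,P)-\epsilon_k\ \ge\ \sup_{\widehat\pi\in\phi^k}V^{\widehat\pi}(r,\widehat P^k)-2\epsilon_k,$$
so the elimination condition is not met for $\pi^\star$; thus $\pi^\star\notin U^k$ and $\pi^\star\in\phi^{k+1}$, completing the induction.

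The argument is short, so the ``hard part'' is only careful bookkeeping of constants: one must ensure the radius $2C(\cdot)$ written into the elimination step of Algorithm~\ref{algo3} is at least twice the final error bound of Lemma~\ref{lem16}, which is exactly why the proof of Lemma~\ref{lem16} pins down $C$ (as the maximum of the constant hidden in Lemma~\ref{lemforth} and the constant $168$ from Lemma~\ref{lemthird}) and the algorithm reuses that same $C$. One also needs the conclusion of Lemma~\ref{lem16} to hold \emph{simultaneously} over all $K_0=O(\log\log K)$ stages --- which is already built into its statement via the union bound over $k$ --- so that the whole induction runs on a single good event, on which everything above is deterministic.
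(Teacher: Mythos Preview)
Your proposal is correct and matches the paper's own proof essentially line-for-line: both argue by induction on $k$, invoke Lemma~\ref{lemfifth} (Lemma~\ref{lem16}) once for the empirical maximizer $\widehat\pi_k$ and once for $\pi^\star$, and use true-MDP optimality of $\pi^\star$ to bound the gap by $2\epsilon_k$, which is exactly the elimination threshold in Algorithm~\ref{algo3}. Your additional remarks about the choice of $C$ and the simultaneous-over-$k$ union bound are accurate and reflect exactly how the paper sets things up.
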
	

\begin{proof}[Proof of Lemma~\ref{lem17}]
	We will prove this lemma by induction. First, because $\phi^{1}$ contains all the deterministic policies, $\pi^\star\in\phi^{1}$. Assume $\pi^\star\in\phi^{k}$, then we have
	\begin{align*}
	&\sup_{\widehat{\pi}\in\phi^{k}}V^{\widehat{\pi}}(r,\widehat{P}^{k})-V^{\pi^\star}(r,\widehat{P}^{k})=V^{\widehat{\pi}^{k}}(r,\widehat{P}^{k})-V^{\pi^\star}(r,\widehat{P}^{k})\\ &\leq |V^{\widehat{\pi}^{k}}(r,\widehat{P}^{k})-V^{\widehat{\pi}^{k}}(r,P)|+V^{\widehat{\pi}^{k}}(r,P)-V^{\pi^\star}(r,P)+|V^{\pi^\star}(r,P)-V^{\pi^\star}(r,\widehat{P}^{k})|\\&\leq 2C(\sqrt{\frac{H^{5}S^{2}A\iota}{T^{(k)}}}+\frac{S^{3}A^{2}H^{5}\iota}{T^{(k)}}).
	\end{align*}
	The last inequality is because of Lemma~\ref{lemfifth} and $\pi^\star$ is the optimal policy.\\
	Then according to the elimination rule in Algorithm~\ref{algo3}, we have that $\pi^\star\in\phi^{k+1}$, which means the optimal policy $\pi^\star$ will never be eliminated.
\end{proof}

\begin{lemma}\label{lem18}
	Conditioned on the same high probability event of Lemma~\ref{lemfifth}, for any remaining policies, i.e., $\pi\in\phi^{k+1}$, we have that
	$$V^{\pi^\star}(r,P)-V^{\pi}(r,P)\leq 4C(\sqrt{\frac{H^{5}S^{2}A\iota}{T^{(k)}}}+\frac{S^{3}A^{2}H^{5}\iota}{T^{(k)}}).$$
\end{lemma}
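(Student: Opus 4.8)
\textbf{Proof plan for Lemma~\ref{lem18}.} The plan is to exploit the elimination rule directly, combined with the facts that $\pi^\star$ survives (Lemma~\ref{lem17}) and that the plug-in estimator is uniformly accurate on the current version space (Lemma~\ref{lemfifth}). First I would note that since $\phi^{k+1}=\phi^{k}\backslash U^{k}$, any $\pi\in\phi^{k+1}$ satisfies $\pi\in\phi^{k}$ and $\pi\notin U^{k}$; by the definition of $U^{k}$ in Algorithm~\ref{algo3}, the latter means the elimination inequality fails, i.e.
\begin{equation}\label{eq:notelim}
V^{\pi}(r,\widehat{P}^{k})>\sup_{\widehat{\pi}\in\phi^{k}}V^{\widehat{\pi}}(r,\widehat{P}^{k})-2C\Big(\sqrt{\tfrac{H^{5}S^{2}A\iota}{T^{(k)}}}+\tfrac{S^{3}A^{2}H^{5}\iota}{T^{(k)}}\Big).
\end{equation}

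Next I would insert $\pi^\star$. By Lemma~\ref{lem17}, $\pi^\star\in\phi^{k}$, so $\sup_{\widehat{\pi}\in\phi^{k}}V^{\widehat{\pi}}(r,\widehat{P}^{k})\ge V^{\pi^\star}(r,\widehat{P}^{k})$, and hence \eqref{eq:notelim} gives
\begin{equation}\label{eq:compare}
V^{\pi^\star}(r,\widehat{P}^{k})-V^{\pi}(r,\widehat{P}^{k})\le 2C\Big(\sqrt{\tfrac{H^{5}S^{2}A\iota}{T^{(k)}}}+\tfrac{S^{3}A^{2}H^{5}\iota}{T^{(k)}}\Big).
\end{equation}
Then I would pass from the estimated MDP back to the true MDP via the triangle inequality:
\begin{equation}\label{eq:triangle}
V^{\pi^\star}(r,P)-V^{\pi}(r,P)\le \big|V^{\pi^\star}(r,P)-V^{\pi^\star}(r,\widehat{P}^{k})\big|+\big(V^{\pi^\star}(r,\widehat{P}^{k})-V^{\pi}(r,\widehat{P}^{k})\big)+\big|V^{\pi}(r,\widehat{P}^{k})-V^{\pi}(r,P)\big|.
\end{equation}
Since both $\pi^\star\in\phi^{k}$ and $\pi\in\phi^{k}$, Lemma~\ref{lemfifth} bounds the first and third terms on the right of \eqref{eq:triangle} each by $C(\sqrt{H^{5}S^{2}A\iota/T^{(k)}}+S^{3}A^{2}H^{5}\iota/T^{(k)})$, while \eqref{eq:compare} bounds the middle term by $2C(\sqrt{H^{5}S^{2}A\iota/T^{(k)}}+S^{3}A^{2}H^{5}\iota/T^{(k)})$. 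Adding the three contributions yields the claimed bound $4C(\sqrt{H^{5}S^{2}A\iota/T^{(k)}}+S^{3}A^{2}H^{5}\iota/T^{(k)})$.

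There is essentially no hard step here: the lemma is a bookkeeping consequence of the elimination schedule and the previously established uniform evaluation bound. The only points requiring a word of care are (i) checking the direction of the inequality in the definition of $U^{k}$ so that ``not eliminated'' translates into \eqref{eq:notelim} with the correct sign, and (ii) making sure we are entitled to apply Lemma~\ref{lemfifth} to \emph{both} $\pi$ and $\pi^\star$, which holds because $\phi^{k+1}\subseteq\phi^{k}$ and (by Lemma~\ref{lem17}) $\pi^\star\in\phi^{k}$. All three bounds used are valid on the single high-probability event of Lemma~\ref{lemfifth}, so no additional union bound or failure probability is incurred.
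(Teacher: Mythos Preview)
Your proposal is correct and follows essentially the same approach as the paper's proof: use the fact that $\pi$ was not eliminated together with $\pi^\star\in\phi^k$ to bound the gap under $\widehat{P}^k$ by $2C(\cdots)$, then apply the triangle inequality and Lemma~\ref{lemfifth} to each of $\pi,\pi^\star\in\phi^k$ to pick up the remaining $2C(\cdots)$. The only difference is cosmetic---you spell out explicitly why $\pi\notin U^k$ yields \eqref{eq:notelim} and why Lemma~\ref{lemfifth} applies to both policies, which the paper leaves implicit.
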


\begin{proof}[Proof of Lemma~\ref{lem18}]
	For $\pi\in\phi^{k+1}$, because the optimal policy $\pi^\star$ will never be eliminated (Lemma~\ref{lem17}), we have that 
	\begin{equation}\label{equ5}
	V^{\pi^\star}(r,\widehat{P}^{k})-V^{\pi}(r,\widehat{P}^{k})\leq \sup_{\widehat{\pi}\in\phi^{k}}V^{\widehat{\pi}}(r,\widehat{P}^{k})-V^{\pi}(r,\widehat{P}^{k})\leq 2C(\sqrt{\frac{H^{5}S^{2}A\iota}{T^{(k)}}}+\frac{S^{3}A^{2}H^{5}\iota}{T^{(k)}}).
	\end{equation}
	Then we have
	\begin{align*}
	V^{\pi^\star}(r,P)-V^{\pi}(r,P)&\leq |V^{\pi^\star}(r,P)-V^{\pi^\star}(r,\widehat{P}^{k})|+V^{\pi^\star}(r,\widehat{P}^{k})-V^{\pi}(r,\widehat{P}^{k})+|V^{\pi}(r,\widehat{P}^{k})-V^{\pi}(r,P)|\\ &\leq 4C(\sqrt{\frac{H^{5}S^{2}A\iota}{T^{(k)}}}+\frac{S^{3}A^{2}H^{5}\iota}{T^{(k)}}).
	\end{align*}
	The last inequality is because of Lemma~\ref{lemfifth} and~\eqref{equ5} .
\end{proof}

\begin{lemma}\label{lem19}
	Conditioned on the same high probability event of Lemma~\ref{lemfifth}, if $K\geq\widetilde{\Omega}(S^{8}A^{6}H^{10})$, the total regret is at most $\widetilde{O}(\sqrt{H^{4}S^{2}AT})$.
\end{lemma}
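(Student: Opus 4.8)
The plan is to derive the bound from the regret decomposition in~\eqref{equr} together with the per-stage suboptimality estimate of Lemma~\ref{lem18}, and then to collapse the resulting sum using the arithmetic of the schedule $T^{(k)}=K^{1-1/2^{k}}$. Recall~\eqref{equr}: on the high-probability event of Lemma~\ref{lemfifth},
$$\mathrm{Regret}(K)\ \le\ 2HT^{(1)}\ +\ \sum_{k=2}^{K_{0}} 2T^{(k)}\cdot 4\epsilon_{k-1},\qquad \epsilon_{k-1}=C\Bigl(\sqrt{\tfrac{H^{5}S^{2}A\iota}{T^{(k-1)}}}+\tfrac{S^{3}A^{2}H^{5}\iota}{T^{(k-1)}}\Bigr).$$
The first term accounts for stage $1$, where we have no guarantee and pay at most $H$ per episode over its $2T^{(1)}$ episodes; for $k\ge 2$, every policy executed during the crude and fine exploration of stage $k$ is of the form $\pi_{h,s,a}=\operatorname{argmax}_{\pi\in\phi^{k}}V^{\pi}(1_{h,s,a},P^{int,k})\in\phi^{k}$, hence by Lemma~\ref{lem18} (applied to stage $k-1$) is $4\epsilon_{k-1}$-suboptimal, and stage $k$ uses $2T^{(k)}$ episodes. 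Note the hypothesis $K\ge\widetilde\Omega(S^{8}A^{6}H^{10})$ in particular implies $K\ge\widetilde\Omega(S^{2})$, so the event of Lemma~\ref{lemfifth} is available.

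The key computation is the schedule identity. For each $k$ with $2\le k\le K_{0}$, the ``input'' stage $k-1$ is never the (possibly truncated) final stage, so $T^{(k-1)}=K^{1-1/2^{k-1}}$ exactly, while in all cases $T^{(k)}\le K^{1-1/2^{k}}$. Therefore
$$\frac{T^{(k)}}{\sqrt{T^{(k-1)}}}\ \le\ \frac{K^{\,1-1/2^{k}}}{K^{(1-1/2^{k-1})/2}}\ =\ K^{1/2},\qquad \frac{T^{(k)}}{T^{(k-1)}}\ \le\ K^{\,1/2^{k-1}-1/2^{k}}\ =\ K^{1/2^{k}}\ \le\ K^{1/4}.$$
Substituting, $\sum_{k=2}^{K_{0}}2T^{(k)}\cdot 4\epsilon_{k-1}\le 8C(K_{0}-1)\bigl(\sqrt{H^{5}S^{2}A\iota\,K}+S^{3}A^{2}H^{5}\iota\,K^{1/4}\bigr)$, and $2HT^{(1)}=2H\sqrt{K}\le \sqrt{H^{5}S^{2}AK}$.

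Finally, using $K_{0}=O(\log\log K)$ (Lemma~\ref{lem50}), $\iota=\log(2HAK/\delta)$, and $T=HK$: the leading term is $\widetilde O(\sqrt{H^{5}S^{2}AK})=\widetilde O(\sqrt{H^{4}S^{2}AT})$, and the lower-order term $\widetilde O(S^{3}A^{2}H^{5}K^{1/4})$ is dominated by $\sqrt{H^{5}S^{2}AK}$ exactly when $K^{1/4}\gtrsim S^{2}A^{3/2}H^{5/2}$, i.e. $K\ge\widetilde\Omega(S^{8}A^{6}H^{10})$, which is the hypothesis. Hence $\mathrm{Regret}(K)=\widetilde O(\sqrt{H^{4}S^{2}AT})$.

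This argument is essentially bookkeeping once Lemmas~\ref{lemfifth}--\ref{lem18} are in hand; the only points requiring care are (i) verifying the schedule ratios $T^{(k)}/\sqrt{T^{(k-1)}}$ and $T^{(k)}/T^{(k-1)}$ are controlled uniformly in $k$, including correctly handling the truncated last stage (which only shrinks the corresponding $T^{(k)}$ in a numerator, never a denominator appearing in the sum), and (ii) tracking the $\log\log K$ factor coming from the number of stages and checking that the stated threshold on $K$ is precisely what renders the $K^{1/4}$ term a lower-order contribution.
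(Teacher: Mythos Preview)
Your proof is correct and follows essentially the same approach as the paper's own proof: decompose the regret using~\eqref{equr}, invoke Lemma~\ref{lem18} for the per-stage suboptimality, and collapse the sum via the schedule ratios $T^{(k)}/\sqrt{T^{(k-1)}}\le K^{1/2}$ and $T^{(k)}/T^{(k-1)}\le K^{1/4}$. You in fact give more detail than the paper---explicitly verifying that the exploration policies in stage $k$ belong to $\phi^{k}$, and carefully noting that the possibly truncated final stage only appears in a numerator---which makes the argument cleaner.
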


\begin{proof}[Proof of Lemma~\ref{lem19}]
	The regret for the first stage (stage 1) is at most $2HT^{(1)}=O(HK^{\frac{1}{2}})$. \\
	For stage $k\geq2$, because of Lemma~\ref{lem18}, the policies we use (any $\pi\in\phi^{k}$) are at most $4C(\sqrt{\frac{H^{5}S^{2}A\iota}{T^{(k-1)}}}+\frac{S^{3}A^{2}H^{5}\iota}{T^{(k-1)}})$ sub-optimal, so the regret for the k-th stage ($2T^{(k)}$ episodes) is at most
	$8CT^{(k)}(\sqrt{\frac{H^{5}S^{2}A\iota}{T^{(k-1)}}}+\frac{S^{3}A^{2}H^{5}\iota}{T^{(k-1)}}).$ \\
	Adding up the regret for each stage , we have that the total regret is bounded by
	\begin{align*}
	\text{Regret}(T)&\leq 2HK^{\frac{1}{2}}+\sum_{k=2}^{K_{0}}8CT^{(k)}(\sqrt{\frac{H^{5}S^{2}A\iota}{T^{(k-1)}}}+\frac{S^{3}A^{2}H^{5}\iota}{T^{(k-1)}}) \\ &= O(HK^{\frac{1}{2}})+ O(\sqrt{H^{5}S^{2}AK\iota}\cdot \log\log K)+O(S^{3}A^{2}H^{5}K^{\frac{1}{4}}\iota)\\ &= O(\sqrt{H^{5}S^{2}AK\iota}\cdot \log\log K)+O(S^{3}A^{2}H^{5}K^{\frac{1}{4}}\iota) \\ &= \widetilde{O}(\sqrt{H^{4}S^{2}AT}),
	\end{align*}
where the last equality is because $K\geq\widetilde{\Omega}(S^{8}A^{6}H^{10})$.
\end{proof}

Then Theorem~\ref{the1} holds because of Lemma~\ref{lem15}, Lemma~\ref{lemfifth} and Lemma~\ref{lem19}.

\begin{corollary}[Transition to a PAC bound]
	Under the same assumption as Theorem~\ref{the1}, for any $\epsilon >0$, Algorithm~\ref{algo3} can output a stochastic policy $\widehat{\pi}$ such that with high probability, $$V_{1}^\star(s_{1})-V_{1}^{\widehat{\pi}}(s_{1})\leq \epsilon$$ after $K=\widetilde{O}(\frac{H^{5}S^{2}A}{\epsilon^{2}})$ episodes.
\end{corollary}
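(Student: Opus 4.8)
The plan is a routine online-to-batch (regret-to-PAC) conversion applied to APEVE (Algorithm~\ref{algo3}). Run Algorithm~\ref{algo3} for $K$ episodes and let $\pi_1,\dots,\pi_K$ denote the policies it deploys (one per episode). Define the output $\widehat{\pi}$ to be the \emph{stochastic} policy that, at the beginning of an episode, draws an index $k$ uniformly from $[K]$ and then follows the deterministic policy $\pi_k$ for that entire episode. Since the initial state $s_1$ is fixed, linearity of the value functional gives $V_1^{\widehat{\pi}}(s_1)=\frac1K\sum_{k=1}^K V_1^{\pi_k}(s_1)$, and therefore
\[
V_1^\star(s_1)-V_1^{\widehat{\pi}}(s_1)=\frac1K\sum_{k=1}^K\bigl(V_1^\star(s_1)-V_1^{\pi_k}(s_1)\bigr)=\frac{\mathrm{Regret}(K)}{K}.
\]

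Next I would invoke Theorem~\ref{the1}: on the event of probability $1-\delta$ (so ``with high probability''), $\mathrm{Regret}(K)=O\bigl(\sqrt{H^{5}S^{2}AK\,\iota}\cdot\log\log K+S^{3}A^{2}H^{5}K^{1/4}\iota\bigr)$ with $\iota=\log(2HAK/\delta)$. Dividing by $K$,
\[
V_1^\star(s_1)-V_1^{\widehat{\pi}}(s_1)\le O\!\Bigl(\sqrt{\tfrac{H^{5}S^{2}A\,\iota}{K}}\cdot\log\log K+\tfrac{S^{3}A^{2}H^{5}\iota}{K^{3/4}}\Bigr).
\]
Forcing the leading term to be $\le\epsilon/2$ yields $K=\widetilde O\bigl(H^{5}S^{2}A/\epsilon^{2}\bigr)$ (note $\iota=\widetilde O(1)$ once $K$ is polynomial in $1/\epsilon$ and $H,S,A$), while the second term is $\le\epsilon/2$ as soon as $K=\widetilde\Omega\bigl((S^{3}A^{2}H^{5}/\epsilon)^{4/3}\bigr)$, which is lower order in $1/\epsilon$ and hence dominated by the first requirement for $\epsilon$ below an absolute $\mathrm{poly}(H,S,A)$-dependent threshold. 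One also checks that this $K$ satisfies the side conditions $K\ge\widetilde\Omega(S^2)$ (used in Lemma~\ref{lem16}) and $K\ge\widetilde\Omega(S^{8}A^{6}H^{10})$ (used in Lemma~\ref{lem19}) that underlie Theorem~\ref{the1}; these are again lower-order polynomial constraints absorbed into the $\widetilde O(\cdot)$, i.e.\ they hold for all sufficiently small $\epsilon$.

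An equally valid alternative, giving the same bound, is to set $\widehat{\pi}$ to be any policy left in the final version space $\phi^{K_0+1}$: by Lemma~\ref{lem18} every such policy is $O\bigl(\sqrt{H^{5}S^{2}A\iota/T^{(K_0)}}+S^{3}A^{2}H^{5}\iota/T^{(K_0)}\bigr)$-suboptimal, and the schedule gives $T^{(K_0)}=\Theta(K)$, so the same calculation produces the $\widetilde O(H^{5}S^{2}A/\epsilon^{2})$ episode complexity — with the bonus that the output is then deterministic. There is no substantive obstacle in either route; the only care needed is the bookkeeping of the logarithmic factor $\iota$ (which depends on $K$, hence implicitly on $\epsilon$ and $\delta$) and verifying that the lower-order regret term and the mild side conditions on $K$ do not become binding.
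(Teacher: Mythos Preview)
Your proposal is correct and follows essentially the same online-to-batch argument as the paper: define $\widehat\pi$ as the uniform mixture over the deployed policies, use linearity to write the suboptimality as $\mathrm{Regret}(K)/K$, and invoke Theorem~\ref{the1}. Your version is in fact slightly cleaner --- the paper inserts an unnecessary expectation/Markov-inequality step where you (correctly) observe that $V_1^{\widehat\pi}(s_1)=\tfrac1K\sum_k V_1^{\pi_k}(s_1)$ holds deterministically once the $\pi_k$ are fixed --- and your deterministic alternative via $\phi^{K_0+1}$ and Lemma~\ref{lem18} is exactly what the paper records in the Remark following the proof.
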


\begin{proof}
	By Theorem~\ref{the1}, we have that the regret is bounded by $\widetilde{O}(\sqrt{H^{4}S^{2}AT})$ with high probability, which means we have
	$$\sum_{k=1}^{K}V_{1}^\star(s_{1})-V_{1}^{\pi_{k}}(s_{1})\leq \widetilde{O}(\sqrt{H^{4}S^{2}AT}),$$ where $\pi_{k}$ is the policy used in episode $k$. Now define a stochastic policy $\widehat{\pi}$ as 
	$$\widehat{\pi}=\frac{1}{K}\sum_{k=1}^{K}\pi_{k}.$$
	Then we have
	$$\mathbb{E}[V_{1}^\star(s_{1})-V_{1}^{\widehat{\pi}}(s_{1})]=\frac{1}{K}\sum_{k=1}^{K}V_{1}^\star(s_{1})-V_{1}^{\pi_{k}}(s_{1})\leq \widetilde{O}(\sqrt{\frac{H^{5}S^{2}A}{K}}).$$
	By Markov inequality, we have with high probability that 
	$$V_{1}^\star(s_{1})-V_{1}^{\widehat{\pi}}(s_{1})\leq \widetilde{O}(\sqrt{\frac{H^{5}S^{2}A}{K}}).$$
	Taking $K=\widetilde{O}(\frac{H^{5}S^{2}A}{\epsilon^{2}})$ bounds the above by $\epsilon$.
\end{proof}

\begin{remark}
In addition to constructing a random policy, we can output any policy in the remaining policy set $\phi^{K_{0}+1}$. Because there are $K_{0}=O(\log\log K)$ stages in total, the maximal $T^{(k)}$ is larger than $\Omega(\frac{K}{\log\log K})$. According to Lemma~\ref{lem18}, for any $\pi\in\phi^{K_{0}+1}$, $$V^{\pi^\star}(r,P)-V^{\pi}(r,P)\leq 4C(\sqrt{\frac{H^{5}S^{2}A\iota}{T^{(k)}}}+\frac{S^{3}A^{2}H^{5}\iota}{T^{(k)}}),\,\forall\,k=1,2,\cdots,K_{0}.$$
Combining these two results, we have for any $\pi\in\phi^{K_{0}+1}$,
$$V^{\pi^\star}(r,P)-V^{\pi}(r,P)\leq \widetilde{O}(\sqrt{\frac{H^{5}S^{2}A}{K}}).$$
Then $K=\widetilde{O}(\frac{H^{5}S^{2}A}{\epsilon^{2}})$ bounds the above by $\epsilon$.
\end{remark}

\section{Proof of lower bounds (Theorem \ref{thm:lower_bound} and Theorem~\ref{the2})}\label{appd}

\begin{theorem}[Restate Theorem \ref{thm:lower_bound}]\label{thm:restate_lower_bound}
	If $S\leq A^{\frac{H}{2}}$, for any algorithm with near-optimal $\widetilde{O}^*(\sqrt{T})$ regret bound, the global switching cost is at least $\Omega(HSA\log\log T)$.
\end{theorem}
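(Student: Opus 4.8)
The plan is to show that a tabular MDP with parameters $(S,A,H)$ can encode $\Theta(HSA)$ essentially independent stochastic bandit instances, and then to combine the batched/switching lower bound for bandits due to \citet{perchet2016batched,gao2019batched} --- any algorithm achieving $\widetilde{O}(\sqrt{n})$ regret on a two-armed stochastic instance over $n$ pulls must change its action $\Omega(\log\log n)$ times (equivalently, cannot be realized with $o(\log\log n)$ adaptively chosen constant-action batches) --- with a counting argument over the $\Theta(HSA)$ coordinates. This is the same philosophy that underlies the companion bound Theorem~\ref{the2}: there $\Omega(HSA)$ ``arms'' must each be tried at least once, already forcing $\Omega(HSA)$ switches; here each of the $\Omega(HSA)$ embedded bandit coordinates additionally forces an $\Omega(\log\log T)$ phase structure that does \emph{not} amortize across coordinates, which multiplies the two bounds.

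First I would build the hard family. Using the hypothesis $S\le A^{H/2}$, let $\ell:=\lceil\log_A S\rceil\le H/2$; make layers $1,\dots,\ell$ a deterministic, known $A$-ary ``addressing'' gadget, so that every state of $\mathcal{S}$ is reachable at every layer $h\in\{\ell,\ell+1,\dots,H\}$ via a known action prefix. This produces at least $(H/2)\cdot S$ reachable pairs $(h,s)$ and hence $\ge HSA/2=\Theta(HSA)$ reachable triples $(h,s,a)$. At each reachable $(h,s)$ plant an independent small stochastic-reward gadget in which exactly one of the $A$ actions is ``good'' (Bernoulli mean $\tfrac12+\varepsilon$, the rest $\tfrac12$); the instance $\theta$ is the list of these $\Theta(HS)$ good-action choices. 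Transitions after the planted layer are arranged (routing to an absorbing zero-reward region, or continuing the addressing tree) so that a single \emph{deterministic} policy collects reward samples only from the $O(H)$-or-fewer coordinates on its own fixed trajectory, and so that engaging coordinate $i$ in a block yields no information about coordinate $j\ne i$. The gap $\varepsilon$ and the per-coordinate sample horizon are chosen so that failing to identify the good action of even one ``badly scheduled'' coordinate already costs regret $\omega(\mathrm{poly}(H,S,A)\sqrt{T})$ for $T$ large, which would contradict a $\widetilde{O}^*(\sqrt{T})$ guarantee.

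Then, for any algorithm with global switching cost $N_{\mathrm{switch}}$, its run decomposes into $N_{\mathrm{switch}}+1$ blocks of constant deterministic policy; on each coordinate a block touches, the algorithm behaves like one fixed-action phase of a batched bandit. A pigeonhole over the $\Theta(HS)$ coordinates and the $N_{\mathrm{switch}}+1$ blocks gives: if $N_{\mathrm{switch}}=o(HSA\log\log T)$ then a constant fraction of coordinates are each touched in fewer than $c\log\log T$ distinct blocks for a small constant $c$, so on such a coordinate the induced batched strategy uses $o(\log\log T)$ adaptively chosen phases; applying the \citet{perchet2016batched,gao2019batched} lower bound to that coordinate's embedded two-armed instance, there is a value of its entry in $\theta$ (extended, via a hybrid argument, to a full consistent $\theta$) under which the algorithm's regret on that coordinate alone exceeds the $\widetilde{O}^*(\sqrt{T})$ budget, a contradiction; hence $N_{\mathrm{switch}}=\Omega(HSA\log\log T)$. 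The delicate step --- and the main obstacle --- is this last reduction: turning ``few blocks touch coordinate $i$'' into ``large regret on coordinate $i$'' while the block boundaries and which coordinates a block touches are chosen adaptively (requiring the adaptive-grid version of the batched lower bound) and while the algorithm's behavior is correlated across coordinates through its internal state (requiring a fingerprinting/hybrid argument that couples the algorithm's view under different values of $\theta_i$ with $\theta_{-i}$ held fixed, so that only $O(\log\log T)$ phases of information about $\theta_i$ are available). Making the counting quantitatively tight enough to rule out $N_{\mathrm{switch}}=o(HSA\log\log T)$ --- not merely $o(\log\log T)$ or $o(HSA)$ --- forces a \emph{constant-factor} phase deficiency on a constant fraction of coordinates, so that the resulting per-coordinate regret is polynomially, not merely constant-factor, above $\sqrt{T}$; securing that quantitative gap is where most of the care goes.
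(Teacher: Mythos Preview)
Your high-level idea---use $S\le A^{H/2}$ to build an addressing prefix and then embed bandit structure in the remaining $\Theta(H)$ layers---matches the paper, but the per-coordinate decomposition you propose has a genuine gap in the counting and is also unnecessary.

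The gap is in the pigeonhole step. In your construction a single block (constant deterministic policy) can ``engage'' up to $O(H)$ of your $\Theta(HS)$ coordinates $(h,s)$, so the total number of (block, coordinate) incidences is at most $H(N_{\mathrm{switch}}+1)$. If $N_{\mathrm{switch}}=o(HSA\log\log T)$ this gives $o(H^2SA\log\log T)$ total incidences over $\Theta(HS)$ coordinates, i.e.\ an average of $o(HA\log\log T)$ blocks per coordinate---far too many to force any coordinate below $c\log\log T$ phases. Even if you route to the absorbing state immediately after collecting a reward (so each block engages exactly one coordinate), you get $N_{\mathrm{switch}}+1$ incidences over $\Theta(HS)$ coordinates, and invoking only the \emph{two-armed} $\Omega(\log\log n)$ bound per coordinate yields $\Omega(HS\log\log T)$, not $\Omega(HSA\log\log T)$. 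The missing factor of $A$ cannot be recovered from the hybrid/fingerprinting argument you sketch; it has to come from a $K$-armed lower bound that already carries the $K$-dependence.

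The paper sidesteps all of this by constructing the MDP so that every nontrivial deterministic policy corresponds \emph{bijectively} to a single triple $(h,s,a)$ with unknown reward: after the addressing prefix reaches $(H_0+1,s)$, at each layer there is one ``stay'' action with zero reward and all other actions deliver an unknown reward and send you to the absorbing state. Thus the MDP \emph{is} a single $K$-armed bandit with $K=\Theta(HSA)$, a policy switch is exactly an arm switch, and the result follows in one line from the $K$-armed switching lower bound of \citet{simchi2019phase}, which already says $\Omega(K\log\log T)$ switches are needed for $\widetilde O(\sqrt{T})$ regret. No per-coordinate decomposition, no pigeonhole, no hybrid argument.
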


\begin{theorem}[Restate Theorem~\ref{the2}]\label{lemlower}
	If $S\leq A^{\frac{H}{2}}$, for any algorithm with sub-linear regret bound, the global switching cost is at least $\Omega(HSA)$.
\end{theorem}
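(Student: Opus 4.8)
The plan is to show that a suitable family of tabular MDPs is, as far as (global) switching cost is concerned, no easier than a bandit with $N=\Theta(HSA)$ arms, and then invoke the elementary fact that a no-regret algorithm must, in expectation, deploy $\Omega(N)$ distinct arms before it can locate the statistically hidden optimum. Concretely I would build a layered ``combination-lock'' MDP with a hidden parameter $\theta$ that encodes, for each of $\Theta(H)$ sequential \emph{stages}, an unknown ``key'' ranging over $\Theta(SA)$ a priori indistinguishable options. The transitions are arranged so that a stage can be entered only after the keys of all earlier stages have been found --- any wrong move inside an as-yet-unsolved stage routes the trajectory to an absorbing zero-reward sink --- and the reward is a sum of $\Theta(1/H)$ increments, one per cleared stage, so that the optimal value is $\Theta(1)$ while every policy that errs before clearing all stages has value bounded away from optimal. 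The hypothesis $S\le A^{H/2}$ enters only through an addressing gadget occupying the first $\lfloor H/2\rfloor$ layers, which guarantees that all $S$ states can be reached by action sequences of length $\le H/2$ and hence that each key is in principle testable by a deterministic policy; consequently any algorithm with $o(T)$ regret must eventually deploy an all-stage-clearing policy and so must at some point have discovered all $\Theta(H)$ keys.

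\textbf{Key steps.} First, by Yao's minimax principle it suffices to lower-bound the expected switching cost of an arbitrary deterministic algorithm (and, as throughout the paper, we take the deployed policies to be deterministic) against a prior that draws the stage keys independently and uniformly. Second, fix a stage $i$ and condition on the (now known) keys of stages $1,\dots,i-1$; until the algorithm first plays the correct key at stage $i$, every episode that reaches stage $i$ tests exactly one option, all untested options are exchangeable, and a wrong test produces exactly the same observation (``fall into the sink'' and collect no further reward), so the number of \emph{distinct} options that must be tested before the $i$-th key is found is the rank of a uniformly random element of an $SA$-element set, hence has expectation $\Omega(SA)$ no matter how cleverly the algorithm orders its guesses --- and no stage-$<i$ exploration can help, since the keys are drawn independently. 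Third, a deterministic policy that reaches stage $i$ does so at a single state and plays a single action there, so testing two distinct options at stage $i$ requires two distinct deployed policies; moreover the episodes testing stage $i$ all occur (in time) before the first episode at which stage $i$ is cleared, so the stage-$i$ search occupies a time interval disjoint from the stage-$j$ searches for $j\neq i$. Hence the algorithm performs $\Omega(SA)$ distinct deployments within each of $\Theta(H)$ disjoint time windows, giving $\Omega(HSA)$ changes of deployed policy; an averaging argument over the prior then produces a single instance on which the global switching cost is $\Omega(HSA)$, while any algorithm that fails to complete even one such search on that instance already incurs regret $\Omega(T)$, contradicting sub-linearity.

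\textbf{Main obstacle.} The substantive difficulty is the construction and its bookkeeping, not the counting: one must design the addressing gadget and the stage transitions so that (i) every $(h,s,a)$ we want the learner to be ``forced'' to try is genuinely reachable and yields \emph{no information} until it is the correct one, (ii) the learner cannot amortize the $\Theta(H)$ searches against one another (this is precisely why the ``absorbing sink until a stage is cleared'' device, and the use of layered/non-stationary transitions, is essential), and (iii) ``testing a previously-untested option'' really costs a \emph{global} policy switch --- i.e.\ the $\Omega(HSA)$ distinct deployments cannot be realized with $o(HSA)$ global switches, which is where one must be careful that the argument is about global rather than merely local switching cost. I would expect the cleanest write-up to prove one master construction and read off Theorem~\ref{lemlower} as the ``one search round per stage'' case, with the stronger Theorem~\ref{thm:restate_lower_bound} following by forcing, in the spirit of the batched-bandit lower bounds of \citet{gao2019batched}, $\Omega(\log\log T)$ re-exploration rounds inside each stage before the $\widetilde{O}^*(\sqrt{T})$-regret constraint can be met.
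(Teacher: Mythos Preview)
Your overall plan---embed a hard instance so that no-regret forces the deployment of $\Omega(HSA)$ distinct deterministic policies---is the right one, but the specific sequential combination-lock you describe does not deliver $\Theta(SA)$ options per stage, and this is a genuine gap. Once the single addressing gadget in the first $\lfloor H/2\rfloor$ layers has routed the agent to some state $s$, each of your remaining ``stages'' occupies one layer at which the agent is at that fixed $s$ and has only $A$ actions; the key at that stage therefore ranges over $A$ values, not $SA$. Summing over $\Theta(H)$ stages yields $\Theta(HA)$ forced deployments, missing a factor of $S$. To recover $\Theta(SA)$ options per stage you would need a fresh addressing sub-gadget inside every stage, costing $\Theta(\log_A S)$ layers each, which cuts the number of stages to $O(H/\log_A S)$ and the total to $O(HSA/\log_A S)$---strictly smaller than $HSA$ whenever $S$ is super-polynomial in $A$ (the regime $S=A^{\Theta(H)}$ is allowed by the hypothesis $S\le A^{H/2}$). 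Your step ``a deterministic policy that reaches stage $i$ does so at a single state and plays a single action there'' is exactly what kills the $S$ factor: the state at stage $i$ is already pinned down by the earlier choices, so there is nothing left to search over except the $A$ actions.

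The paper sidesteps this by abandoning the sequential structure entirely and embedding a \emph{parallel} $\Theta(HSA)$-armed bandit. After the one-time addressing tree (first $H_0\le H/2$ layers) lands the agent at some state $s$, every remaining layer $h\in[H_0+1,H]$ offers a ``stay at $s$'' action and $A-1$ ``jump to an absorbing sink'' actions with unknown reward $r_h(s,a)$; each deterministic policy therefore pulls exactly one of the $(S-1)(A-1)(H-H_0)\ge \Omega(HSA)$ arms. Since the arms are accessible in any order, the problem is literally a multi-armed bandit with $\Omega(HSA)$ arms, and the paper finishes by invoking an elementary MAB switching-cost lower bound (any algorithm with switching budget below $K/2-1$ on a $K$-armed instance must have linear regret on some deterministic-reward instance). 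This parallel construction is what lets the single addressing gadget be amortized over all $\Theta(H)$ layers simultaneously---something your sequential lock cannot do. The same construction also yields Theorem~\ref{thm:restate_lower_bound} by invoking the batched-bandit lower bound of \citet{simchi2019phase} instead of the elementary one, so the paper's route is both simpler and uniform across the two theorems.
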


First, we will state the high level idea of the proof and some related discussions. For the $\Omega(HSA\log\log T)$ and $\Omega(HSA)$ lower bounds, we construct a MDP to show that it is at least as difficult as multi-armed bandits with $\Omega(HSA)$ arms. Previously, \citet{bai2019provably} proved an $\Omega(HSA)$ lower bound for local switching cost, which can only imply an $\Omega(A)$ lower bound for global switching cost in the worst case. In \citep{huang2022towards}, the authors proved an $\Omega(dH)$ lower bound for global switching cost under linear setting for all algorithms with PAC bound. The same lower bound is derived in \citep{gao2021provably} for all  algorithms with sub-linear regret. However, for the MDPs constructed by both papers, the number of actions available at each state is not the same, which means we can not get an $\Omega(HSA)$ lower bound for global switching cost by directly plugging in $d=SH$. Finally, we state that both lower bounds we present are optimal. An $\Omega(HSA\log\log T)$ lower bound on global switching cost is optimal since this bound is matched by the upper bound of switching cost in Theorem \ref{the1}.  Also, an $\Omega(HSA)$ lower bound on global switching cost is the optimal result for any no-regret algorithms. This is because we can run our low adaptive reward-free exploration (Algorithm~\ref{algo4}) for $K^{\frac{2}{3}}$ episodes and run the policy $\widehat{\pi}^{r}$ where $r$ is the real reward function for the remaining episodes. It can be shown that the regret is of order $O(T^{\frac{2}{3}})$ and global switching cost is bounded by $2HSA$.

\begin{proof}[Proof of Theorem \ref{thm:restate_lower_bound} and Theorem~\ref{lemlower}]
In this part of proof, we will add a mild assumption to parameters $H,S,A$ by assuming that $S\leq A^{\frac{H}{2}}$. First we will show that under this assumption, a MDP with $S$ states, $A$ actions and horizon $H$ can be at least as difficult as a multi-armed bandit with $\Omega(HSA)$ arms. We will consider a MDP with deterministic transition kernel and a fixed initial state $s_{1}$, a special state $s^{\dagger}$ will be used as absorbing state. The state space $\mathcal{S}$ can be divided into $\mathcal{S}=\{s_{1},s_{2},\cdots,s_{S-1},s^{\dagger}\}$. The action space $\mathcal{A}$ can be divided into $\mathcal{A}=\{a_{1},a_{2},\cdots,a_{A}\}$. Then the construction of MDP can be divided into three parts.

\paragraph{Absorbing state}
For the absorbing state $s^{\dagger}$, for any action $a$ and $h\in[H]$, $P_{h}(s^{\dagger}|s^{\dagger},a)=1$ while $P_{h}(s|s^{\dagger},a)=0$ for any $s\neq s^{\dagger}$. The reward is defined as $r_{h}(s^{\dagger},a)=0$ for any $a,h\in[H]$.

\paragraph{The first several layers}
Let $H_{0}$ be the minimal positive integer such that $S\leq A^{H_{0}}$. By the assumption that $S\leq A^{\frac{H}{2}}$, we have $H_{0}\leq \frac{H}{2}$. Then we can use an $A$-armed tree structure to ensure that for any state $s\neq s^{\dagger}$, there exists a unique path starting from $s_{1}$ to arrive at $(H_{0}+1,s)$. Formally, for each $h\leq H_{0}$, $i\in [1,S-1]$, $j\in[1,A]$, $P_{h}(s_{A(i-1)+j}|s_{i},a_{j})=1$ (here if $A(i-1)+j\geq S$, $s_{A(i-1)+j}=s^{\dagger}$). By induction, we can see that for $s\neq s^{\dagger}$, there is a unique trajectory that arrives at $s$ at time step $H_{0}+1$. The reward $r_{h}$ is always $0$ when $h\leq H_{0}$.

\paragraph{The remaining layers}
For each state $s\neq s^{\dagger}$ and $H>h\geq H_{0}+1$, there exists a single action $a_{s,h}$ such that $P_{h}(s|s,a_{s,h})=1$ while $P_{h}(s^{\prime}|s,a_{s,h})=0$ for $s^{\prime}\neq s$. For any $a\neq a_{s,h}$, we have that $P_{h}(s^{\dagger}|s,a)=1$. The reward function  $r_{h}(s,a_{s,h})=0$ when the action is the $a_{s,h}$ that keeps the agent at $s$, and the reward for other actions $r_{h}(s,a)$ is unknown (can be non-zero) when $H>h\geq H_{0}+1$, $a\neq a_{s,h}$. For the last layer, for any state $s\neq s^{\dagger}$ and any action $a\in\mathcal{A}$, $P_{H}(s^{\dagger}|s,a)=1$. The reward $r_{H}(s,a)$ is unknown and can be non-zero.

We can see that under this MDP, there are two cases. The first one is for some policies, the agent arrives at $(H_{0}+1,s^{\dagger})$ with no reward, then such a trajectory will have total reward $0$. The second one is the agent arrives at some $(H_{0}+1,s)$ ($s\neq s^{\dagger}$) and finally arrives at $s^{\dagger}$ from the tuple $(h,s,a)$ ($H>h\geq H_{0}+1$ and $a\neq a_{s,h}$ or $h=H$) with total reward $r_{h}(s,a)$. Also, for any deterministic policy, the trajectory is fixed, like pulling an ``arm'' in bandit setting. Note that the total number of such ``arms'' with non-zero unknown reward is at least $(S-1)(A-1)\frac{H}{2}\geq \Omega(HSA)$. Even if the transition kernel is known to the agent, this MDP is still as difficult as a multi-armed bandit problem with $\Omega(HSA)$ arms. Then Theorem \ref{thm:restate_lower_bound} results from Lemma \ref{lem:loglog} while Theorem \ref{lemlower} holds because of the following Lemma~\ref{lemma:mab}.
\end{proof}

\begin{lemma}[Theorem 2 in \citep{simchi2019phase}]\label{lem:loglog}
    Under the $K$-armed bandits problem, there exists an absolute constant $C>0$ such that for all $K>1,S\geq0,T\geq 2K$ and for all policy $\pi$ with switching budget $S$, the regret satisfies
    $$R^{\pi}(K,T)\geq\frac{C}{\log T}\cdot K^{1-\frac{1}{2-2^{-q(S,K)-1}}}T^{\frac{1}{2-2^{-q(S,K)-1}}},$$
    where $q(S,K)=\lfloor \frac{S-1}{K-1} \rfloor$. This further implies that $\Omega(K\log\log T)$ switches are necessary for achieving $\widetilde{O}(\sqrt{T})$ regret bound.
\end{lemma}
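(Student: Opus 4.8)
Part~1 of the statement (the regret lower bound for any policy with switching budget $S$) is precisely Theorem~2 of \citet{simchi2019phase}, so the plan is to invoke it directly; but if one wanted to reprove it, the natural route is to adapt the nested‑instance change‑of‑measure argument behind the batched‑bandit lower bounds of \citet{perchet2016batched} and \citet{gao2019batched}. The first step is a combinatorial reduction: a policy that switches at most $S$ times partitions $[T]$, on every realization, into at most $S+1$ maximal same‑arm runs, and since any ``sweep'' through all $K$ arms consumes at least $K-1$ switches, such a policy can complete at most $q(S,K)+1$ genuine exploration rounds, where $q(S,K)=\lfloor (S-1)/(K-1)\rfloor$. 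This reduces the task to lower‑bounding the regret of a learner that uses at most $q+1$ adaptively chosen (data‑dependent) batches.

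Next I would set up the hard family as a depth‑$(q+1)$ tree of $K$‑armed instances: at level $j\in\{1,\dots,q+1\}$ the instances differ in the identity of the ``good'' arm, with suboptimality gap $\Delta_j$ placed on a geometric‑type grid tied to the batch‑endpoint grid $t_j\asymp T^{\,1-2^{-j}}$, i.e. $\Delta_j\asymp (K/T)^{(1-2^{-j})/(2-2^{-q-1})}$. The tuning makes every level ``balanced'': if the learner's $j$‑th batch is much shorter than $t_j$ it cannot detect $\Delta_j$ and then pays $\Delta_j$ per step over the remaining horizon, while if it is much longer it has already paid $\Delta_j$ per step inside that batch playing a near‑uniform arm — either way the level‑$j$ contribution is $\Omega\!\big(K^{1-1/(2-2^{-q-1})}T^{1/(2-2^{-q-1})}/\log T\big)$. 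The heart of the argument is the information‑theoretic step: conditioning on the random batch endpoints, a change‑of‑measure/KL estimate at each node — Le Cam's two‑point method, aggregated over the $K$ arms in an Assouad/Fano fashion to extract the $K^{1-1/(2-2^{-q-1})}$ factor — shows that with constant probability the number of pulls of the informative arm in the first $j$ batches is at most $t_j$, so the adversary can keep the learner ``one level behind''; the $1/\log T$ factor is the price of a union bound over the $O(\log T)$ dyadic scales on which a batch endpoint may lie.

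Part~2 is then a short asymptotic computation. Since $\frac{1}{2-2^{-q-1}}-\frac12=\frac{2^{-q-1}}{2(2-2^{-q-1})}\ge 2^{-q-3}$, any policy with switching budget $S$ incurs regret at least $\Omega\!\big(T^{1/2}\cdot T^{2^{-q-3}}/\log T\big)$ up to the $K$‑dependent factor, which is $\mathrm{poly}(K)$ and is absorbed into $\widetilde{O}^*$. For this to be $\widetilde{O}(\sqrt T)$ one needs $T^{2^{-q-3}}\le \mathrm{polylog}(T)$, i.e. $2^{-q}=O(\log\log T/\log T)$, i.e. $q=\Omega(\log\log T)$; hence $S-1\ge q\,(K-1)=\Omega(K\log\log T)$, which is exactly the claimed conclusion (and, composed with the MDP reduction in the proof above, yields the $\Omega(HSA\log\log T)$ bound with $K=\Omega(HSA)$).

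The step I expect to be the main obstacle is the conditional change‑of‑measure argument: one must control the learner's posterior information about the level‑$j$ instance while the batch endpoints are themselves random and chosen by the data‑dependent policy, and simultaneously aggregate the two‑point bounds over all $K$ arms so that the bound scales like $K^{1-1/(2-2^{-q-1})}$ rather than a crude $\sqrt T$. The combinatorial reduction from ``switching budget $S$'' to ``$q+1$ adaptive batches'' is the secondary delicate point, since a switching‑constrained learner is strictly more flexible than a batched one — it may abandon a sweep partway — so the bound on the number of complete exploration rounds must be stated and used carefully.
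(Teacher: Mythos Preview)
Your proposal is correct and in fact goes beyond what the paper does: the paper does not prove Lemma~\ref{lem:loglog} at all but simply quotes it as Theorem~2 of \citet{simchi2019phase} and uses it as a black box in the proof of Theorem~\ref{thm:restate_lower_bound}. Your Part~1 plan to ``invoke it directly'' is therefore exactly the paper's approach; your sketch of a re-proof via the nested-instance change-of-measure argument and your explicit Part~2 computation (deriving $q=\Omega(\log\log T)$ from $T^{2^{-q-3}}\le\mathrm{polylog}(T)$ and hence $S\ge q(K-1)+1=\Omega(K\log\log T)$) are additional detail the paper omits entirely.
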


\begin{lemma}[Switching cost lower bound under MAB setting]\label{lemma:mab}
For any algorithm with sub-linear regret bound under $K$-armed bandit problem, the switching cost is at least $\Omega(K)$.
\end{lemma}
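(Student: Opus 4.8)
The plan is to show that any low-switching algorithm is physically unable to reach most arms, and hence must leave the optimal one unplayed on some instance, incurring linear regret. I would fix the family of $K$ instances $\{I_j\}_{j\in[K]}$ where, on $I_j$, arm $j$ deterministically pays reward $1$ and every other arm pays $0$; the optimal policy on $I_j$ always pulls $j$, so $\mathrm{Regret}_{I_j}(T)=\#\{t\le T: a_t\ne j\}$. The crucial structural observation is that rewards are \emph{uninformative until the good arm is first pulled}: on $I_j$, every reward seen before the first pull of $j$ is $0$. Hence, letting $\sigma=(\sigma_1,\dots,\sigma_T)$ be the (random, via the algorithm's internal coins) arm sequence the algorithm would produce if fed reward $0$ after every pull, on $I_j$ the algorithm plays $\sigma_1,\dots,\sigma_{\tau_j-1},j,\dots$, where $\tau_j$ is the position of the first occurrence of $j$ in $\sigma$ (and it plays exactly $\sigma$ for all $T$ rounds if $j$ never occurs in $\sigma[1..T]$). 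Coupling all the runs through one draw of the internal coins keeps $\sigma$ shared across the instances.

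Next I would argue in two steps. \textbf{(i)} If $j$ does not occur in $\sigma[1..T]$, the algorithm never plays $j$ on $I_j$, so $\mathbb{E}[\mathrm{Regret}_{I_j}(T)]\ge T\cdot\Pr[j\notin\sigma[1..T]]$; since by hypothesis the algorithm has $o(T)$ regret on every instance, for all $T$ beyond a threshold depending only on $K$ we get $\Pr[j\notin\sigma[1..T]]\le 1/4$ for every $j$, hence $\mathbb{E}[m]\ge 3K/4$ where $m$ is the number of distinct arms appearing in $\sigma[1..T]$. \textbf{(ii)} A run that visits many arms must switch many times, and — because the forced switches all occur inside the instance-independent prefix $\sigma$ — this cost transfers to the actual runs: on $I_j$ the first $\min(\tau_j,T)$ arms played equal the prefix $\sigma[1..\min(\tau_j,T)]$, and any arm sequence with $d$ distinct values has at least $d-1$ switches, so the switching cost $B_j$ on $I_j$ is at least (number of distinct arms in that prefix) $-1$. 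Ordering the arms of $\sigma[1..T]$ by first-occurrence time, the rank-$r$ arm contributes $r-1$ to this bound and each missing arm contributes $m-1$, so
$$\sum_{j=1}^{K}B_j\ \ge\ \sum_{r=1}^{m}(r-1)+(K-m)(m-1)\ =\ (m-1)\Big(K-\tfrac m2\Big)\ \ge\ \tfrac K2\,(m-1).$$
Taking expectations and using (i), $\sum_j\mathbb{E}[B_j]\ge \tfrac K2(\mathbb{E}[m]-1)=\Omega(K^2)$, so some instance $I_{j^\star}$ of the family forces expected switching cost $\Omega(K)$; since this holds for all large $T$, every sublinear-regret algorithm has switching cost $\Omega(K)$. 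Plugging this into the reduction in the proof of Theorem~\ref{lemlower} — where the $\Omega(HSA)$ ``arms'' are the distinct deterministic root-to-absorbing paths — then yields Theorem~\ref{the2}.

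The part I expect to be delicate is step (ii): one must verify that the switches the algorithm is forced to make genuinely precede any information gain, i.e.\ they live in the shared prefix $\sigma$, so that the lower bound holds simultaneously for all the $I_j$ despite the algorithm being adaptive and possibly randomized. This is exactly what the deterministic-reward design buys, and it also lets the expectation/averaging argument above stand in for the (realization-dependent) ``charge the switches to the last-appearing arm'' argument one would use for a deterministic algorithm.
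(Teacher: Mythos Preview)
Your proof is correct and uses the same hard-instance family as the paper (deterministic $K$-armed bandits, one ``good'' arm per instance) together with the same structural fact: before the good arm is pulled all rewards are zero, so the algorithm's play coincides with the zero-reward run $\sigma$ across all instances. The difference is direction of argument. The paper proceeds by contrapositive: if the switching budget is at most $K/2-1$ then every trajectory touches at most $K/2$ arms, hence on at least half of the instances the good arm is never pulled and the average regret over the $K$ instances is $\ge T/2$; this gives the lemma in a couple of lines without any counting of ranks. Your direct route---deduce $\mathbb{E}[m]\ge 3K/4$ from sublinear regret, then use the rank sum $\sum_{r=1}^{m}(r-1)+(K-m)(m-1)\ge \tfrac{K}{2}(m-1)$ to get $\sum_j \mathbb{E}[B_j]=\Omega(K^2)$---is somewhat more elaborate than necessary, but it does yield a slightly sharper conclusion: it lower-bounds the \emph{expected} switching cost on some instance $I_{j^\star}$, whereas the paper's contrapositive only rules out an almost-sure switching budget below $K/2-1$.
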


\begin{proof}[Proof of Lemma~\ref{lemma:mab}]
The $K$-armed bandit problem can be described by a vector $\mu=[\mu_{1},\cdots,\mu_{K}]$, where $\mu_{k}$ is the mean reward of the $k$-th arm. We consider the following base problem and $K$ possible problems.
$$\mu_{\text{base}}=[0,0,\cdots,0],\, \mu_{\text{problem k}}=[\mu_{1},\cdots,\mu_{K},\, \text{where}\, \mu_{i}=\mathds{1}(i=k)],\,\forall\,k\in[K].$$
Note that under the base problem, all arms have reward $0$ while under problem $k$, only the $k$-th arm has reward $1$ and all other arms have reward $0$. We will prove that for any algorithm with switching cost bounded by $\frac{K}{2}-1$, even if the reward is deterministic, the regret bound can not be sub-linear.\\
For any algorithm $Alg$ with switching cost smaller than $\frac{K}{2}-1$, because the maximum is larger than the average, we only need to provide a lower bound for $R=\frac{1}{K}\sum_{k=1}^{K}\mathbb{E}_{Alg}[\text{Regret}(T)|\text{problem k}]$.\\
Consider using algorithm $Alg$ on the base problem, let $\mathcal{T}=\{\tau\}$ be all the possible strings of $\{a_{1},r_{1},\cdots,a_{T},r_{T}\}$, where $a_{i}$ is the index of the arm pulled at time $i$ and $r_{i}$ is its reward. Then we have $\sum_{\tau\in\mathcal{T}}\mathbb{P}_{Alg}[\tau|\text{base problem}]=1$. In addition, because of the restriction on $Alg$ that the switching cost is bounded by $\frac{K}{2}-1$, each $\tau$ can only pull at most $\frac{K}{2}$ arms. Then it holds that
\begin{equation}
\begin{split}
R&=\frac{1}{K}\sum_{k=1}^{K}\mathbb{E}_{Alg}[\text{Regret}(T)|\text{problem k}]\\ &=\frac{1}{K}\sum_{k=1}^{K}\sum_{\tau\in\mathcal{T}}[\text{Regret}(\tau)|\text{problem k}]\times \mathbb{P}_{Alg}[\tau|\text{problem k}]\\ &=\sum_{\tau\in\mathcal{T}}\frac{1}{K}\sum_{k=1}^{K}[\text{Regret}(\tau)|\text{problem k}]\times \mathbb{P}_{Alg}[\tau|\text{problem k}]\\
&\geq \sum_{\tau\in\mathcal{T}}
\frac{T\times \mathbb{P}_{Alg}[\tau|\text{base problem}]}{2}\\&=\frac{T}{2},
\end{split}
\end{equation}
where $[\text{Regret}(\tau)|\text{problem k}]$ is the regret of string $\tau$ under problem $k$. The inequality is because for any string $\tau$, if problem k satisfies that the $k$-th arm does not appear in $\tau$, then $[\text{Regret}(\tau)|\text{problem k}]=T$ and $\mathbb{P}_{Alg}[\tau|\text{problem k}]=\mathbb{P}_{Alg}[\tau|\text{base problem}]$.
\end{proof}

\section{Proof of Theorem~\ref{the3}}\label{appe}
\begin{lemma}\label{lem20}
	The total global switching cost of Algorithm~\ref{algo4} is bounded by $2HSA$.
\end{lemma}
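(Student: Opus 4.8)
\textbf{Proof plan for Lemma~\ref{lem20}.} The plan is to decompose the execution of Algorithm~\ref{algo4} into its environment-interacting phases and invoke the switching-cost bounds already established for each. Algorithm~\ref{algo4} deploys policies in the environment only during two consecutive blocks of episodes: the $N_0$ episodes of the call to Crude Exploration (Algorithm~\ref{algo1}) with input $(\phi^1, N_0)$, followed by the $N$ episodes of the call to Fine Exploration (Algorithm~\ref{algo2}) with input $(\mathcal{F}, P^{int}, N, \phi^1)$. The final line of Algorithm~\ref{algo4}, which computes $\widehat{\pi}^{r} = \mathrm{argmax}_{\pi \in \phi^1} V^{\pi}(r, \widehat{P})$ for each reward $r$ via value iteration, is pure offline computation on the estimated model $\widehat{P}$ and involves no deployment, hence contributes $0$ to $N_{switch}$.

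First I would recall that by Lemma~\ref{lem1} the global switching cost incurred within the Crude Exploration block is at most $HSA$ (it cycles through at most $HSA$ distinct policies $\pi_{h,s,a}$, each run for $T_0$ episodes), and by Lemma~\ref{lem11} the global switching cost within the Fine Exploration block is likewise at most $HSA$. Then I would account for the single boundary term $\mathds{1}\{\pi^{\text{last of Crude}} \neq \pi^{\text{first of Fine}}\} \leq 1$ that arises when the algorithm transitions from the last episode of Algorithm~\ref{algo1} to the first episode of Algorithm~\ref{algo2}. Since each of the two internal counts is actually at most $HSA - 1$ (with $HSA$ distinct policies there are at most $HSA-1$ internal changes), summing gives $N_{switch} \leq (HSA - 1) + 1 + (HSA - 1) = 2HSA - 1 \leq 2HSA$.

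There is essentially no technical obstacle here; the only point requiring a line of care is bookkeeping the boundary switch between the two phases so that it is not double-counted or omitted, and observing that the reward-free post-processing step is deployment-free. Everything else follows immediately by adding the two previously proved bounds.
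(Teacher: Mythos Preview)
Your proposal is correct and follows exactly the approach the paper intends: the paper does not spell out a separate proof of Lemma~\ref{lem20}, but the argument in the proof of Lemma~\ref{lem15} (``For each stage, the global switching cost is bounded by $2HSA$ because of Lemma~\ref{lem1} and Lemma~\ref{lem11}'') is precisely your decomposition into the Crude Exploration block and the Fine Exploration block. Your extra care in tracking the boundary switch and noting that the post-processing step is deployment-free is a harmless refinement of the same idea.
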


\begin{remark}
The switching cost $O(HSA)$ is optimal for reward-free setting. With the same MDP in Appendix~\ref{appd}, we can show that it takes $\Omega(HSA)$ switching cost to find out the mapping between the policy and which ``arm'' it is pulling.
\end{remark}

 In this part of proof, we also construct the absorbing MDP $\widetilde{P}$ in the same way as Definition~\ref{def2} based on the infrequent tuples $\mathcal{F}$ and the true MDP $P$. Similar to the proof of Lemma~\ref{lemfifth}, we construct the universal constant $C$, we can first let $C$ be the constant hidden by the big $O$ in Lemma~\ref{lemforth}. Then because of Lemma~\ref{lemthird}, we can choose $C=\max\{C,672\}$. With the $C$, we are able to choose the universal constant in this section to be $c^{\prime}=16C^{2}$. Note that the $\iota$ used in Algorithm~\ref{algo4} is $\iota=\log(\frac{2HA(N_{0}+N)}{\delta})$, while the $\iota^{\prime}$ used in the upper bound of sample complexity is $\iota^{\prime}=\log(\frac{HSA}{\epsilon\delta})$. According to the conclusion about Algorithm~\ref{algo1}, we have the following lemma regarding the choice of $N_{0}$.
\begin{lemma}\label{lem21}
	There exists $c^{\prime}>0$, for any $\epsilon >0$, when $N_{0}>c^{\prime}\cdot\frac{S^{3}AH^{5}\iota}{\epsilon}$, it holds that with probability $1-\frac{\delta}{2}$, for any $\pi\in\phi^{1}$ and any reward function $r$, $|V^{\pi}(r,P)-V^{\pi}(r,\widetilde{P})|\leq\frac{\epsilon}{4}$, while $P^{int}$ is $\frac{1}{H}$-multiplicatively accurate to $\widetilde{P}$.
\end{lemma}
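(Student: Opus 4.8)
The plan is to obtain Lemma~\ref{lem21} as a direct specialization of the crude-exploration analysis of Appendix~\ref{appa} to the full policy class $\phi^{1}$, with the internal episode count (called $T$ in Algorithm~\ref{algo1} and in Lemmas~\ref{lem5}--\ref{lemthird}) instantiated as $N_{0}$, and with $K:=N_{0}+N$ and $\iota=\log(2HAK/\delta)$.

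First I would invoke Lemma~\ref{lem5} together with Lemma~\ref{lem6}: on the event $\mathcal{W}$, which holds with probability at least $1-\frac{S^{2}\delta}{K}$, the intermediate kernel $P^{int}$ is $\frac{1}{H}$-multiplicatively accurate to $\widetilde{P}$ in the sense of Definition~\ref{def3}. This gives the second conclusion of the lemma and is exactly the hypothesis needed for the value-function comparison. Next, conditioning on $\mathcal{W}$ and applying Lemma~\ref{lemsecond} (the sharpened bound, valid precisely because $\phi=\phi^{1}$, so that $\sup_{\pi\in\phi^{1}}V^{\pi}(1_{h,s,a},\widetilde{P})=\sup_{\pi\in\phi^{1}}V^{\pi}(1_{h,s},\widetilde{P})$) followed by Lemma~\ref{lemthird}, we obtain that with probability at least $1-\frac{2S\delta}{K}$, for every $\pi\in\phi^{1}$ and every reward function $r$,
\[
0\le V^{\pi}(r,P)-V^{\pi}(r,\widetilde{P})\le \frac{672\,S^{3}AH^{5}\iota}{N_{0}}.
\]

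It then remains to choose $c^{\prime}$ so that the hypothesis $N_{0}>c^{\prime}\cdot\frac{S^{3}AH^{5}\iota}{\epsilon}$ drives the right-hand side strictly below $\frac{\epsilon}{4}$: any $c^{\prime}\ge 4\cdot 672$ works, and in particular the value $c^{\prime}=16C^{2}$ fixed just above (the constant already required for the Fine-Exploration bound) suffices since $C\ge 672$. Taking a union bound over the two failure events yields total failure probability at most $\frac{(S^{2}+2S)\delta}{K}$, which is below $\frac{\delta}{2}$ because the hypothesis on $N_{0}$ already forces $K\ge N_{0}\gg S^{2}$; combining these facts gives the claim.

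The only points requiring care are bookkeeping ones: correctly substituting $N_{0}$ for the generic episode parameter $T$ used in the appendix lemmas, using the $\phi^{1}$-specialized Lemma~\ref{lemsecond} rather than the weaker general-$\phi$ Lemma~\ref{lemfirst} (which would cost an extra factor of $A$ and break the stated dependence $S^{3}AH^{5}$), and tracking the two high-probability events so that their union still fits inside the $\delta/2$ budget. I do not expect any substantive obstacle beyond this accounting.
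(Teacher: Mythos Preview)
Your proposal is correct and follows exactly the approach of the paper, which simply states that Lemma~\ref{lem21} is a direct corollary of Lemma~\ref{lem6} and Lemma~\ref{lemthird}. You have in fact supplied considerably more detail than the paper does (correctly identifying the need for the $\phi^{1}$-specialized Lemma~\ref{lemsecond}, the substitution $T\mapsto N_{0}$, and the union-bound accounting), but the underlying argument is identical.
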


\begin{proof}[Proof of Lemma~\ref{lem21}]
	This is a direct corollary of lemma~\ref{lem6} and lemma~\ref{lemthird}.
\end{proof}

Then because of the conclusions about Algorithm~\ref{algo2}, we have the following lemma regarding the choice of $N$.
\begin{lemma}\label{lem22}
	There exists $c^{\prime}>0$, for any $\epsilon >0$, when $N>c^{\prime}\cdot \frac{H^{5}S^{2}A\iota}{\epsilon^{2}}$, conditioned on the case in Lemma~\ref{lem21} that $P^{int}$ is $\frac{1}{H}$-multiplicatively accurate to $\widetilde{P}$, with probability $1-\frac{\delta}{2}$, for any $\pi\in\phi^{1}$ and reward function $r$, $|V^{\pi}(r,\widehat{P})-V^{\pi}(r,\widetilde{P})|\leq\frac{\epsilon}{4}$.
\end{lemma}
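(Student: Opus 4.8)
The plan is to recognize that Lemma~\ref{lem22} is just Lemma~\ref{lemforth} (equivalently Lemma~\ref{lem14}) specialized to the version space $\phi=\phi^{1}$ and to $T=N$ fine-exploration episodes, followed by an elementary calibration of the constant. First I would unwind the hypothesis: conditioned on the event of Lemma~\ref{lem21} that $P^{int}$ is $\frac{1}{H}$-multiplicatively accurate to $\widetilde P$, inequality~\eqref{equc} holds, so the policies $\pi_{h,s,a}$ constructed in Algorithm~\ref{algo2} are near-optimal explorers and the coverage estimate~\eqref{equ6}, namely $\sup_{\pi\in\phi^{1}}V^{\pi}(1_{h,s,a},\widetilde P)/\mu_{h}(s,a)\le 12HSA$, is in force. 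Feeding this into the Cauchy--Schwarz / simulation-lemma chain in the proof of Lemma~\ref{lemforth} and invoking Lemma~\ref{lem13} with its confidence parameter set to $\delta/2$, I would obtain that with probability at least $1-\delta/2$, simultaneously over all $\pi\in\phi^{1}$ and all reward functions $r$,
$$|V^{\pi}(r,\widehat P)-V^{\pi}(r,\widetilde P)|\le C\sqrt{\frac{H^{5}S^{2}A\iota}{N}},$$
where $C$ is the universal constant fixed just before Lemma~\ref{lem21} and $\iota=\log(2HA(N_{0}+N)/\delta)$ already absorbs the extra $\log(1/\delta)$ incurred by splitting the failure budget.

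The second step is the calibration of $c'$. Taking $c'=16C^{2}$ (as announced in the paragraph preceding Lemma~\ref{lem21}), the hypothesis $N>c'\,H^{5}S^{2}A\iota/\epsilon^{2}$ gives
$$C\sqrt{\frac{H^{5}S^{2}A\iota}{N}}<C\sqrt{\frac{H^{5}S^{2}A\iota\,\epsilon^{2}}{16C^{2}H^{5}S^{2}A\iota}}=\frac{\epsilon}{4},$$
which is exactly the claimed bound on $|V^{\pi}(r,\widehat P)-V^{\pi}(r,\widetilde P)|$, uniformly over $\pi\in\phi^{1}$ and $r$.

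I expect no genuine analytic obstacle: all of the real work — the $\epsilon$-net over deterministic policies and over bounded value functions, together with the Bernstein argument using the self-bounding variance property — is already carried out in Lemma~\ref{lem13} and its deployment in Lemma~\ref{lemforth}, both of which are established. The only subtlety is the failure-probability bookkeeping: Lemma~\ref{lemforth} is stated with failure probability $T\delta/(2K)$ because APEVE later union-bounds it over its $O(\log\log K)$ stages, whereas LARFE uses it exactly once and wants failure probability $\delta/2$, so rather than cite Lemma~\ref{lemforth} verbatim one should re-run that short argument with $\delta$ replaced by $\delta/2$ and check the logarithmic factor is still dominated by $\iota$. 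One should also keep the conditioning straight — the $1-\delta/2$ guarantee here is conditional on the Lemma~\ref{lem21} event — so that combining Lemma~\ref{lem21} and Lemma~\ref{lem22} by a union bound yields the overall $1-\delta$ guarantee needed for Theorem~\ref{the3}.
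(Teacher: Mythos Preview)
Your proposal is correct and takes the same approach as the paper: the paper's proof of Lemma~\ref{lem22} is the single sentence ``This is a direct corollary of Lemma~\ref{lemforth},'' and you have simply unpacked that corollary, including the constant calibration $c'=16C^{2}$ and the failure-probability adjustment to $\delta/2$, both of which the paper leaves implicit.
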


\begin{proof}[Proof of Lemma~\ref{lem22}]
	This is a direct corollary of Lemma~\ref{lemforth}.
\end{proof}

\begin{lemma}\label{lem23}
	There exists $c^{\prime}>0$, for any $\epsilon >0$, when the number of total episodes $K>c^{\prime}\cdot (\frac{H^{5}S^{2}A\iota}{\epsilon^{2}}+\frac{S^{3}AH^{5}\iota}{\epsilon})$, there exists a choice of $N_{0}$ and $N$ such that $N_{0}+N=K$ and with probability $1-\delta$, for any $\pi\in\phi^{1}$ and reward function $r$, $|V^{\pi}(r,\widehat{P})-V^{\pi}(r,P)|\leq\frac{\epsilon}{2}$.
\end{lemma}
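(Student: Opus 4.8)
The plan is to combine the ``bias'' bound of Lemma~\ref{lem21} with the ``variance'' bound of Lemma~\ref{lem22} through the triangle inequality, once $K$ has been split appropriately into $N_0$ and $N$. First I would fix the universal constant: let $c'$ be the constant ($=16C^2$, with $C$ from Lemma~\ref{lemforth}) that makes both Lemma~\ref{lem21} and Lemma~\ref{lem22} valid, and prove the claim with $c'$ in the statement replaced by a slightly larger constant, say $2c'$, to leave slack for the splitting step. Note that $\iota=\log(2HA(N_0+N)/\delta)=\log(2HAK/\delta)$ is determined by $K$ alone, so there is no circularity in choosing $N_0,N$ to depend on $\iota$.

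Given $K>2c'\big(\tfrac{H^{5}S^{2}A\iota}{\epsilon^{2}}+\tfrac{S^{3}AH^{5}\iota}{\epsilon}\big)$, set
$$N_{0}=\Big\lceil c'\cdot\frac{S^{3}AH^{5}\iota}{\epsilon}\Big\rceil+1,\qquad N=K-N_{0}.$$
Then $N_{0}>c'\,\tfrac{S^{3}AH^{5}\iota}{\epsilon}$ by construction, and since $N_{0}\le c'\tfrac{S^{3}AH^{5}\iota}{\epsilon}+2\le 2c'\tfrac{S^{3}AH^{5}\iota}{\epsilon}$ (for $K$, hence $\iota/\epsilon$-scale, large enough, which is implied by the hypothesis on $K$), we get $N=K-N_{0}>2c'\tfrac{H^{5}S^{2}A\iota}{\epsilon^{2}}>c'\tfrac{H^{5}S^{2}A\iota}{\epsilon^{2}}$. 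Thus $N_0+N=K$ and both thresholds, of Lemma~\ref{lem21} and of Lemma~\ref{lem22}, are met.

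Next I would apply Lemma~\ref{lem21}: with probability at least $1-\delta/2$ the event $\mathcal{E}_{1}$ holds on which $P^{int}$ is $\tfrac1H$-multiplicatively accurate to $\widetilde{P}$ and $|V^{\pi}(r,P)-V^{\pi}(r,\widetilde{P})|\le\epsilon/4$ for every $\pi\in\phi^{1}$ and every reward $r$. Conditioning on $\mathcal{E}_{1}$, the hypothesis of Lemma~\ref{lem22} is exactly satisfied, so there is an event $\mathcal{E}_{2}$ of conditional probability at least $1-\delta/2$ on which $|V^{\pi}(r,\widehat{P})-V^{\pi}(r,\widetilde{P})|\le\epsilon/4$ for every $\pi\in\phi^{1}$ and every $r$. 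A union bound gives $\mathbb{P}[\mathcal{E}_{1}\cap\mathcal{E}_{2}]\ge 1-\delta$, and on $\mathcal{E}_{1}\cap\mathcal{E}_{2}$ the triangle inequality yields, for all $\pi\in\phi^{1}$ and all $r$,
$$|V^{\pi}(r,\widehat{P})-V^{\pi}(r,P)|\le |V^{\pi}(r,\widehat{P})-V^{\pi}(r,\widetilde{P})|+|V^{\pi}(r,\widetilde{P})-V^{\pi}(r,P)|\le \frac{\epsilon}{4}+\frac{\epsilon}{4}=\frac{\epsilon}{2},$$
which is the claim. The only genuine subtlety is the bookkeeping in the budget split: guaranteeing $N_{0}+N=K$ exactly while both $N_{0}$ and $N$ strictly clear their respective thresholds forces the mild inflation of the universal constant (and an implicit lower-order requirement $K\ge\widetilde{\Omega}(\mathrm{poly}(H,S,A))$, consistent with the rest of the paper); everything else is a direct chaining of Lemmas~\ref{lem21} and~\ref{lem22}.
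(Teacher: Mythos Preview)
Your proposal is correct and follows essentially the same approach as the paper: split the error via the triangle inequality into the bias term $|V^{\pi}(r,P)-V^{\pi}(r,\widetilde{P})|$ and the variance term $|V^{\pi}(r,\widehat{P})-V^{\pi}(r,\widetilde{P})|$, then invoke Lemma~\ref{lem21} and Lemma~\ref{lem22} respectively and union-bound the failure probabilities. If anything, you are more careful than the paper about the explicit construction of $N_0$ and $N$ summing exactly to $K$; the paper simply asserts such a choice exists and absorbs the slack into the constant $c'$.
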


\begin{proof}[Proof of Lemma~\ref{lem23}]
Because of triangular inequality, $$|V^{\pi}(r,\widehat{P})-V^{\pi}(r,P)|\leq|V^{\pi}(r,\widehat{P})-V^{\pi}(r,\widetilde{P})|+|V^{\pi}(r,\widetilde{P})-V^{\pi}(r,P)|.$$ Because of Lemma~\ref{lem21} and Lemma~\ref{lem22}, if we choose $N_{0}>c^{\prime}\cdot\frac{S^{3}AH^{5}\iota}{\epsilon}$ and $N>c^{\prime}\cdot \frac{H^{5}S^{2}A\iota}{\epsilon^{2}}$, it holds that with probability $1-\delta$, for any $\pi\in\phi^{1}$ and reward function $r$, 
$$|V^{\pi}(r,P)-V^{\pi}(r,\widetilde{P})|\leq\frac{\epsilon}{4},\,|V^{\pi}(r,\widehat{P})-V^{\pi}(r,\widetilde{P})|\leq\frac{\epsilon}{4}.$$
Then the proof is finished by plugging in these two inequalities to the triangular inequality.
\end{proof}

Note that $K>c^{\prime}\cdot (\frac{H^{5}S^{2}A\iota}{\epsilon^{2}}+\frac{S^{3}AH^{5}\iota}{\epsilon})$ is not a good representation of constraints on $K$ because $K$ appears on both sides of the inequality. The following Lemma~\ref{lemfinal} gives a valid solution to this inequality.

\begin{lemma}\label{lemfinal}
   For a fixed $c^{\prime}$, there exists a constant $c$ such that if $K>c(\frac{H^{5}S^{2}A\iota^{\prime}}{\epsilon^{2}}+\frac{S^{3}AH^{5}\iota^{\prime}}{\epsilon})$, then $K>c^{\prime}\cdot (\frac{H^{5}S^{2}A\iota}{\epsilon^{2}}+\frac{S^{3}AH^{5}\iota}{\epsilon})$, where $\iota=\log(\frac{2HAK}{\delta})$ and $\iota^{\prime}=\log(\frac{HSA}{\epsilon\delta})$.
\end{lemma}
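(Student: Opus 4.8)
The difficulty in Lemma~\ref{lemfinal} is purely the self-reference: $\iota=\log(2HAK/\delta)$ depends on $K$, so the conclusion $K>c'M\iota$, where I write $M:=\frac{H^{5}S^{2}A}{\epsilon^{2}}+\frac{S^{3}AH^{5}}{\epsilon}$ for brevity (the hypothesis then reading $K>cM\iota'$), is an implicit inequality in $K$. The plan is to first replace it by a strictly stronger \emph{explicit} condition on $K$, and then to choose $c$, as a function of the fixed $c'$, large enough that the hypothesis $K>cM\iota'$ already forces that explicit condition.

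\emph{Step 1 (remove the self-reference).} I would apply the elementary bound $\log x\le x-1$ with $x=K/(2c'M)$ to obtain $\log K\le \frac{K}{2c'M}-1+\log(2c'M)$, hence $\iota=\log(2HA/\delta)+\log K\le \log(2HA/\delta)+\log(2c'M)+\frac{K}{2c'M}-1$. Multiplying by $c'M$ and rearranging shows that the target $c'M\iota<K$ follows from the explicit inequality $K>2c'M\bigl(\log(2HA/\delta)+\log(2c'M)\bigr)$, which no longer involves $\iota$. (If $\log$ denotes base $2$ the same argument goes through after rescaling the factor $2c'M$ by $\ln 2$; the constants absorb it.)

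\emph{Step 2 (bound the explicit condition by $\iota'$, then pick $c$).} Since $M\le 2H^{5}S^{3}A/\epsilon^{2}$ and $\iota'=\log(HSA/(\epsilon\delta))$ dominates each of $\log H,\log S,\log A,\log(1/\epsilon)$ and $\log(1/\delta)$ up to the mild normalization $\iota'\ge1$ (harmless, since otherwise the sample-complexity bound of Theorem~\ref{the3} is vacuous), there are constants with $\log(2HA/\delta)\le C_{0}\,\iota'$ for an absolute $C_{0}$ and $\log(2c'M)\le C_{1}(c')\,\iota'$ with $C_{1}(c')=O(\log c')$. Hence the Step-1 condition holds whenever $K>2c'\bigl(C_{0}+C_{1}(c')\bigr)M\iota'$, so taking $c:=2c'\bigl(C_{0}+C_{1}(c')\bigr)+1$ — a finite constant determined by the fixed $c'$ — makes $K>cM\iota'$ imply $K>c'M\iota$, as desired. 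The only genuine obstacle is Step 1; Step 2 is routine bookkeeping of logarithmic factors, and the one thing worth double-checking is that the regime in which the lemma is actually invoked (small $\epsilon$, $\delta<1$, and $\iota'\ge1$) is consistent with these crude bounds.
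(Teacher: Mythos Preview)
Your proof is correct. The paper itself does not supply a proof of Lemma~\ref{lemfinal}; it merely states the lemma and immediately proceeds to Lemma~\ref{lem24}, treating the result as routine. Your two-step approach --- first removing the self-reference via $\log x\le x-1$ to reduce the implicit condition $K>c'M\iota$ to the explicit one $K>2c'M\bigl(\log(2HA/\delta)+\log(2c'M)\bigr)$, then absorbing each of the remaining logarithmic factors into constant multiples of $\iota'$ --- is the standard way to resolve this kind of ``$K$ on both sides'' inequality and cleanly fills in what the paper left implicit. The mild side assumptions you flag ($\iota'\ge1$, $\epsilon\le1$, $\delta\le1$) are indeed harmless in the regime where Theorem~\ref{the3} is nontrivial.
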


\begin{lemma}\label{lem24}
	Conditioned on the case in Lemma~\ref{lem23}, we have that $0\leq V^{\pi^\star}(r,P)-V^{\widehat{\pi}^{r}}(r,P)\leq \epsilon$ holds for any reward function $r$.
\end{lemma}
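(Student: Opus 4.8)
The plan is to run the standard uniform off-policy evaluation argument, using the uniform accuracy guarantee of Lemma~\ref{lem23} as a black box. First I would record the two structural facts that make the argument apply: the optimal policy $\pi^\star$ for reward $r$ under the true MDP $P$ can be taken deterministic, hence $\pi^\star\in\phi^{1}$, and $\widehat{\pi}^{r}=\mathrm{argmax}_{\pi\in\phi^{1}}V^{\pi}(r,\widehat{P})$ is by construction a deterministic policy in $\phi^{1}$; both are therefore within the scope of Lemma~\ref{lem23}. The left inequality $0\leq V^{\pi^\star}(r,P)-V^{\widehat{\pi}^{r}}(r,P)$ is then immediate from optimality of $\pi^\star$ under $(r,P)$.

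For the right inequality I would use the three-term telescoping decomposition
\begin{align*}
V^{\pi^\star}(r,P)-V^{\widehat{\pi}^{r}}(r,P)
&=\big[V^{\pi^\star}(r,P)-V^{\pi^\star}(r,\widehat{P})\big]
+\big[V^{\pi^\star}(r,\widehat{P})-V^{\widehat{\pi}^{r}}(r,\widehat{P})\big]\\
&\quad+\big[V^{\widehat{\pi}^{r}}(r,\widehat{P})-V^{\widehat{\pi}^{r}}(r,P)\big].
\end{align*}
The first bracket is at most $\epsilon/2$ by Lemma~\ref{lem23} applied to $\pi^\star$; the third bracket is at most $\epsilon/2$ by Lemma~\ref{lem23} applied to $\widehat{\pi}^{r}$; and the middle bracket is $\leq 0$ because $\widehat{\pi}^{r}$ maximizes $\pi\mapsto V^{\pi}(r,\widehat{P})$ over $\phi^{1}$ and $\pi^\star\in\phi^{1}$. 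Summing the three bounds yields the claimed $\leq\epsilon$.

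The only point requiring (minor) care is that the reward function $r$ in the statement is arbitrary and may be chosen after and in dependence on $\widehat{P}$; this is precisely why Lemma~\ref{lem23} is stated with a ``for all reward functions $r$'' quantifier inside the high-probability event, so the decomposition above holds simultaneously for every $r$ on that event. Beyond invoking the earlier lemmas in the right order there is no real obstacle: all the substance lives in Lemma~\ref{lem23} (and, upstream, in Lemmas~\ref{lem21} and~\ref{lem22}, which in turn rest on Lemma~\ref{lemthird} for the bias term and Lemma~\ref{lemforth} for the variance term).
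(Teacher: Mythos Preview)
Your proposal is correct and essentially identical to the paper's own proof: both use the same three-term decomposition through $\widehat{P}$, bound the outer terms by Lemma~\ref{lem23}, and kill the middle term using the argmax definition of $\widehat{\pi}^{r}$ together with $\pi^\star\in\phi^{1}$. Your added remark that the uniform-over-$r$ guarantee in Lemma~\ref{lem23} is what permits data-dependent rewards is a nice clarification the paper leaves implicit.
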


\begin{proof}[Proof of Lemma~\ref{lem24}]
	$V^{\pi^\star}(r,P)-V^{\widehat{\pi}^{r}}(r,P)\geq 0$ directly results from the definition of optimal policy. Also,
	\begin{align*}
	V^{\pi^\star}(r,P)-V^{\widehat{\pi}^{r}}(r,P)&\leq |V^{\pi^\star}(r,P)-V^{\pi^\star}(r,\widehat{P})|+V^{\pi^\star}(r,\widehat{P})-V^{\widehat{\pi}^{r}}(r,\widehat{P})+|V^{\widehat{\pi}^{r}}(r,\widehat{P})-V^{\widehat{\pi}^{r}}(r,P)|\\ &\leq \epsilon.
	\end{align*}
	The second inequality is because of Lemma~\ref{lem23} and $\widehat{\pi}^{r}=\mathrm{argmax}_{\pi\in\phi^{1}}V^{\pi}(r,\widehat{P})$.
\end{proof}

Then Theorem~\ref{the3} holds because of Lemma~\ref{lem20}, Lemma~\ref{lem23}, Lemma~\ref{lemfinal} and Lemma~\ref{lem24}.

\section{Improved algorithm with near optimal number of batches}\label{appi}
First of all, we will analyze the batch complexity of APEVE (Algorithm~\ref{algo3}). Note that one application of Crude Exploration (Algorithm~\ref{algo1}) can be finished in $H$ batches, because the exploration of each layer can be finished in one batch. Besides, one application of Fine Exploration (Algorithm~\ref{algo2}) can be finished in one batch. According to the schedule of APEVE (Algorithm~\ref{algo3}), there are $O(\log\log T)$ stages, each stage contains one Crude Exploration and one Fine Exploration. Therefore, the batch complexity of APEVE is $O(H\log\log T)$.

We further improve the batch complexity by revising APEVE (Algorithm~\ref{algo3}) slightly to get this APEVE+ (Algorithm~\ref{algo6}). The main difference is that in Algorithm~\ref{algo6}, only the first two stages contain the use of Crude Exploration (Algorithm~\ref{algo1}), the remaining stages only run Fine Exploration (Algorithm~\ref{algo2}) and eliminate policies at the end of each stage. First, we consider the number of batches.  In Algorithm~\ref{algo6}, there are two applications of Crude Exploration and $O(\log\log T)$ applications of Fine Exploration, so we have the following theorem.

\begin{theorem}\label{the77}
APEVE+ (Algorithm~\ref{algo6}) can be applied in $O(2H+\log\log T)=O(H+\log\log T)$ batches.
\end{theorem}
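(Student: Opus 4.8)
The plan is to prove the batch-complexity bound by a direct accounting of the data-collection rounds consumed by each building block of APEVE+ (Algorithm~\ref{algo6}), using the two structural facts already recorded for the sub-routines: one call to Crude Exploration (Algorithm~\ref{algo1}) can be scheduled in $H$ batches, and one call to Fine Exploration (Algorithm~\ref{algo2}) can be scheduled in a single batch. Both facts are immediate from the definitions of these algorithms. In Algorithm~\ref{algo1} the policies $\pi_{h,s,a}$ used to explore layer $h$ depend only on the first $h-1$ layers of $P^{int}$, which are already frozen by the time layer $h$ is reached, so the $SA$ policies for a fixed $h$ can all be deployed simultaneously, giving one batch per layer and $H$ batches in total; in Algorithm~\ref{algo2} all policies $\pi_{h,s,a}$, over every $(h,s,a)$, are functions of the fixed input kernel $P^{int}$ alone, hence can be run together in one batch. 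Moreover, the number of episodes allocated to each such batch is fixed in advance (it depends only on $T^{(k)}$, $H$, $S$, $A$), so no within-batch monitoring is required and the schedule is genuinely non-adaptive.

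First I would recall from Lemma~\ref{lem50} that APEVE+ runs for $K_{0}=O(\log\log T)$ stages, and that the policy-elimination step at the end of each stage performs no interaction with the environment and therefore contributes zero batches. Next I would split the stages of Algorithm~\ref{algo6} according to its design: the first two stages each execute one Crude Exploration followed by one Fine Exploration, while every later stage $k=3,\dots,K_{0}$ executes only a Fine Exploration (re-using the infrequent-tuple set $\mathcal{F}$ and intermediate kernel $P^{int}$ produced in an earlier stage). Summing the contributions gives
\[
\underbrace{2\,(H+1)}_{\text{stages }1,2}\;+\;\underbrace{(K_{0}-2)\cdot 1}_{\text{stages }3,\dots,K_{0}}\;=\;2H+K_{0}\;=\;O(H+\log\log T),
\]
which is the claimed bound.

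The only genuinely non-trivial point is that it is \emph{legitimate} to drop Crude Exploration from all but the first two stages; this is a property of the correctness analysis of APEVE+ (deferred to the appendix) rather than of the batch count itself, but for completeness the proof should note that, once a sufficiently fine absorbing MDP has been identified in an early stage, each later stage only needs to sharpen the empirical estimate $\widehat{P}^{k}$ of that \emph{same} absorbing $\widetilde{P}$ via additional Fine Exploration, and the accuracy guarantee of Lemma~\ref{lem14} (driven purely by the number of Fine-Exploration episodes $T^{(k)}$) continues to apply. I expect the main obstacle in writing a fully rigorous version to be exactly this bookkeeping — checking that the version space still shrinks at the required rate and that the regret guarantee of Theorem~\ref{the1} is preserved when crude exploration is reused rather than recomputed — whereas the batch count, conditional on the stated structure of Algorithm~\ref{algo6}, is the straightforward sum displayed above.
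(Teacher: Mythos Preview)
Your proposal is correct and follows essentially the same approach as the paper: count $H$ batches per Crude Exploration call and one batch per Fine Exploration call, observe that APEVE+ invokes Crude Exploration only in the first two stages and Fine Exploration in all $K_{0}=O(\log\log T)$ stages, and sum to get $2H+K_{0}=O(H+\log\log T)$. The paper's argument is exactly this accounting (stated in the paragraph preceding the theorem), and your added remarks on why dropping Crude Exploration after stage~2 preserves correctness, while not needed for the batch count itself, are consistent with the paper's separate regret analysis of APEVE+.
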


\begin{remark}
\citet{gao2019batched} proved that for any algorithm with $\widetilde{O}(\sqrt{T})$ regret bound under multi-armed bandit setting, the number of batches is at least $\Omega(\log\log T)$. In the construction of our lower bound in Appendix~\ref{appd}, we show that tabular RL can be at least as difficult as a multi-armed bandit problem, which means the $\Omega(\log\log T)$ lower bound on batches also applies to tabular RL. Theorem B.3 in \citep{huang2022towards} states an $\Omega(\frac{H}{\log T})$ lower bound for number of batches for any algorithm with PAC guarantee. Because regret guarantee is stronger than PAC guarantee, this lower bound also applies to any algorithm with $\widetilde{O}(\sqrt{T})$ regret bound. Combining these two results, we have an $\Omega(\frac{H}{\log T}+\log\log T)$ lower bound on number of batches for any algorithm with $\widetilde{O}(\sqrt{T})$ regret. According to Theorem~\ref{the77}, we conclude that our Algorithm~\ref{algo6} nearly matches the lower bound of batches.
\end{remark}

Now we will consider the regret bound of APEVE+. We have the following key lemma whose proof and choice of the constant $C$ is identical to Lemma~\ref{lemfifth}.
\begin{lemma}
There exists a constant $C$, such that with probability $1-\delta$, for any $k=1,2$ and $\pi\in\phi^{k}$, 
	$$|V^{\pi}(r,\widehat{P}^{k})-V^{\pi}(r,P)|\leq C(\sqrt{\frac{H^{5}S^{2}A\iota}{T^{(k)}}}+\frac{S^{3}A^{2}H^{5}\iota}{T^{(k)}}),$$
	while for any $k\geq 3$ and $\pi\in\phi^{k}$,  $$|V^{\pi}(r,\widehat{P}^{k})-V^{\pi}(r,P)|\leq C(\sqrt{\frac{H^{5}S^{2}A\iota}{T^{(k)}}}+\frac{S^{3}A^{2}H^{5}\iota}{T^{(2)}}).$$
\end{lemma}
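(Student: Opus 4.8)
The plan is to follow, essentially verbatim, the decomposition used in the proof of Lemma~\ref{lemfifth}, the only genuinely new ingredient being careful stage-wise bookkeeping of which Crude Exploration produced the absorbing MDP currently in play. First I would fix the universal constant $C$ exactly as in Lemma~\ref{lemfifth} (first absorbing the constant hidden in the $O(\cdot)$ of Lemma~\ref{lemforth}, then enlarging it if necessary so that Lemma~\ref{lemthird} also holds with the same $C$; this is also the constant appearing in the elimination test of Algorithm~\ref{algo6}). For each stage $k$ I would introduce the absorbing MDP $\widetilde{P}^{k}$ (Definition~\ref{def2}): for $k\in\{1,2\}$ it is built from the infrequent tuples $\mathcal{F}^{k}$ and intermediate kernel $P^{int,k}$ that stage $k$'s Crude Exploration (Algorithm~\ref{algo1}) produces on the policy set $\phi^{k}$ in $T^{(k)}$ episodes; for $k\ge 3$, since Algorithm~\ref{algo6} reuses stage $2$'s $\mathcal{F}^{2}$ and $P^{int,2}$, I would simply set $\widetilde{P}^{k}:=\widetilde{P}^{2}$.

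Next I would bound the ``bias'' $|V^{\pi}(r,P)-V^{\pi}(r,\widetilde{P}^{k})|$. For $k\in\{1,2\}$ this is exactly Lemma~\ref{lemthird} applied to that stage's Crude Exploration, giving the $C\,S^{3}A^{2}H^{5}\iota/T^{(k)}$ bound for all $\pi\in\phi^{k}$. For $k\ge 3$ I would use that Lemma~\ref{lemfirst} supplies $\sup_{\pi\in\phi^{2}}\mathbb{P}_{\pi}[\mathcal{B}]\le O(S^{3}A^{2}H^{4}\iota/T^{(2)})$ from stage $2$, and that the version spaces are nested, $\phi^{k}\subseteq\phi^{2}$, so the same bad-event bound holds for $\phi^{k}$ with denominator $T^{(2)}$; feeding this into the proof of Lemma~\ref{lemthird} (which uses nothing about $\phi$ beyond this bad-event bound) yields the $C\,S^{3}A^{2}H^{5}\iota/T^{(2)}$ bound for all $\pi\in\phi^{k}$. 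Then I would bound the ``variance'' $|V^{\pi}(r,\widehat{P}^{k})-V^{\pi}(r,\widetilde{P}^{k})|$ by invoking Lemma~\ref{lemforth} for the stage-$k$ Fine Exploration (Algorithm~\ref{algo2}), which is run for $T^{(k)}$ episodes and outputs the empirical estimate $\widehat{P}^{k}$ of $\widetilde{P}^{k}$, giving the $C\sqrt{H^{5}S^{2}A\iota/T^{(k)}}$ bound. Combining the two via the triangle inequality produces the two claimed inequalities.

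The one point requiring genuine care — and what I expect to be the \textbf{main obstacle}, albeit a mild one — is that Lemma~\ref{lemforth} must still apply in stages $k\ge 3$ even though the exploration policies $\pi_{h,s,a}=\argmax_{\pi\in\phi^{k}}V^{\pi}(1_{h,s,a},P^{int,2})$ are chosen using the stale kernel $P^{int,2}$ and over the shrunken set $\phi^{k}$. Here I would argue: (i) the $\tfrac{1}{H}$-multiplicative accuracy of $P^{int,2}$ to $\widetilde{P}^{2}$ (Lemma~\ref{lem6}, conditional on the event $\mathcal{W}$ of Lemma~\ref{lem5} from stage $2$) is a statement about the kernels alone and is unaffected by later policy elimination; and (ii) replacing $\phi$ by the smaller $\phi^{k}$ in the $\argmax$ only decreases $\sup_{\pi\in\phi^{k}}V^{\pi}(1_{h,s,a},\widetilde{P}^{2})$ while the selected policy remains in $\phi^{k}$, so Equations~\eqref{equ2}--\eqref{equ3}, and hence the coverage bound~\eqref{equ6}, go through verbatim with $\phi$ replaced by $\phi^{k}$. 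With~\eqref{equ6} in hand, Lemma~\ref{lem13} applies unchanged and Lemma~\ref{lemforth} follows.

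Finally I would assemble the high-probability event by a union bound exactly as in Lemma~\ref{lemfifth}: the event $\mathcal{W}$ after the stage-$1$ and stage-$2$ Crude Explorations (two terms, each $S^{2}\delta/K$), the bad-event bound of Lemma~\ref{lemfirst} after those two Crude Explorations (two terms, each $\delta/(AK)$), and the Fine-Exploration guarantee of Lemma~\ref{lemforth} in each of the $K_{0}=O(\log\log K)$ stages (failure $T^{(k)}\delta/(2K)$ each). Using $\sum_{k}T^{(k)}\le K$, the total failure probability is at most $2(S^{2}\delta/K+\delta/(AK))+\delta/2\le\delta$ whenever $K\ge\widetilde{\Omega}(S^{2})$, which is the same side condition already imposed in Lemma~\ref{lemfifth}.
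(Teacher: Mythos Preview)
Your proposal is correct and mirrors the paper's approach exactly: the paper states that the proof and choice of $C$ are identical to Lemma~\ref{lemfifth}, and your decomposition into bias (controlled by stage-$2$ Crude Exploration for $k\ge 3$, using $\phi^{k}\subseteq\phi^{2}$) plus variance (controlled by stage-$k$ Fine Exploration via Lemma~\ref{lemforth}) is precisely the intended argument. Your attention to why Lemma~\ref{lemforth} still applies with the stale $P^{int,2}$ and the shrunken $\phi^{k}$, and your union-bound bookkeeping, fill in the details the paper leaves implicit.
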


Then using the identical proof as Lemma~\ref{lem17} and Lemma~\ref{lem18}, we have an upper bound on the sub-optimality of policies used in each stage.

\begin{lemma}\label{lemj4}
	With probability $1-\delta$, for the policies that have not been eliminated at stage $k$, i.e. $\pi\in\phi^{k+1}$, we have that
	$$V^{\pi^\star}(r,P)-V^{\pi}(r,P)\leq 4C(\sqrt{\frac{H^{5}S^{2}A\iota}{T^{(k)}}}+\frac{S^{3}A^{2}H^{5}\iota}{T^{(k)}}),\ k=1,2.$$
	$$V^{\pi^\star}(r,P)-V^{\pi}(r,P)\leq 4C(\sqrt{\frac{H^{5}S^{2}A\iota}{T^{(k)}}}+\frac{S^{3}A^{2}H^{5}\iota}{T^{(2)}}),\ k\geq 3.$$
\end{lemma}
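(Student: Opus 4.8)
The plan is to mirror verbatim the two-step argument used for APEVE in Lemma~\ref{lem17} and Lemma~\ref{lem18}, only replacing the stage accuracy $\epsilon_k$ by the stage-dependent bound supplied by the estimation lemma stated immediately above: write $\epsilon_k = C\big(\sqrt{H^{5}S^{2}A\iota/T^{(k)}} + S^{3}A^{2}H^{5}\iota/T^{(k)}\big)$ for $k\le 2$, and $\epsilon_k = C\big(\sqrt{H^{5}S^{2}A\iota/T^{(k)}} + S^{3}A^{2}H^{5}\iota/T^{(2)}\big)$ for $k\ge 3$. The key structural point is that the elimination threshold of APEVE+ (Algorithm~\ref{algo6}) at stage $k$ is set to exactly $2\epsilon_k$ with this very $\epsilon_k$ — in particular, for $k\ge 3$ the ``bias'' part of the threshold is frozen at $S^{3}A^{2}H^{5}\iota/T^{(2)}$ because Crude Exploration, which is what controls that term, is only run in the first two stages. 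We condition throughout on the $1-\delta$ event of the preceding lemma, on which $|V^{\pi}(r,\widehat{P}^{k}) - V^{\pi}(r,P)|\le \epsilon_k$ holds for all $k$ and all $\pi\in\phi^{k}$.

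First I would show, by induction on $k$, that $\pi^\star\in\phi^{k}$ for all $k$, exactly as in Lemma~\ref{lem17}. The base case holds since $\phi^{1}$ is the set of all deterministic policies. For the inductive step, assume $\pi^\star\in\phi^{k}$ and let $\widehat{\pi}^{k}=\mathrm{argmax}_{\widehat{\pi}\in\phi^{k}}V^{\widehat{\pi}}(r,\widehat{P}^{k})$; then the triangle inequality together with the estimation bound (applied to $\widehat{\pi}^{k}$ and to $\pi^\star$) and the optimality of $\pi^\star$ under $P$ give
\[
\sup_{\widehat{\pi}\in\phi^{k}}V^{\widehat{\pi}}(r,\widehat{P}^{k}) - V^{\pi^\star}(r,\widehat{P}^{k}) \le \epsilon_k + \big(V^{\widehat{\pi}^{k}}(r,P) - V^{\pi^\star}(r,P)\big) + \epsilon_k \le 2\epsilon_k,
\]
so $\pi^\star$ clears the stage-$k$ elimination threshold and hence $\pi^\star\in\phi^{k+1}$.

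Next, for any surviving policy $\pi\in\phi^{k+1}$, the elimination rule not having fired on $\pi$ gives $V^{\pi}(r,\widehat{P}^{k}) > \sup_{\widehat{\pi}\in\phi^{k}}V^{\widehat{\pi}}(r,\widehat{P}^{k}) - 2\epsilon_k \ge V^{\pi^\star}(r,\widehat{P}^{k}) - 2\epsilon_k$, using $\pi^\star\in\phi^{k}$ from the previous step. Decomposing
\[
V^{\pi^\star}(r,P) - V^{\pi}(r,P) \le |V^{\pi^\star}(r,P) - V^{\pi^\star}(r,\widehat{P}^{k})| + \big(V^{\pi^\star}(r,\widehat{P}^{k}) - V^{\pi}(r,\widehat{P}^{k})\big) + |V^{\pi}(r,\widehat{P}^{k}) - V^{\pi}(r,P)|
\]
and bounding the two outer terms by $\epsilon_k$ (estimation lemma) and the middle term by $2\epsilon_k$ yields $V^{\pi^\star}(r,P) - V^{\pi}(r,P) \le 4\epsilon_k$. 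Substituting the two forms of $\epsilon_k$ gives the claimed inequalities for $k=1,2$ and for $k\ge 3$ respectively.

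I do not expect a serious obstacle in this argument itself, since it is structurally identical to Lemmas~\ref{lem17}--\ref{lem18}; the only point that needs care is already absorbed into the preceding estimation lemma, namely that its bound remains valid for all stages $k\ge 3$ despite Crude Exploration not being re-run. Concretely, one must check that the absorbing MDP $\widetilde{P}^{(2)}$ and infrequent set $\mathcal{F}^{(2)}$ fixed at stage $2$ still satisfy the frequent-visitation condition (the analogue of Equation~\eqref{equ6}) when Fine Exploration is subsequently run against the shrinking version spaces $\phi^{k}\subseteq\phi^{2}$. Since $\phi^{k}\subseteq\phi^{2}$, each supremum $\sup_{\pi\in\phi^{k}}V^{\pi}(1_{h,s,a},\widetilde{P}^{(2)})$ only decreases, so the required ratio bound is inherited from stage $2$; this is exactly what lets the ``variance'' term keep shrinking like $1/\sqrt{T^{(k)}}$ while the ``bias'' term stays pinned at its stage-$2$ value $S^{3}A^{2}H^{5}\iota/T^{(2)}$, producing the asymmetric form of the lemma.
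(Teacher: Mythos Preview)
Your proposal is correct and follows exactly the approach the paper takes: the paper's own justification is simply ``using the identical proof as Lemma~\ref{lem17} and Lemma~\ref{lem18}'', conditioning on the preceding estimation lemma and substituting the stage-dependent accuracy $\epsilon_k$ (with the bias term frozen at $T^{(2)}$ for $k\ge 3$ to match the elimination threshold in Algorithm~\ref{algo6}). Your final paragraph on why the stage-$2$ absorbing MDP continues to serve for later Fine Explorations is additional helpful context, but it belongs to the proof of the preceding estimation lemma rather than to Lemma~\ref{lemj4} itself.
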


Now we are ready to bound the total regret of Algorithm~\ref{algo6}.
\begin{theorem}\label{lem:APEVE+}
If $K\geq\widetilde{\Omega}(S^{8}A^{6}H^{10})$, with probability $1-\delta$, the total regret of APEVE+ (Algorithm~\ref{algo6}) is at most $\widetilde{O}(\sqrt{H^{4}S^{2}AT})$.
\end{theorem}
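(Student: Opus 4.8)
The plan is to mimic the regret decomposition used in the proof of Lemma~\ref{lem19} for APEVE, the only new ingredient being the stage-dependent bias bound of Lemma~\ref{lemj4}. Throughout I condition on the $(1-\delta)$-probability event of Lemma~\ref{lemj4} (whose union bound over the $K_0=O(\log\log K)$ stages is valid under the hypothesis $K\geq\widetilde{\Omega}(S^8A^6H^{10})\geq\widetilde{\Omega}(S^2)$). By Lemma~\ref{lem50} there are $K_0=O(\log\log K)$ stages, and stage $k$ runs $2T^{(k)}$ episodes with $T^{(k)}=K^{1-1/2^{k}}$.

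First I bound stage $1$ trivially: its $2T^{(1)}=2K^{1/2}$ episodes each incur at most $H$ regret, contributing $O(HK^{1/2})$. For every stage $k\geq2$, all policies played lie in $\phi^{k}$, so Lemma~\ref{lemj4} applied at stage $k-1$ bounds their sub-optimality by $4C(\sqrt{H^{5}S^{2}A\iota/T^{(k-1)}}+S^{3}A^{2}H^{5}\iota/T^{(k-1)})$ when $k-1\in\{1,2\}$ and by $4C(\sqrt{H^{5}S^{2}A\iota/T^{(k-1)}}+S^{3}A^{2}H^{5}\iota/T^{(2)})$ when $k-1\geq3$. Multiplying each bound by the $2T^{(k)}$ episodes of stage $k$ and summing gives
\begin{align*}
\text{Regret}(T)\leq 2HT^{(1)}+\sum_{k=2}^{K_{0}}8C\,T^{(k)}\sqrt{\frac{H^{5}S^{2}A\iota}{T^{(k-1)}}}+\sum_{k=2}^{3}8C\,\frac{T^{(k)}}{T^{(k-1)}}S^{3}A^{2}H^{5}\iota+\sum_{k=4}^{K_{0}}8C\,\frac{T^{(k)}}{T^{(2)}}S^{3}A^{2}H^{5}\iota.
\end{align*}

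I then estimate the three sums using $T^{(k)}=K^{1-1/2^{k}}$. The identity $T^{(k)}/\sqrt{T^{(k-1)}}=K^{1/2}$ makes every term of the first sum equal to $8C\sqrt{H^{5}S^{2}AK\iota}$, so the first sum is $O(\sqrt{H^{5}S^{2}AK\iota}\cdot\log\log K)$. The second sum uses $T^{(k)}/T^{(k-1)}=K^{1/2^{k}}\leq K^{1/4}$ and so is $O(S^{3}A^{2}H^{5}K^{1/4}\iota)$. The third sum — the only place APEVE+ genuinely differs from APEVE, since it freezes the bias estimate at its stage-$2$ accuracy — uses $T^{(k)}/T^{(2)}=K^{1/4-1/2^{k}}\leq K^{1/4}$, so each of its $O(\log\log K)$ terms is at most $8CS^{3}A^{2}H^{5}K^{1/4}\iota$ and the sum is $\widetilde{O}(S^{3}A^{2}H^{5}K^{1/4}\iota)$. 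Collecting, $\text{Regret}(T)=O(HK^{1/2})+\widetilde{O}(\sqrt{H^{5}S^{2}AK})+\widetilde{O}(S^{3}A^{2}H^{5}K^{1/4})$. Under $K\geq\widetilde{\Omega}(S^{8}A^{6}H^{10})$ one has $S^{3}A^{2}H^{5}K^{1/4}\leq\sqrt{H^{5}S^{2}AK}$ (equivalently $K^{1/4}\geq S^{2}A^{3/2}H^{5/2}$) and trivially $HK^{1/2}\leq\sqrt{H^{5}S^{2}AK}$, so both the stage-$1$ term and the bias term are lower order, leaving $\text{Regret}(T)=\widetilde{O}(\sqrt{H^{5}S^{2}AK})=\widetilde{O}(\sqrt{H^{4}S^{2}AT})$ since $K=T/H$.

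I do not expect a serious obstacle: the argument is essentially a rerun of the proof of Lemma~\ref{lem19}. The one point requiring care — and the main (if mild) obstacle — is the third sum: because APEVE+ stops refining the ``bias'' estimate after stage $2$, the bias contribution at stage $k$ no longer decays like $K^{1/2^{k}}$ but instead grows up to $K^{1/4-1/2^{k}}<K^{1/4}$. One must verify that the sum of these $O(\log\log K)$ terms is still $\widetilde{O}(K^{1/4})$, hence a lower-order term once $K\geq\widetilde{\Omega}(S^{8}A^{6}H^{10})$; this is exactly the regime in which the stated bound holds.
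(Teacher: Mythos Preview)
Your proposal is correct and follows essentially the same route as the paper's own proof: condition on the event of Lemma~\ref{lemj4}, bound stage~1 trivially by $O(HK^{1/2})$, bound stages $k\ge 2$ by the stage-$(k-1)$ sub-optimality times the number of episodes, and then use $T^{(k)}/\sqrt{T^{(k-1)}}=K^{1/2}$, $T^{(k)}/T^{(k-1)}\le K^{1/4}$, and $T^{(k)}/T^{(2)}\le K^{1/4}$ to simplify. The only cosmetic differences are your grouping of the bias terms (you put $k=3$ with $k=2$, the paper puts it with $k\ge 3$; since both give $T^{(3)}/T^{(2)}$ this is immaterial) and your statement that every stage runs $2T^{(k)}$ episodes (in APEVE+ stages $k\ge 3$ actually run only $T^{(k)}$ episodes, but this only loosens constants and the paper's own display uses the same $8C$ factor).
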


\begin{proof}[Proof of Theorem~\ref{lem:APEVE+}]
	Because the case in Lemma~\ref{lemj4} holds with probability $1-\delta$, we will prove under the case in Lemma~\ref{lemj4} and show that the regret is $\widetilde{O}(\sqrt{H^{4}S^{2}AT})$. \\ The regret for the first stage (stage 1) is at most $2HT^{(1)}=O(HK^{\frac{1}{2}})$. \\
    Because of Lemma~\ref{lemj4}, the regret for the second stage (stage 2) is at most $2T^{(2)}\times4C(\sqrt{\frac{H^{5}S^{2}A\iota}{T^{(1)}}}+\frac{S^{3}A^{2}H^{5}}{T^{(1)}})$. \\
	For stage $k\geq3$, the policies we use (any $\pi\in\phi^{k}$) are at most $4C(\sqrt{\frac{H^{5}S^{2}A\iota}{T^{(k-1)}}}+\frac{S^{3}A^{2}H^{5}\iota}{T^{(2)}})$ sub-optimal, so the regret for the k-th stage is at most
	$8CT^{(k)}(\sqrt{\frac{H^{5}S^{2}A\iota}{T^{(k-1)}}}+\frac{S^{3}A^{2}H^{5}\iota}{T^{(2)}}).$ \\
	Adding up the regret for each stage , we have that the total regret is bounded by
	\begin{align*}
	\text{Regret}(T)&\leq 2HK^{\frac{1}{2}}+\sum_{k=2}^{K_{0}}8CT^{(k)}\sqrt{\frac{H^{5}S^{2}A\iota}{T^{(k-1)}}}+8CT^{(2)}\frac{S^{3}A^{2}H^{5}\iota}{T^{(1)}}+\sum_{k=3}^{K_{0}}8CT^{(k)}\frac{S^{3}A^{2}H^{5}\iota}{T^{(2)}} \\ &= O(HK^{\frac{1}{2}})+ O(\sqrt{H^{5}S^{2}AK\iota}\cdot \log\log K)+O(S^{3}A^{2}H^{5}K^{\frac{1}{4}}\iota)\\ &= O(\sqrt{H^{5}S^{2}AK\iota}\cdot \log\log K)+O(S^{3}A^{2}H^{5}K^{\frac{1}{4}}\iota) \\ &= \widetilde{O}(\sqrt{H^{4}S^{2}AT}),
	\end{align*}
where the last equality is because $K\geq\widetilde{\Omega}(S^{8}A^{6}H^{10})$.
\end{proof}

\begin{algorithm}
	\caption{Adaptive Policy Elimination by Value Estimation+ (APEVE+)}\label{algo6}
	\begin{algorithmic}[1]
		\STATE \textbf{Require}: Number of episodes for exploration $K$, $r$ is the known deterministic reward. Universal constant $C$. Failure probability $\delta$.
		\STATE \textbf{Initialize}: $T^{(k)}=K^{1-\frac{1}{2^{k}}}$, $k\leq K_{0}=O(\log\log K)$, $\phi^{1}:=\{ \text{the set of all the deterministic policies}\}$, $\iota=\log(2HAK/\delta)$. 
		\FOR{$k=1,2,\cdots,K_{0}$}  
		\STATE \maroon{$\diamond$ Number of episodes in $k$-th stage:}
		\IF{$T^{(1)}+T^{(2)}+\sum_{i=1}^{k}T^{(i)}\geq K$} 
		\STATE $T^{(k)}=K-T^{(1)}-T^{(2)}-\sum_{i=1}^{k-1}T^{(i)}$. (o.w. $T^{(k)}=K^{1-\frac{1}{2^{k}}}$)
		\ENDIF
		\ENDFOR
		\STATE \maroon{Adaptive policy elimination for the first two stages:}
		\FOR{$k=1,2$}
		\STATE \maroon{Update the infrequent set $\mathcal{F}^{k}$ and construct empirical estimate of the absorbing MDP:}
		\STATE $\mathcal{F}^{k}$,$P^{int,k}$ = Crude Exploration$(\phi^{k},T^{(k)}).$
		\STATE $\widehat{P}^{k}$ = Fine Exploration$(\mathcal{F}^{k},P^{int,k},T^{(k)},\phi^{k}).$
		\STATE $U^k=\emptyset$ 
		\FOR{$\pi\in\phi^{k}$}
		\IF{$V^{\pi}(r,\widehat{P}^{k})\leq sup_{\widehat{\pi}\in\phi^{k}}V^{\widehat{\pi}}(r,\widehat{P}^{k})-2C(\sqrt{\frac{H^{5}S^{2}A\iota}{T^{(k)}}}+\frac{S^{3}A^{2}H^{5}\iota}{T^{(k)}})$}
		\STATE Update $U^{k}\leftarrow U^k\cup \{\pi\}$.
		\ENDIF
		\ENDFOR
		\STATE $\phi^{k+1}\leftarrow\phi^{k}\backslash U^k$.
		\ENDFOR
		\STATE \maroon{Adaptive policy elimination for the remaining stages:}
		\FOR{$k=3,4,\cdots,K_{0}$}
		\STATE \maroon{Keep the infrequent set $\mathcal{F}^{2}$ and construct empirical estimate of the absorbing MDP with new data set:}
		\STATE $\widehat{P}^{k}$ = Fine Exploration$(\mathcal{F}^{2},P^{int,2},T^{(k)},\phi^{k}).$
		\STATE $U^k=\emptyset$ 
		\FOR{$\pi\in\phi^{k}$}
		\IF{$V^{\pi}(r,\widehat{P}^{k})\leq sup_{\widehat{\pi}\in\phi^{k}}V^{\widehat{\pi}}(r,\widehat{P}^{k})-2C(\sqrt{\frac{H^{5}S^{2}A\iota}{T^{(k)}}}+\frac{S^{3}A^{2}H^{5}\iota}{T^{(2)}})$}
		\STATE Update $U^{k}\leftarrow U^k\cup \{\pi\}$.
		\ENDIF
		\ENDFOR
		\STATE $\phi^{k+1}\leftarrow\phi^{k}\backslash U^k$.
		\ENDFOR
	\end{algorithmic}
\end{algorithm}

\end{onecolumn}

\end{document}